\def\eqref#1{equation~\ref{#1}}
\def\1{\bm{1}}
\DeclareMathAlphabet{\mathsfit}{\encodingdefault}{\sfdefault}{m}{sl}
\SetMathAlphabet{\mathsfit}{bold}{\encodingdefault}{\sfdefault}{bx}{n}
\DeclareMathOperator*{\argmin}{arg\,min}
\renewcommand*\backref[1]{\ifx#1\relax \else (Cited on p. #1) \fi}
\title{Efficient Gradient Flows in Sliced-Wasserstein Space}
\author{\name Clément Bonet \email clement.bonet@univ-ubs.fr \\
      \addr Université Bretagne Sud, CNRS, LMBA, Vannes, France \\ \\
    %   \And
      \name Nicolas Courty \email nicolas.courty@irisa.fr \\
      \addr Université Bretagne Sud, CNRS, IRISA, Vannes, France \\ \\
    %   \And
      \name François Septier \email francois.septier@univ-ubs.fr \\
      \addr Université Bretagne Sud, CNRS, LMBA, Vannes, France \\ \\
    %   \And
      \name Lucas Drumetz \email lucas.drumetz@imt-atlantique.fr \\
      \addr IMT Atlantique, CNRS, Lab-STICC, Brest, France
      }
\newcommand{\rotcell}[1]{%
  \rotatebox[origin=c]{90}{\begin{tabular}{c}#1\end{tabular}}%
}
\newtheorem{definition}{Definition}
\newtheorem{proposition}{Proposition}
\begin{document}

\maketitle

\begin{abstract}
    Minimizing functionals in the space of probability distributions can be done with Wasserstein gradient flows. To solve them numerically, a possible approach is to rely on the Jordan–Kinderlehrer–Otto (JKO) scheme which is analogous to the proximal scheme in Euclidean spaces. However, it requires solving a nested optimization problem at each iteration, and is known for its computational challenges, especially in high dimension. To alleviate it, very recent works propose to approximate the JKO scheme leveraging Brenier's theorem, and using gradients of Input Convex Neural Networks to parameterize the density (JKO-ICNN). However, this method comes with a high computational cost and stability issues. Instead, this work proposes to use gradient flows in the space of probability measures endowed with the sliced-Wasserstein (SW) distance. We argue that this method is more flexible than JKO-ICNN, since SW enjoys a closed-form differentiable approximation. Thus, the density at each step can be parameterized by any generative model which alleviates the computational burden and makes it tractable in higher dimensions. % Interestingly, we also show empirically that these gradient flows are strongly related to the usual Wasserstein gradient flows, and that they can be used to minimize efficiently diverse machine learning functionals. 
\end{abstract}

\section{Introduction}

Minimizing functionals with respect to probability measures is a ubiquitous problem in machine learning. Important examples are generative models such as GANs \citep{goodfellow2014generative, arjovsky2017wasserstein, lin2021wasserstein}, VAEs \citep{kingma2013auto} or normalizing flows \citep{papamakarios2019normalizing}.

To that aim, one can rely on Wasserstein gradient flows (WGF) \citep{ambrosio2008gradient} which are curves decreasing the functional as fast as possible~\citep{santambrogio2017euclidean}. For particular functionals, these curves are known to be characterized by the solution of some partial differential equation (PDE) \citep{jordan1998variational}. Hence, to solve Wasserstein gradient flows numerically, we can solve the related PDE when it is available. However, solving a PDE can be a difficult and computationally costly task, especially in high dimension \citep{han2018solving}. Fortunately, several alternatives exist in the literature. For example, one can approximate instead a counterpart stochastic differential equation (SDE) related to the PDE followed by the gradient flow. For the Kullback-Leibler divergence, it comes back to the so called unadjusted Langevin algorithm (ULA) \citep{roberts1996exponential, wibisono2018sampling}, but it has also been proposed for other functionals such as the Sliced-Wasserstein distance with an entropic regularization \citep{liutkus2019sliced}.

Another way to solve Wasserstein gradient flows numerically is to approximate the curve in discrete time. By using the well-known forward Euler scheme, particle schemes have been derived for diverse functionals such as the Kullback-Leibler divergence \citep{feng2021relative, wang2021projected, wang2022optimal}, the maximum mean discrepancy \citep{arbel2019maximum}, the kernel Stein discrepancy \citep{korbaKSD2021} or KALE \citep{glaser2021kale}. \citet{salim2020wasserstein} propose instead a forward-backward discretization scheme analogously to the proximal gradient algorithm \citep{bauschke2011convex}. Yet, these methods only provide samples approximately following the gradient flow, but without any information about the underlying density.

Another time discretization possible is the so-called JKO scheme introduced in \citep{jordan1998variational}, which is analogous in probability space to the well-known proximal operator \citep{parikh2014proximal} in Hilbertian space and which corresponds to the backward Euler scheme. However, as a nested minimization problem, it is a difficult problem to handle numerically. Some works use a discretization in space (\emph{e.g.} a grid) and the entropic regularization of the Wasserstein distance~\citep{peyre2015entropic, carlier2017convergence}, which benefits from specific resolution strategies. However, those approaches do not scale to high dimensions, as the discretization of the space scales exponentially with the dimension. Very recently, it was proposed in several concomitant works \citep{alvarezmelis2021optimizing, mokrov2021largescale, bunne2021jkonet} to take advantage of Brenier's theorem \citep{brenier1991polar} and model the optimal transport map (Monge map) as the gradient of a convex function with Input Convex Neural Networks (ICNN) \citep{amos2017input}. By solving the JKO scheme with this parametrization, these models, called JKO-ICNN, handle higher dimension problems well. Yet, a drawback of JKO-ICNN is the training time due to a number of evaluations of the gradient of each ICNN that is quadratic in the number of JKO iterations. It also requires to backpropagate through the gradient which is challenging in high dimension, even though stochastic methods were proposed in \citep{huang2020convex} to alleviate it. Moreover, it has also been observed in several works that ICNNs have a poor expressiveness \citep{rout2021generative, korotin2019wasserstein, korotin2021neural} and that we should rather directly estimate the gradient of convex functions by neural networks \citep{saremi2019approximating, richterpowell2021input}.
Other recent works proposed to use the JKO scheme by either exploiting variational formulations of functionals in order to avoid the evaluation of densities and allowing to use more general neural networks in \citep{fan2021variational}, or by learning directly the density in \citep{hwang2021deep}.

\looseness=-1 In parallel, it was proposed to endow the space of probability measures with other distances than Wasserstein. For example, \citet{gallouet2017jko} study a JKO scheme in the space endowed by the Kantorovitch-Fisher-Rao distance. However, this still requires a costly JKO step. Several particles schemes were derived as gradient flows into this space \citep{lu2019accelerating, zhang2021dpvi}. We can also cite Kalman-Wasserstein gradient flows \citep{garbuno2020interacting} or the Stein variational gradient descent \citep{liu2016stein, liu2017stein, duncan2019geometry} which can be seen as gradient flows in the space of probabilities endowed by a generalization of the Wasserstein distance. However, the JKO schemes of these different metrics are not easily tractable in practice.

\textbf{Contributions.}
In the following, we propose to study the JKO scheme in the space of probability distributions endowed with the sliced-Wasserstein (SW) distance \citep{rabin2011wasserstein}. This novel and simple modification of the original problem comes with several benefits, mostly linked to the fact that this distance is easily differentiable and computationally more tractable than the Wasserstein distance. We first derive some properties of this new class of flows and discuss links with Wasserstein gradient flows. Notably, we observe empirically for both gradient flows the same dynamic, up to a time dilation of parameter the dimension of the space. %\remove{Then, we look empirically for links with Wasserstein gradient flows through the study of the flow evolution.} %\remove{Notably, we show that it behaves similarly to JKO, up to a multiplicative constant of the timestep, when solving the Fokker-Planck equation (on Gaussians).} 
Then, we show that it is possible to minimize functionals and learn the stationary distributions in high dimensions, on toy datasets as well as real image datasets, using \emph{e.g.} neural networks. In particular, we propose to use normalizing flows for functionals which involve the density, such as the negative entropy.
Finally, we exhibit several examples for which our strategy performs better than JKO-ICNN, either \emph{w.r.t.} to computation times and/or \emph{w.r.t.} the quality of the final solutions.
% \cb{Finally, we exhibit several examples for which our strategy performs on par in term of the quality of the final solution with JKO-ICNN methods while being more efficient computationally. Moreover, we also exhibit particular examples for which JKO-ICNN showcase stability issues, which are resolved using SW gradient flows with fully connected neural networks.}

% Finally, we exhibit several examples for which our strategy performs better than state-of-the-art methods, either \emph{w.r.t.} to computation times and/or \emph{w.r.t.} the quality of the final solutions.

\section{Background}
In this paper, we are interested in finding a numerical solution to gradient flows in probability spaces. Such problems generally arise when minimizing a functional $\mathcal{F}$ defined on $\mathcal{P}(\mathbb{R}^d)$, the set of probability measures on $\mathbb{R}^d$:
\begin{equation}
    \min_{\mu\in\mathcal{P}(\mathbb{R}^d)}\ \mathcal{F}(\mu),
\end{equation}
but they can also be defined implicitly through their dynamics, expressed as partial differential equations. JKO schemes are implicit optimization methods that operate on particular discretizations of these problems and consider the natural metric of $\mathcal{P}(\mathbb{R}^d)$ to be Wasserstein. Recalling our goal is to study similar schemes with an alternative, computationally friendly metric (SW), we start by defining formally the notion of gradient flows in Euclidean spaces, before switching to probability spaces. We finally give a rapid overview of existing numerical schemes.

%The goal of this paper is to minimize a functional with respect to probability measures \cb{while studying the dynamic obtained by solving this problem in the space of probability measures endowed with the sliced-Wasserstein distance, compared to the more common and well studied case when the space is endowed with the Wassersein distance}. Let $\mathcal{P}(\mathbb{R}^d)$ be the set of probability measures on $\mathbb{R}^d$ and $\mathcal{F}$ be such functional, then the problem can be formalized as
%\begin{equation}
%    \min_{\mu\in\mathcal{P}(\mathbb{R}^d)}\ \mathcal{F}(\mu).
%\end{equation}
%To solve this problem \cb{in the space endowed by the Wasserstein distance}, we can rely on Wasserstein gradient flows (WGF) that we introduce in this section.

\subsection{Gradient Flows in Euclidean Spaces}
Let $F:\mathbb{R}^d\to\mathbb{R}$ be a functional. A gradient flow of $F$ is a curve (\emph{i.e.} a continuous function from $\mathbb{R}_+$ to $\mathbb{R}^d$) which decreases $F$ as much as possible along it. If $F$ is differentiable, then a gradient flow $x:[0,T]\to\mathbb{R}^d$ solves the following Cauchy problem \citep{santambrogio2017euclidean}
\begin{equation}
    \begin{cases}
        \frac{\mathrm{d}x(t)}{\mathrm{d}t} = -\nabla F(x(t)), \\
        x(0) = x_0.
    \end{cases}
\end{equation}
Under conditions on $F$ (\emph{e.g.} $\nabla F$ Lipschitz continuous, $F$ convex or semi-convex), this problem admits a unique solution which can be approximated using numerical schemes for ordinary differential equations such as the explicit or the implicit Euler scheme. For the former, we recover the regular gradient descent, and for the latter, we recover the proximal point algorithm \citep{parikh2014proximal}: let $\tau>0$,
\begin{equation} \label{prox}
    x_{k+1}^\tau \in \argmin_{x}\  \frac{\|x-x_k^\tau\|_2^2}{2\tau}+F(x) = \mathrm{prox}_{\tau F}(x_k^\tau).
\end{equation}

This formulation does not use any gradient, and can therefore be used in any metric space by replacing $\|\cdot\|_2$ with the right distance.

\subsection{Gradient Flows in Probability Spaces}

To define gradient flows in the space of probability measures, we first need a metric. We restrict our analysis to probability measures with moments of order 2: $\mathcal{P}_2(\mathbb{R}^d)=\{\mu\in\mathcal{P}(\mathbb{R}^d),\ \int \|x\|^2 \mathrm{d}\mu(x)<+\infty\}$. Then, a possible distance on $\mathcal{P}_2(\mathbb{R}^d)$ is the Wasserstein distance \citep{villani2008optimal}, let $\mu,\nu\in\mathcal{P}_2(\mathbb{R}^d)$,
\begin{equation} \label{wasserstein}
    W_2^2(\mu,\nu) = \min_{\gamma\in\Pi(\mu,\nu)}\ \int \|x-y\|_2^2\ \mathrm{d}\gamma(x,y),
\end{equation}
where $\Pi(\mu,\nu)$ is the set of probability measures on $\mathbb{R}^d\times\mathbb{R}^d$ with marginals $\mu$ and $\nu$.

Now, by endowing the space of measures with $W_2$, we can define the Wasserstein gradient flow of a functional $\mathcal{F}:\mathcal{P}_2(\mathbb{R}^d)\to\mathbb{R}$ by plugging $W_2$ in (\ref{prox}) which becomes
\begin{equation} \label{jko}
    \mu_{k+1}^\tau \in \argmin_{\mu\in \mathcal{P}_2(\mathbb{R}^d)}\ \frac{W_2^2(\mu,\mu_k^\tau)}{2\tau}+\mathcal{F}(\mu).
\end{equation}
The gradient flow is then the limit of the sequence of minimizers when $\tau\to0$. This scheme was introduced in the seminal work of \citet{jordan1998variational} and is therefore referred to as the JKO scheme. In this work, the authors showed that gradient flows are linked to PDEs, and in particular with the Fokker-Planck equation when the functional $\mathcal{F}$ is of the form
\begin{equation} \label{FokkerPlanckFunctional}
    \mathcal{F}(\mu)=\int V\mathrm{d}\mu + \mathcal{H}(\mu)
\end{equation}
where $V$ is some potential function and $\mathcal{H}$ is the negative entropy: let $\sigma$ denote the Lebesgue measure,
\begin{equation}
    \mathcal{H}(\mu) = \begin{cases}
        \int \rho(x)\log(\rho(x))\ \mathrm{d}x \ \text{ if $\mathrm{d}\mu=\rho\mathrm{d}\sigma$} \\
        +\infty \ \text{otherwise}.
    \end{cases}
\end{equation}
Then, the limit of $(\mu^\tau)_\tau$ when $\tau\to 0$ is a curve $t\mapsto\mu_t$ such that for all $t>0$, $\mu_t$ has a density $\rho_t$. The curve $\rho$ satisfies (weakly) the Fokker-Planck PDE
\begin{equation} \label{FokkerPlanck}
    \frac{\partial \rho}{\partial t} = \mathrm{div}(\rho\nabla V)+\Delta \rho.
\end{equation}

For more details on gradient flows in metric space and in Wasserstein space, we refer to \citep{ambrosio2008gradient}. Note that many other functional can be plugged in \eqref{jko} defining different PDEs. We introduce here the Fokker-Planck PDE as a classical example, since the functional is connected to the Kullback-Leibler (KL) divergence and its Wasserstein gradient flow is connected to many classical  algorithms such as the unadjusted Langevin algorithm (ULA) \citep{wibisono2018sampling}. But we will also use other functionals in Section \ref{section:xps} such as the interaction functional defined for regular enough $W$ as
\begin{equation} \label{eq:interaction_functional}
    \mathcal{W}(\mu) = \frac12 \iint W(x-y)\ \mathrm{d}\mu(x)\mathrm{d}\mu(y),
\end{equation}
which admits as Wasserstein gradient flow the aggregation equation
 \citep[Chapter 8]{santambrogio2015optimal}
\begin{equation}
    \frac{\partial\rho}{\partial t} = \mathrm{div}\big(\rho(\nabla W*\rho)\big)
\end{equation}
where $*$ denotes the convolution operation.

\subsection{Numerical Methods to solve the JKO Scheme}

Solving (\ref{jko}) is not an easy problem as it requires to solve an optimal transport problem at each step and hence is composed of two nested minimization problems. % Moreover, it is composed of two nested minimization problems.

There are several strategies which were used to tackle this problem. For example, \citet{laborde2016interacting} rewrites (\ref{jko}) as a convex minimization problem using the Benamou-Brenier dynamic formulation of the Wasserstein distance \citep{benamou2000computational}. \citet{peyre2015entropic} approximates the JKO scheme by using the entropic regularization and rewriting the problem with respect to the Kullback-Leibler proximal operator. The problem becomes easier to solve using Dykstra's algorithm \citep{dykstra1985iterative}. This scheme was proved to converge to the right PDE in \citep{carlier2017convergence}. It was proposed to use the dual formulation in other works such as \citep{caluya2019proximal} or \citep{frogner2020approximate}. \citet{cances2020variational} proposed to linearize the Wasserstein distance using the weighted Sobolev approximation \citep{villani2003topics, peyre2018comparison}.

More recently, following \citep{benamou2016discretization}, \citet{alvarezmelis2021optimizing} and \citet{mokrov2021largescale} proposed to exploit Brenier's theorem by rewriting the JKO scheme as
\begin{equation}
    u_{k+1}^\tau\in\argmin_{u\ \mathrm{convex}}\ \frac{1}{2\tau}\int\|\nabla u(x)-x\|_2^2\ \mathrm{d}\mu_k^\tau(x) + \mathcal{F}\big((\nabla u)_\#\mu_k^\tau\big)         
\end{equation}
and model the probability measures as $\mu_{k+1}^\tau = (\nabla u_{k+1}^\tau)_\#\mu_k^\tau$ where $\#$ is the push forward operator, defined as $(\nabla u)_\#\mu_k^\tau(A) = \mu_k^\tau((\nabla u)^{-1}(A))$ for all $A\in\mathcal{B}(\mathbb{R}^d)$.
Then, to solve it numerically, they model convex functions using ICNNs \citep{amos2017input}:
\begin{equation}
    \theta_{k+1}^\tau\in\argmin_{\theta\in\{\theta,u_\theta\in\mathrm{ICNN}\}}\ \frac{1}{2\tau}\int\|\nabla_x u_\theta(x)-x\|_2^2\ \mathrm{d}\mu_k^\tau(x) + \mathcal{F}\big((\nabla_x u_\theta)_\#\mu_k^\tau\big).
\end{equation}
In the remainder, this method is denoted as JKO-ICNN. \citet{bunne2021jkonet} also proposed to use ICNNs into the JKO scheme, but with a different objective of learning the functional from samples trajectories along the timesteps. Lastly, \citet{fan2021variational} proposed to learn directly the Monge map $T$ by solving at each step the following problem:
\begin{equation}
    T_{k+1}^\tau \in \argmin_T\ \frac{1}{2\tau} \int \|T(x)-x\|_2^2\ \mathrm{d}\mu_k^\tau(x) + \mathcal{F}(T_\#\mu_k^\tau)
\end{equation}
\looseness=-1 and using variational formulations of functionals involving the density. This formulation requires only to use samples from the measure. However, it needs to be derived for each functional, and involves minimax optimization problems which are notoriously hard to train \citep{arjovsky2017towards,bond2021deep}.

\section{Sliced-Wasserstein Gradient Flows} \label{SWGFs}

As seen in the previous section, solving numerically (\ref{jko}) is a challenging problem. To tackle high-dimensional settings, one could benefit from neural networks, such as generative models, that are known to model high-dimensional distributions accurately. The problem being not directly differentiable, previous works relied on Brenier's theorem and modeled convex functions through ICNNs, which results in JKO-ICNN. However, this method is very costly to train. For a JKO scheme of $k$ steps, %For a diffusion of length $k$, 
it requires $O(k^2)$ evaluations of gradients \citep{mokrov2021largescale} which can be a huge price to pay when the dynamic %diffusion 
is very long. Moreover, it requires to backpropagate through gradients, and to compute the determinant of the Jacobian when we need to evaluate the likelihood (assuming the ICNN is strictly convex). The method of \citet{fan2021variational} also requires $O(k^2)$ evaluations of neural networks, as well as to solve a minimax optimization problem at each step.

Here, we propose instead to use the space of probability measures endowed with the sliced-Wasserstein (SW) distance by modifying adequately the JKO scheme. Surprisingly enough, this class of gradient flows, which are very easy to compute, has never been considered numerically in the literature. Close to our work, {\bf Wasserstein} gradient flows using SW as a functional (called Sliced-Wasserstein flows) have been considered in \citep{liutkus2019sliced}. Our method differs significantly from this work, since we propose to compute {\bf sliced-Wasserstein} gradient flows of different functionals.

\looseness=-1 We first introduce SW along with motivations to use this distance. We then study some properties of the scheme and discuss links with Wasserstein gradient flows. Since this metric is known in closed-form, the JKO scheme is more tractable numerically and can be approximated in several ways that we describe in Section \ref{swjko_practice}.

\subsection{Motivations}

\paragraph{Sliced-Wasserstein Distance.}

The Wasserstein distance (\ref{wasserstein}) is generally intractable. However, in one dimension, for $\mu,\nu\in\mathcal{P}_2(\mathbb{R})$, we have the following closed-form \citep[Remark 2.30]{peyre2019computational}
\begin{equation}
    W_2^2(\mu,\nu) = \int_0^1 \big(F_\mu^{-1}(u)-F_\nu^{-1}(u)\big)^2\ \mathrm{d}u
\end{equation}
where $F_\mu^{-1}$ (resp. $F_\nu^{-1}$) is the quantile function of $\mu$ (resp. $\nu$). It motivated the construction of the sliced-Wasserstein distance \citep{rabin2011wasserstein, bonnotte2013unidimensional}, as for $\mu,\nu\in\mathcal{P}_2(\mathbb{R}^d)$,
\begin{equation} \label{sw}
    SW_2^2(\mu,\nu)=\int_{S^{d-1}} W_2^2(P^\theta_\#\mu,P^\theta_\#\nu)\ \mathrm{d}\lambda(\theta)
\end{equation}
where $P^\theta(x)=\langle x,\theta\rangle$ and $\lambda$ is the uniform distribution on $S^{d-1}=\{\theta\in\mathbb{R}^d,\ \|\theta\|_2=1\}$.

\paragraph{Computational Properties.}

Firstly, $SW_2$ is very easy to compute by a Monte-Carlo approximation (see Appendix \ref{approximation_sw}). It is also differentiable, and hence using \emph{e.g.} the Python Optimal Transport (POT) library \citep{flamary2021pot}, we can backpropagate \emph{w.r.t.} parameters or weights parametrizing the distributions (see Section \ref{swjko_practice}). Note that some libraries allow to directly backpropagate through Wasserstein. However, theoretically, we only have access to a subgradient in that case \citep[Proposition 1]{cuturi2014fast}, and the computational complexity is bigger ($O(n^3\log n)$ versus $O(n\log n)$ for SW). Moreover, contrary to $W_2$, its sample complexity does not depend on the dimension \citep{nadjahi2020statistical} which is important to overcome the curse of dimensionality. However, it is known to be hard to approximate in high-dimension \citep{deshpande2019max} since the error of the Monte-Carlo estimates is impacted by the number of projections in practice \citep{nadjahi2020statistical}. Nevertheless, there exist several variants which could also be used. Moreover, a deterministic approach using a concentration of measure phenomenon (and hence being more accurate in high dimension) was recently proposed by \citet{nadjahi2021fast} to approximate $SW_2$.

\paragraph{Link with Wasserstein.}

The sliced-Wasserstein distance has also many properties related to the Wasserstein distance. First, they actually induce the same topology \citep{nadjahi2019asymptotic, bayraktar2021strong} which might justify to use SW as a proxy of Wasserstein. Moreover, as showed in Chapter 5 of \citet{bonnotte2013unidimensional}, they can be related on compact sets by the following inequalities, let $R>0$, for all $\mu,\nu\in\mathcal{P}(B(0,R))$,
\begin{equation}
    SW_2^2(\mu,\nu) \le c_{d}^2 W_2^2(\mu,\nu)\le C_{d}^2 SW_2^{\frac{1}{d+1}}(\mu,\nu),
\end{equation}
with $c_d^2 = \frac{1}{d}$ and $C_d$ some constant.

Hence, from these properties, we can wonder whether their gradient flows are related or not, or even better, whether they are the same or not. This property was initially conjectured by Filippo Santambrogio\footnote{in private communications}. Some previous works started to gather some hints on this question. For example, \citet{candau_tilh} showed that, while $(\mathcal{P}_2(\mathbb{R}^d),SW_2)$ is not a geodesic space, the minimal length (in metric space, Definition 2.4 in \citep{santambrogio2017euclidean}) connecting two measures is $W_2$ up to a constant (which is actually $c_{d}$). We report in Appendix \ref{appendix_candauTilh} some results introduced by \citet{candau_tilh}.

\subsection{Definition and Properties of Sliced-Wasserstein Gradient Flows} \label{section:discussion_swgf}
Instead of solving the regular JKO scheme (\ref{jko}), we propose to introduce a SW-JKO scheme, let $\mu_0\in\mathcal{P}_2(\mathbb{R}^d)$,
\begin{equation} \label{swjko}
    \forall k\ge 0,\ \mu_{k+1}^\tau \in \argmin_{\mu\in\mathcal{P}_2(\mathbb{R}^d)}\ \frac{SW_2^2(\mu,\mu_k^\tau)}{2\tau}+\mathcal{F}(\mu)
\end{equation}
in which we replaced the Wasserstein distance by $SW_2$. 

\looseness=-1 To study gradient flows and show that they are well defined, we first have to check that discrete solutions of the problem (\ref{swjko}) indeed exist. Then, we have to check that we can pass to the limit $\tau\to 0$ and that the limit satisfies gradient flows properties. These limit curves will be called Sliced-Wasserstein gradient flows (SWGFs).

In the following, we restrain ourselves to measures on $\mathcal{P}_2(K)$ where $K\subset \mathbb{R}^d$ is a compact set. We report some properties of the scheme (\ref{swjko}) such as existence and uniqueness of the minimizer, and refer to Appendix \ref{Proofs} for the proofs.

\begin{proposition} \label{prop:minimizer}
    Let $\mathcal{F}:\mathcal{P}_2(K)\to \mathbb{R}$ be a lower semi continuous functional, then the scheme (\ref{swjko}) admits a minimizer. Moreover, it is unique if $\mu_k^\tau$ is absolutely continuous and $\mathcal{F}$ convex or if $\mathcal{F}$ is strictly convex.
\end{proposition}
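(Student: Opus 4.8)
The plan is to handle existence by the direct method of the calculus of variations, and uniqueness by establishing (strict) convexity of the objective $J(\mu):=\frac{1}{2\tau}SW_2^2(\mu,\mu_k^\tau)+\mathcal{F}(\mu)$ on the convex set $\mathcal{P}_2(K)$. For existence, since $K$ is compact we have $\mathcal{P}_2(K)=\mathcal{P}(K)$, which is weakly compact by Prokhorov's theorem, and this topology is metrized by $W_2$ and hence also by $SW_2$, since the two metrics induce the same topology on $\mathcal{P}(K)$ (as recalled above). On this nonempty compact metric space, $\mu\mapsto SW_2^2(\mu,\mu_k^\tau)$ is continuous — it is the square of the $SW_2$-distance to a fixed point — while $\mathcal{F}$ is lower semicontinuous by hypothesis; thus $J$ is a real-valued lower semicontinuous function on a compact set and therefore attains its infimum, which provides a minimizer $\mu_{k+1}^\tau\in\mathcal{P}_2(K)$.

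For uniqueness, the key observation is that $\mu\mapsto SW_2^2(\mu,\mu_k^\tau)$ is convex along linear interpolations $\mu_t=(1-t)\mu_0+t\mu_1$ of measures. Indeed, $\mu\mapsto P^\theta_\#\mu$ is affine, and $\nu\mapsto W_2^2(\nu,P^\theta_\#\mu_k^\tau)$ is convex: given optimal plans $\gamma_0,\gamma_1$, the plan $(1-t)\gamma_0+t\gamma_1$ is admissible between $P^\theta_\#\mu_t$ and $P^\theta_\#\mu_k^\tau$ with cost $(1-t)W_2^2(P^\theta_\#\mu_0,P^\theta_\#\mu_k^\tau)+tW_2^2(P^\theta_\#\mu_1,P^\theta_\#\mu_k^\tau)$, and averaging over $\theta\in S^{d-1}$ preserves convexity. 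If $\mathcal{F}$ is strictly convex, then $J$ is strictly convex; a strictly convex functional has at most one minimizer, so combined with existence this settles that case.

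When instead $\mu_k^\tau$ is absolutely continuous and $\mathcal{F}$ is merely convex, it suffices to prove that $\mu\mapsto SW_2^2(\mu,\mu_k^\tau)$ is itself strictly convex, which I would do in three steps. (i) A one-dimensional lemma: if $\rho\in\mathcal{P}_2(\mathbb{R})$ is atomless, then $\nu\mapsto W_2^2(\nu,\rho)$ is strictly convex on $\mathcal{P}_2(\mathbb{R})$; this follows from the equality case of the convexity inequality above, since equality forces $(1-t)\gamma_0+t\gamma_1$ to be the monotone optimal coupling of $\nu_t$ and $\rho$ — unique because $\rho$ is atomless — which in turn forces the monotone maps pushing $\rho$ onto $\nu_0$ and onto $\nu_1$ to agree $\rho$-a.e., i.e. $\nu_0=\nu_1$. (ii) Absolute continuity of $\mu_k^\tau$ on $\mathbb{R}^d$ implies absolute continuity of $P^\theta_\#\mu_k^\tau$ on $\mathbb{R}$ for every $\theta\in S^{d-1}$, because the preimage under $x\mapsto\langle x,\theta\rangle$ of a Lebesgue-null set of $\mathbb{R}$ is Lebesgue-null in $\mathbb{R}^d$ (Fubini). (iii) If $\mu_0\ne\mu_1$, then by injectivity of the Radon transform the closed set $\{\theta:P^\theta_\#\mu_0=P^\theta_\#\mu_1\}$ is a proper subset of $S^{d-1}$, so its complement is a nonempty open set of positive $\lambda$-measure on which the integrand $W_2^2(P^\theta_\#\cdot\,,P^\theta_\#\mu_k^\tau)$ is strictly convex by (i)–(ii), and convex elsewhere; integrating yields a strict inequality for $SW_2^2$. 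Hence $J$ is strictly convex and uniqueness follows as before.

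The main obstacle is step (i): making the equality-case analysis fully rigorous requires the structure theory of optimal couplings on the line — that any optimal coupling is concentrated on a monotone set, that this monotone coupling is unique when one marginal is atomless, and that the associated monotone maps are a.e. unique — so that equality truly collapses to $\nu_0=\nu_1$; alternatively one may invoke an existing statement of this strict-convexity type. Everything else is routine compactness and lower-semicontinuity bookkeeping, the only mild care being to confirm that the weak, $W_2$ and $SW_2$ topologies coincide on $\mathcal{P}(K)$.
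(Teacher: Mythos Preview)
Your proposal is correct and follows the same strategy as the paper: existence via Weierstrass on the weakly compact set $\mathcal{P}(K)$ using continuity of $\mu\mapsto SW_2^2(\mu,\mu_k^\tau)$ together with lower semicontinuity of $\mathcal{F}$, and uniqueness via strict convexity of $J$ along linear interpolations. The only difference is one of detail: the paper invokes the (strict) convexity of $\mu\mapsto SW_2^2(\mu,\nu)$ as a black box from \cite{candau_tilh} (Proposition 3.5 there), whereas you spell out its proof --- the one-dimensional strict-convexity lemma for $W_2^2(\cdot,\rho)$ with $\rho$ atomless, preservation of absolute continuity under $P^\theta$, and the Cram\'er--Wold/Radon-type argument to rule out equality --- which is essentially what lies behind that cited reference.
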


This proposition shows that the problem is well defined for convex lower semi continuous functionals since we can find at least a minimizer at each step. The assumptions on $\mathcal{F}$ are fairly standard and will apply for diverse functionals such as for example (\ref{FokkerPlanckFunctional}) or (\ref{eq:interaction_functional}) for $V$ and $W$ regular enough.

\begin{proposition} \label{prop:nonincreasing}
    The functional $\mathcal{F}$ is non increasing along the sequence of minimizers $(\mu_k^\tau)_k$.
\end{proposition}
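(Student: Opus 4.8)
The plan is to use the defining optimality of each $\mu_{k+1}^\tau$ in the SW-JKO scheme \eqref{swjko}, testing it against the trivial competitor $\mu = \mu_k^\tau$. By Proposition~\ref{prop:minimizer} the sequence $(\mu_k^\tau)_k$ is well defined (a minimizer exists at each step), so it suffices to compare values of the objective $\mu\mapsto \frac{SW_2^2(\mu,\mu_k^\tau)}{2\tau}+\mathcal{F}(\mu)$ at $\mu_{k+1}^\tau$ and at $\mu_k^\tau$.

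Concretely, fix $k\ge 0$. Since $\mu_{k+1}^\tau$ is a minimizer of \eqref{swjko}, for every $\mu\in\mathcal{P}_2(K)$ we have
\begin{equation}
    \frac{SW_2^2(\mu_{k+1}^\tau,\mu_k^\tau)}{2\tau}+\mathcal{F}(\mu_{k+1}^\tau) \le \frac{SW_2^2(\mu,\mu_k^\tau)}{2\tau}+\mathcal{F}(\mu).
\end{equation}
Choosing $\mu=\mu_k^\tau$ and using that $SW_2(\mu_k^\tau,\mu_k^\tau)=0$ (as $SW_2$ is a distance, or directly from \eqref{sw} since $W_2^2(P^\theta_\#\mu_k^\tau,P^\theta_\#\mu_k^\tau)=0$ for all $\theta$) gives
\begin{equation}
    \frac{SW_2^2(\mu_{k+1}^\tau,\mu_k^\tau)}{2\tau}+\mathcal{F}(\mu_{k+1}^\tau) \le \mathcal{F}(\mu_k^\tau).
\end{equation}
Since $\tau>0$ and $SW_2^2(\mu_{k+1}^\tau,\mu_k^\tau)\ge 0$, dropping the nonnegative first term on the left yields $\mathcal{F}(\mu_{k+1}^\tau)\le\mathcal{F}(\mu_k^\tau)$. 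As $k$ was arbitrary, $(\mathcal{F}(\mu_k^\tau))_k$ is non-increasing.

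There is essentially no hard step here; the only point worth a word is well-posedness, namely that the comparison above is meaningful. This is guaranteed as soon as $\mathcal{F}(\mu_0^\tau)<+\infty$, which we may assume (otherwise the statement is vacuous or the scheme is not started from a relevant point); then the inequality chain propagates finiteness of $\mathcal{F}(\mu_k^\tau)$ to all $k$, so no $+\infty=+\infty$ ambiguity arises. One may additionally record the telescoped bound $\sum_{k\ge 0} SW_2^2(\mu_{k+1}^\tau,\mu_k^\tau)\le 2\tau\big(\mathcal{F}(\mu_0^\tau)-\inf\mathcal{F}\big)$, which is the standard a priori estimate used later to pass to the limit $\tau\to 0$, but it is not needed for the monotonicity claim itself.
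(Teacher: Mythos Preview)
Your proof is correct and follows exactly the same approach as the paper: compare the value of the SW-JKO objective at the minimizer $\mu_{k+1}^\tau$ against the competitor $\mu=\mu_k^\tau$, use $SW_2(\mu_k^\tau,\mu_k^\tau)=0$, and drop the nonnegative $SW_2^2$ term. The paper's proof is just the two displayed lines; your added remarks on well-posedness and the telescoped estimate are correct but extra.
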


As the ultimate goal is to find the minimizer of the functional, this proposition assures us that the solution will decrease $\mathcal{F}$ along it at each step. If $\mathcal{F}$ is bounded below, then the sequence $\big(\mathcal{F}(\mu_k^\tau)\big)_k$ will converge (since it is non increasing).

More generally, by defining the piecewise constant interpolation as $\mu^\tau(0)=\mu_0$ and for all $k\ge 0$, $t\in]k\tau,(k+1)\tau]$, $\mu^\tau(t)=\mu_{k+1}^\tau$, we can show that for all $t<s$, $SW_2(\mu^\tau(t),\mu^\tau(s))\le C\big(|t-s|^\frac12 + \tau^\frac12\big)$. Following \citet{santambrogio2017euclidean}, we can apply the Ascoli-Arzelà theorem \citep[Box 1.7]{santambrogio2015optimal} and extract a converging subsequence. However, the limit when $\tau\to 0$ is possibly not unique and has a priori no relation with $\mathcal{F}$. % By the Ascoli-Arzelà theorem \citep{santambrogio2015optimal}[Box 1.7], at least informally, we can extract a converging subsequence. However, the limit is possibly not unique and has a priori no relation with $\mathcal{F}$. %\cb{Several characterizations of gradient flows exist. (Maximal slopes (EDE), EVI, PDE...). We were not able to show that the limit follows one of these characterizations for the usual functionals. \emph{We provide in appendix a derivation to satisfy the EVI condition, but we did not find functionals working in that case.}}
Since $(\mathcal{P}_2(\mathbb{R}^d), SW_2)$ is not a geodesic space, but rather a ``pseudo-geodesic'' space whose true geodesics are $c_d W_2$ \citep{candau_tilh} (see Appendix \ref{appendix_geodesics_SW}), we cannot directly apply the theory introduced in \citep{ambrosio2008gradient}.
We leave for future works the study of the theoretical properties of the limit. Nevertheless, we conjecture that in the limit $t\to\infty$, SWGFs converge toward the same measure as for WGFs. We will study it empirically in Section \ref{section:xps} by showing that we are able to find as good minima as WGFs for different functionals.

\paragraph{Limit PDE.} \label{paragraph:limit_pde}

We discuss here some possible links between SWGFs and WGFs. \citet{candau_tilh} shows that the Euler-Lagrange equation of the functional (\ref{FokkerPlanckFunctional}) has a similar form (up to the first variation of the distance, see Appendix \ref{appendix_euler_lagrange}). Hence, he conjectures that there is a correlation between the two gradient flows. We identify here some cases for which we can relate the Sliced-Wasserstein gradient flows to the Wasserstein gradient flows.

% \begin{figure*}[t]
%     \centering
%     \hspace*{\fill}
%     \subfloat[$\mathcal{F}(\mu)=W_2^2(\mu,\frac{1}{n}\sum_{i=1}^n\delta_{x_i})$]{\label{a_traj_w}\includegraphics[width=0.45\columnwidth]{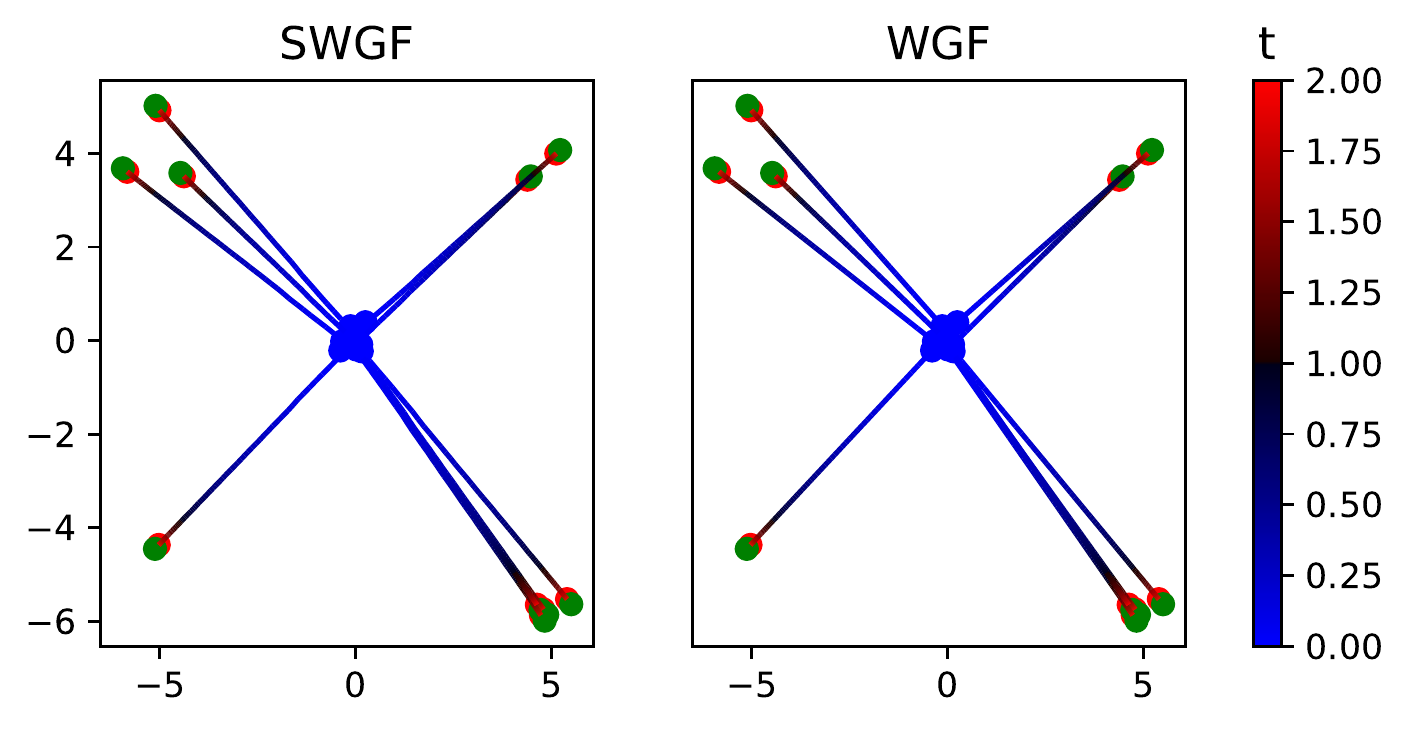}} \hfill
%     \subfloat[$\mathcal{F}(\mu)=SW_2^2(\mu,\frac{1}{n}\sum_{i=1}^n\delta_{x_i})$]{\label{b_traj_sw}\includegraphics[width=0.45\columnwidth]{}}
%     \hspace*{\fill}
%     \caption{Comparison of the trajectories for different functionals. Blue points represent the initial positions, red points the final positions, and green the target particles.}
%     \label{fig:comparison_trajectories}
% \end{figure*}

We first notice that for one dimensional supported measures, $W_2$ and $SW_2$ are the same up to a constant $\sqrt{d}$, \emph{i.e.} let $\mu,\nu\in\mathcal{P}_2(\mathbb{R}^d)$ be supported on the same line, then $SW_2^2(\mu,\nu)=W_2^2(\mu,\nu)/d$. Interestingly enough, this is the same constant as between geodesics. This property is actually still true in any dimension for Gaussians with a covariance matrix of the form $cI_d$ with $c>0$. Therefore, we argue that for these classes of measures, provided that the minimum at each step stays in the same class, we would have a dilation of factor $d$ between the WGF and the SWGF. For example, for the Fokker-Planck functional, the PDE followed by the SWGF would become $\frac{\partial \rho}{\partial t}= d\big(\mathrm{div}(\rho\nabla V) + \Delta \rho\big).$ And, by correcting the SW-JKO scheme  as
\begin{equation} \label{sw_jko_d}
        \mu_{k+1}^\tau \in \argmin_{\mu\in\mathcal{P}_2(\mathbb{R}^d)}\ \frac{d}{2\tau} SW_2^2(\mu,\mu_k^\tau)+\mathcal{F}(\mu),
\end{equation}
we would have the same dynamic. For more general measures, it is not the case anymore. But, by rewriting $SW_2^2$ and $W_2^2$ \emph{w.r.t.} the centered measures $\Bar{\mu}$ and $\Bar{\nu}$, as well as the means $m_\mu=\int x \ \mathrm{d}\mu(x)$ and $m_\nu=\int x\ \mathrm{d}\nu(x)$, we have:
% \begin{equation}
%     \begin{cases}
%         W_2^2(\mu,\nu) = \|m_\mu-m_\nu\|_2^2 + W_2^2(\Bar{\mu},\Bar{\nu}) \\
%         SW_2^2(\mu,\nu) = \frac{\|m_\mu-m_\nu\|_2^2}{d} + SW_2^2(\Bar{\mu}, \Bar{\nu}).
%     \end{cases}
% \end{equation}
\begin{equation}
    W_2^2(\mu,\nu) = \|m_\mu-m_\nu\|_2^2 + W_2^2(\Bar{\mu},\Bar{\nu}),\ \quad SW_2^2(\mu,\nu) = \frac{\|m_\mu-m_\nu\|_2^2}{d} + SW_2^2(\Bar{\mu}, \Bar{\nu}).
\end{equation}
Hence, for measures characterized by their mean and variance (\emph{e.g.} Gaussians), there will be a constant $d$ between the optimal mean of the SWGF and of the WGF. However, such a direct relation is not available between variances, even on simple cases like Gaussians. We report in Appendix \ref{appendix:sw} the details of the calculations. 

We draw on Figure \ref{fig:comparison_trajectories} trajectories of SWGFs and WGFs with $\mathcal{F}(\mu)=W_2^2(\mu,\frac{1}{n}\sum_{i=1}^n \delta_{x_i})$ and $\mathcal{F}$ as in (\ref{eq:interaction_functional}) with $W$ chosen as in Section \ref{section:agg_eq}. For the former, the target is a discrete measure with uniform weights and, using the same number of particles in the approximation $\hat{\mu}_n$, and performing gradient descent on the particles as explained in Section \ref{swjko_practice}, we expect the Wasserstein gradient flow to push each particle on the closest target particle. % since the Wasserstein distance is convex along generalized geodesic \citep{santambrogio2015optimal} and hence its WGF is well defined and converges to the right distribution(???). 
This is indeed what we observe. For the latter, the stationary solution is a Dirac ring, as further explained in Section \ref{section:agg_eq}. In both cases, by using a dilation parameter of $d$, we observe almost the same trajectories between SWGF and WGF, which is an additional support of the conjecture that the trajectories of the gradient flows in both spaces are alike. We also report in Appendix \ref{appendix_dynamic_swgf} evolutions along the approximated WGF and SWGF of different functionals. 

\begin{figure*}[t]
    \centering
    \hspace*{\fill}
    \subfloat[$\mathcal{F}(\mu)=W_2^2(\mu,\frac{1}{n}\sum_{i=1}^n\delta_{x_i})$]{\label{a_traj_w}\includegraphics[width=0.45\columnwidth]{Figures/Trajectories/comparison_trajectory_w.pdf}} \hfill
    \subfloat[Interaction Functional (\ref{eq:interaction_functional})]{\label{b_traj_sw}\includegraphics[width=0.45\columnwidth]{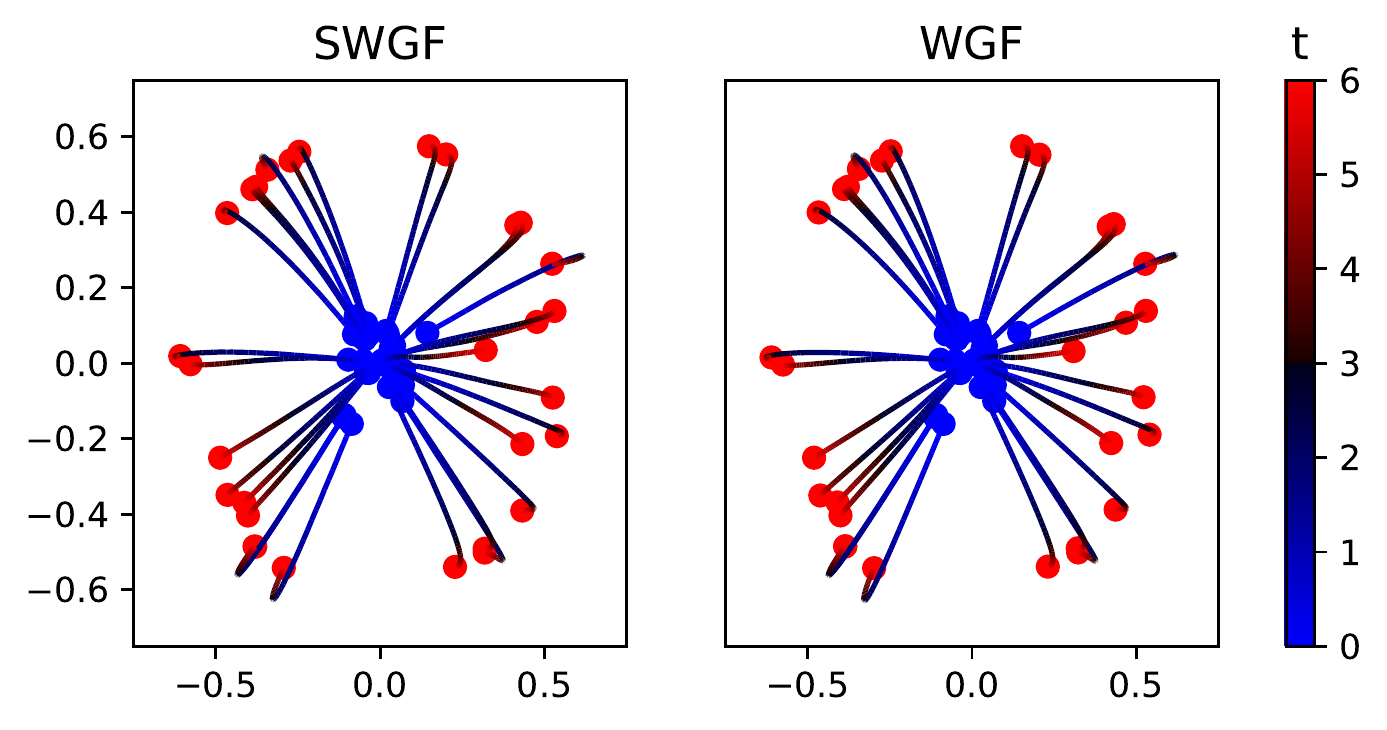}}
    \hspace*{\fill}
    \caption{Comparison of the trajectories of (dilated by $d=2$) Sliced-Wasserstein gradient flows (SWGF) and Wasserstein gradient flows (WGF) of different functionals. On the left, the stationary solution is a uniform discrete distributions. On the right, the stationary solution is a Dirac ring of radius 0.5. Blue points represent the initial positions, red points the final positions, and green points the target particles.} %\cb{The gradient color between blue and red represents the trajectory of the particles over time.}}
    \label{fig:comparison_trajectories}
    \vspace{-10pt}
\end{figure*}

% \remove{In particular, for the interaction functional studied in Section \ref{section:agg_eq}, we do not observe the dilation of factor $d$, which tends to suggest that the relation between the limit PDEs, if any, is more subtle and complex.}

% \pagebreak

\subsection{Solving the SW-JKO Scheme in Practice} \label{swjko_practice}

\begin{algorithm}[tb]
  \caption{SW-JKO with Generative Models}
  \label{alg:swjko_generativ_model}
    \begin{algorithmic}
      \STATE {\bfseries Input:} $\mu_0$ the initial distribution, $K$ the number of SW-JKO steps, $\tau$ the step size, $\mathcal{F}$ the functional, $N_e$ the number of epochs to solve each SW-JKO step, $N$ the batch size
      \FOR{$k=1$ {\bfseries to} $K$}
      \STATE Initialize a neural network $g_\theta^{k+1}$ \emph{e.g.} with $g_\theta^k$
      \FOR{$i=1$ {\bfseries to} $N_e$}
      \STATE Sample $z_j^{(k)},z_j^{(k+1)}\sim p_Z$ i.i.d
      \STATE $x_j^{(k)}=g_\theta^k(z_j^{(k)})$, $x_j^{(k+1)}=g_\theta^{k+1}(z_j^{(k+1)})$
      \STATE // Denote $\hat{\mu}_k^\tau = \frac{1}{N}\sum_{j=1}^{N} \delta_{x_j^{(k)}}$, $\hat{\mu}_{k+1}^\tau = \frac{1}{N}\sum_{j=1}^{N} \delta_{x_j^{(k+1)}}$
      \STATE $J(\hat{\mu}_{k+1}^\tau) = \frac{1}{2\tau} SW_2^2(\hat{\mu}_k^\tau, \hat{\mu}_{k+1}^\tau)+\mathcal{F}(\hat{\mu}_{k+1}^\tau)$
      \STATE Backpropagate through $J$ \emph{w.r.t.} $\theta$
      \STATE Perform a gradient step using \emph{e.g.} Adam
      \ENDFOR
      \ENDFOR
    \end{algorithmic}
\end{algorithm}

\looseness=-1 As a Monte-Carlo approximate of SW can be computed in closed-form, \eqref{swjko} is not a nested minimization problem anymore and is differentiable. We present here a few possible parameterizations of probability distributions which we can use in practice through SW-JKO to approximate the gradient flow. We further state, as an example, how to approximate the Fokker-Planck functional (\ref{FokkerPlanckFunctional}). Indeed, classical other functionals can be approximated using the same method since they often only require to approximate an integral \emph{w.r.t.} the measure of interests and to evaluate its density as for (\ref{FokkerPlanckFunctional}). Then, from these parameterizations, we can apply gradient-based optimization algorithms by using backpropagation over the loss at each step.

\paragraph{Discretized Grid.} \label{discrete_grid}

A first proposition is to model the distribution on a regular fixed grid, as it is done \emph{e.g.} in \citep{peyre2015entropic}. If we approximate the distribution by a discrete distribution with a fixed grid on which the different samples are located, then we only have to learn the weights. Let us denote $\mu_k^\tau = \sum_{i=1}^N \rho_i^{(k)}\delta_{x_i}$ where we use $N$ samples located at $(x_i)_{i=1}^N$, and $\sum_{i=1}^N\rho_i=1$. Let $\Sigma_N$ denote the simplex, then the optimization problem (\ref{swjko}) becomes:
\begin{equation}
    \min_{(\rho_i)_i\in\Sigma_N}\ \frac{SW_2^2(\sum_{i=1}^N\rho_i\delta_{x_i},\mu_k^\tau)}{2\tau}+\mathcal{F}\big(\sum_{i=1}^N\rho_i\delta_{x_i}\big).
\end{equation}
The entropy is only defined for absolutely continuous distributions. However, following \citep{carlier2017convergence,peyre2015entropic}, we can approximate the Lebesgue measure as: $L=l\sum_{i=1}^N\delta_{x_i}$ where $l$ represents a volume of each grid point (we assume that each grid point represents a volume element of uniform size). In that case, the Lebesgue density can be approximated by $(\frac{\rho_i}{l})_i$. Hence, for the Fokker-Planck (\ref{FokkerPlanckFunctional}) example, we approximate the potential and internal energies as
% \begin{equation}
%     \begin{aligned}
%         \mathcal{V}(\mu) &= \int V(x)\rho(x)\mathrm{d}x \approx \sum_{i=1}^N V(x_i)\rho_i, \\
%         \mathcal{H}(\mu) &= \int \log(\rho(x))\rho(x)\mathrm{d}x \approx \sum_{i=1}^N \log\big(\frac{\rho_i}{l}\big)\rho_i.
%     \end{aligned}
% \end{equation}
\begin{equation}
    \mathcal{V}(\mu) = \int V(x)\rho(x)\mathrm{d}x \approx \sum_{i=1}^N V(x_i)\rho_i, \quad \mathcal{H}(\mu) = \int \log\big(\rho(x)\big)\rho(x)\mathrm{d}x \approx \sum_{i=1}^N \log\big(\frac{\rho_i}{l}\big)\rho_i.
\end{equation}

To stay on the simplex, we use a projected gradient descent \citep{condat2016fast}. A drawback of discretizing the grid is that it becomes intractable in high dimension.

\paragraph{With Particles.} \label{particles_scheme}

We can also optimize over the position of a set of particles, assigning them uniform weights: $\mu_k^\tau=\frac{1}{N}\sum_{i=1}^N \delta_{x_i^{(k)}}$. The problem (\ref{swjko}) becomes:
\begin{equation}
    \min_{(x_i)_i}\ \frac{SW_2^2(\frac{1}{N}\sum_{i=1}^N \delta_{x_i}, \mu_k^\tau)}{2\tau}+\mathcal{F}\big(\frac{1}{N}\sum_{i=1}^N \delta_{x_i}\big).
\end{equation}
In that case however, we do not have access to the density and cannot directly approximate $\mathcal{H}$ (or more generally internal energies). A workaround is to use nonparametric estimators \citep{beirlant1997nonparametric}, which is however impractical in high dimension.

\paragraph{Generative Models.} \label{generative_models}

Another solution to model the distribution is to use generative models. Let us denote $g_\theta:\mathcal{Z}\to \mathcal{X}$ such a model, with $\mathcal{Z}$ a latent space, $\theta$ the parameters of the model that will be learned, and let $p_Z$ be a simple distribution (\emph{e.g.} Gaussian). Then, we will denote $\mu_{k+1}^\tau = (g_\theta^{k+1})_\# p_Z$. The SW-JKO scheme (\ref{swjko}) will become in this case
\begin{equation}
    \min_\theta\ \frac{SW_2^2\big((g_\theta^{k+1})_\# p_Z,\mu_k^\tau\big)}{2\tau} + \mathcal{F}\big((g_\theta^{k+1})_\# p_Z\big).
\end{equation}

\looseness=-1 To approximate the negative entropy, we have to be able to evaluate the density. A straightforward choice that we use in our experiments is to use invertible neural networks with a tractable density such as normalizing flows \citep{papamakarios2019normalizing, kobyzev2020normalizing}. Another solution could be to use the variational formulation as in \citep{fan2021variational} as we only need samples in that case, but at the cost of solving a minimax problem.

\looseness=-1 To perform the optimization, we can sample points of the different distributions at each step and use a Monte-Carlo approximation in order to approximate the integrals. Let $z_i\sim p_Z$ i.i.d, then $g_\theta(z_i)\sim (g_\theta)_\# p_Z = \mu $ and
% \begin{equation}
%     \begin{aligned}
%         \mathcal{V}(\mu) &\approx \frac{1}{N}\sum_{i=1}^N V(g_\theta(z_i)), \\
%         \mathcal{H}(\mu) &\approx \frac{1}{N}\sum_{i=1}^N \big(\log(p_Z(z_i))-\log|\det(J_{g_\theta}(z_i))|\big)
%     \end{aligned}
% \end{equation}
\begin{equation}
    \mathcal{V}(\mu) \approx \frac{1}{N}\sum_{i=1}^N V(g_\theta(z_i)), \quad \mathcal{H}(\mu) \approx \frac{1}{N}\sum_{i=1}^N \big(\log(p_Z(z_i))-\log|\det(J_{g_\theta}(z_i))|\big).
\end{equation}
using the change of variable formula in $\mathcal{H}$.

We sum up the procedure when modeling distributions with generative models in Algorithm \ref{alg:swjko_generativ_model}. We provide the algorithms for the discretized grid and for the particles in Appendix \ref{algorithms_swjko}.

\paragraph{Complexity.} Denoting by $d$ the dimension, $K$ the number of outer iterations, $N_e$ the number of inner optimization step, $N$ the batch size and $L$ the number of projections to approximate SW, SW-JKO has a complexity of $O(K  N_e L N\log N)$ versus $O(K N_e ((K+d)N + d^3))$ for JKO-ICNN \citep{mokrov2021largescale} and $O(K^2 N_e N N_m)$ for the variational formulation of \citet{fan2021variational} where $N_m$ denotes the number of maximization iteration. Hence, we see that the SW-JKO scheme is more appealing for problems which will require very long dynamics.% diffusions.

\paragraph{Direct Minimization.} A straightforward way to minimize a functional $\mu\mapsto \mathcal{F}(\mu)$ would be to parameterize the distributions as described in this section and then to perform a direct minimization of the functional by performing a gradient descent on the weights, \emph{i.e.} for instance with a generative model, solving $\min_\theta \ \mathcal{F}\big((g_\theta)_\# p_z\big)$. While it is a viable solution, we noted that this is not much discussed in related papers implementing Wasserstein gradient flows with neural networks via the JKO scheme. This problem is theoretically not well defined as a gradient flow on the space of probability measures. And hence, it has less theoretical guarantees of convergence than Wasserstein gradient flows. In our experiments, we noted that the direct minimization suffers from more numerical instabilities in high dimensions, while SW acts as a regularizer. For simpler problems however, the performances can be pretty similar.

% which is not much discussed in related papers implementing Wasserstein gradient flows with neural networks

\section{Experiments} \label{section:xps}

% \begin{wrapfigure}{R}{0.5\textwidth}
%     \centering
%     \includegraphics[width=0.5\columnwidth]{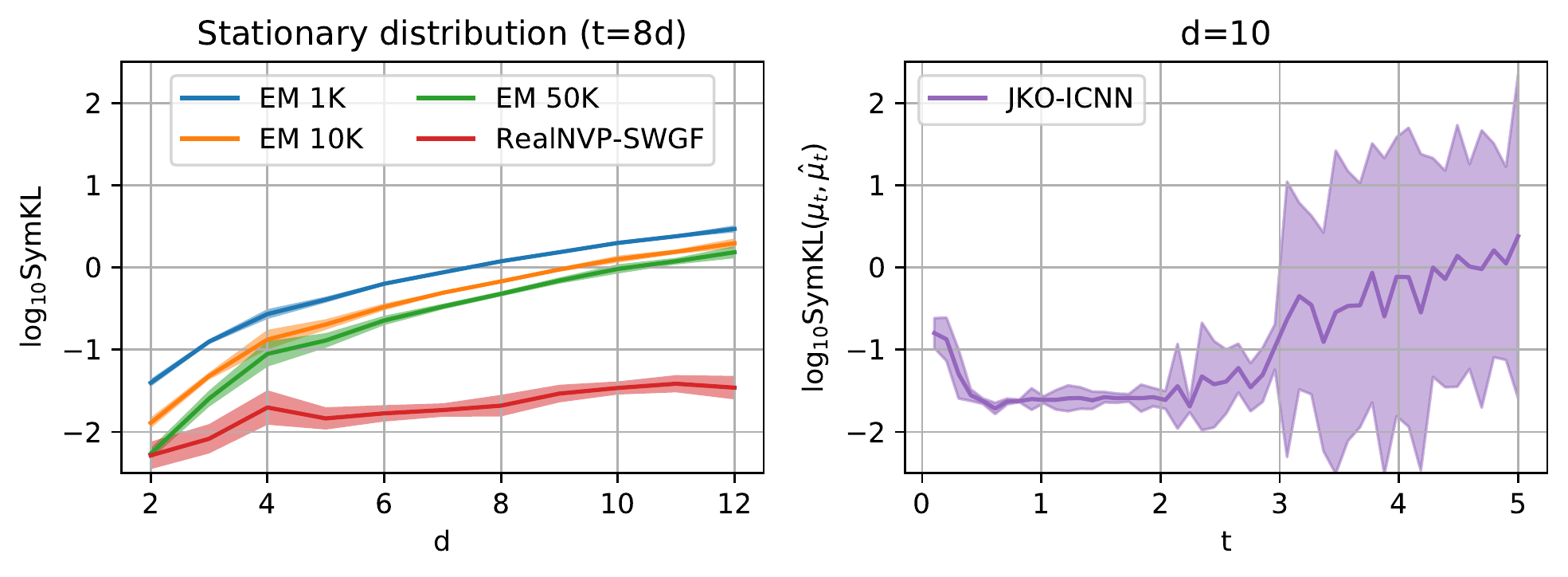}
%     \caption{On the left, SymKL divergence between solutions at time $t=8d$ (using $\tau=0.1$ and 80 steps in \eqref{swjko}) and stationary measure. On the right, SymKL between the true WGF $\mu_t$ and the approximation with JKO-ICNN $\hat{\mu_t}$, run through 3 Gaussians with $\tau=0.1$. We observe \cb{instabilities} at some point.}
%     \label{fig:unstabilities_jkoicnn}
% \end{wrapfigure}

\looseness=-1 In this section, we show that by approximating sliced-Wasserstein gradient flows using the SW-JKO scheme (\ref{swjko}), we are able to minimize functionals as well as Wasserstein gradient flows approximated by the JKO-ICNN scheme and with a better computational complexity. We first evaluate the ability to learn the stationary density for the Fokker-Planck equation (\ref{FokkerPlanck}) in the Gaussian case, and in the context of Bayesian Logistic Regression. Then, we evaluate it on an Aggregation equation. Finally, we use SW as a functional with image datasets as target, and compare the results with Sliced-Wasserstein flows introduced in \citep{liutkus2019sliced}.

For these experiments, we mainly use generative models. When it is required to evaluate the density (\emph{e.g.} to estimate $\mathcal{H}$), we use Real Non Volume Preserving (RealNVP) normalizing flows \citep{dinh2016density}. Our experiments were conducted using PyTorch \citep{pytorch}.

\subsection{Convergence to Stationary Distribution for the Fokker-Planck Equation} \label{section:xp_stationary_FKP}

\begin{figure}[t]
    \centering
    \includegraphics[width=0.75\columnwidth]{}
    \caption{On the left, SymKL divergence between solutions at time $t=8d$ (using $\tau=0.1$ and 80 steps in \eqref{swjko}) and stationary measure. On the right, SymKL between the true WGF $\mu_t$ and the approximation with JKO-ICNN $\hat{\mu_t}$, run through 3 Gaussians with $\tau=0.1$. We observe instabilities at some point.}
    \label{fig:unstabilities_jkoicnn}
    \vspace{-15pt}
\end{figure}

We first focus on the functional (\ref{FokkerPlanckFunctional}). Its Wasserstein gradient flow is solution of a PDE of the form of (\ref{FokkerPlanck}). In this case, it is well known that the solution converges as $t\to\infty$ towards a unique stationary measure $\mu^*\propto e^{-V}$ \citep{risken1996fokker}. Hence, we focus here on learning this target distribution. First, we will choose as target a Gaussian, and then in a second experiment, we will learn a posterior distribution in a bayesian logistic regression setting.

\paragraph{Gaussian Case.} Taking $V$ of the form $V(x) = \frac12(x-m)^T A (x-b)$ for all $x\in\mathbb{R}^d$, with $A$ a symmetric positive definite matrix and $m\in\mathbb{R}^d$, then the stationary distribution is $\mu^* = \mathcal{N}(m,A^{-1})$. We plot in Figure \ref{fig:unstabilities_jkoicnn} the symmetric Kullback-Leibler (SymKL) divergence over dimensions between approximated distributions and the true stationary distribution. We choose $\tau=0.1$ and performed 80 SW-JKO steps. We take the mean over 15 random gaussians for dimensions $d\in\{2,\dots,12\}$ for randomly generated positive semi-definite matrices $A$ using ``make\_spd\_matrix'' from scikit-learn \citep{scikit-learn}. Moreover, we use RealNVPs in SW-JKO. We compare the results with the Unadjusted Langevin Algorithm (ULA) \citep{roberts1996exponential}, called Euler-Maruyama (EM) since it is the EM approximation of the Langevin equation, which corresponds to the counterpart SDE of the PDE (\ref{FokkerPlanck}). We see that, in dimension higher than 2, the results of the SWGF with RealNVP are better than with this particle scheme obtained with a step size of $10^{-3}$ and with either $10^3$, $10^4$ or $5\cdot 10^4$ particles. We do not plot the results for JKO-ICNN as we observe many instabilities (right plot in Figure \ref{fig:unstabilities_jkoicnn}). Moreover, we notice a very long training time for JKO-ICNN. We add more details in Appendix \ref{appendix_cv_stationary}. We further note that SW acts here as a regularizer. Indeed, by training normalizing flows with the reverse KL (which is equal to \eqref{FokkerPlanckFunctional} up to a constant), we obtain similar results, but with much more instabilities in high dimensions.

\begin{wrapfigure}{R}{0.5\textwidth}
    \centering
    \includegraphics[width=0.5\textwidth]{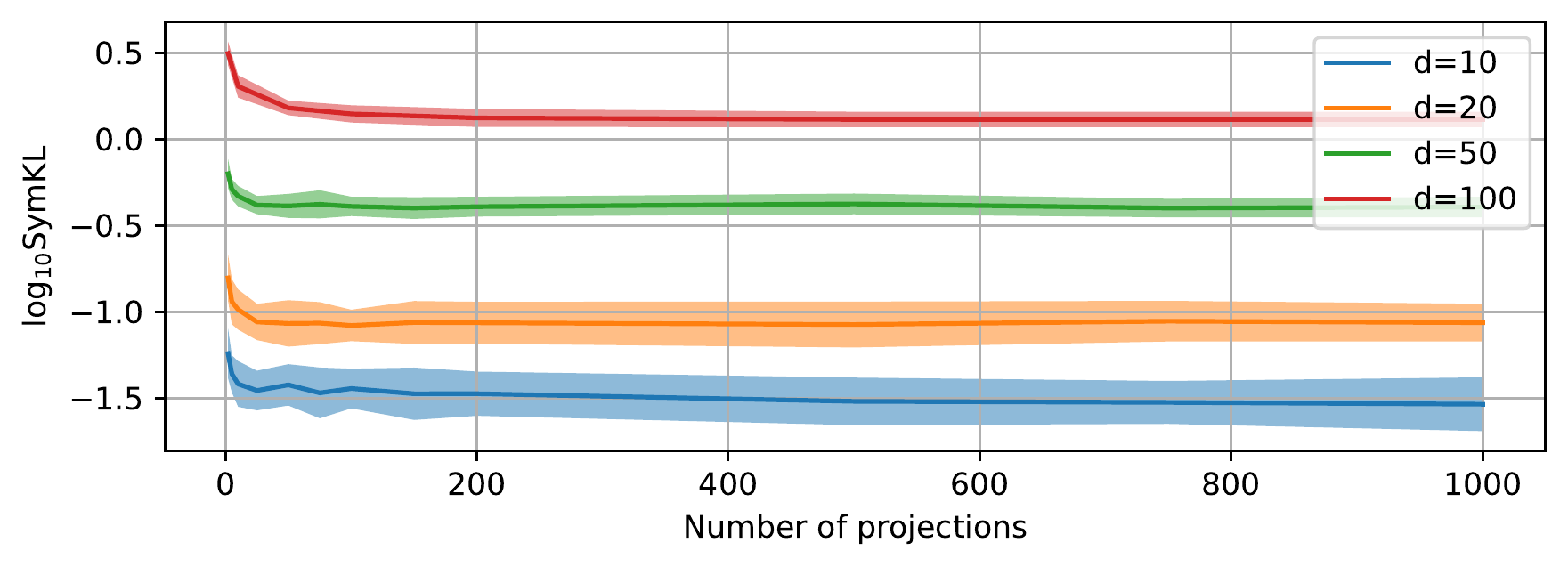}
    \caption{Impact of the number of projections for a fixed number of epochs.}
    \label{fig:impact_projs}
    \vspace{-10pt}
\end{wrapfigure}

% \newpage

\paragraph{Curse of Dimensionality.}
Even though the sliced-Wasserstein distance sample complexity does not suffer from the curse of dimensionality, it appears through the Monte-Carlo approximation \citep{nadjahi2020statistical}. Here, since SW plays a regularizer role, the objective is not necessarily to approximate it well but rather to minimize the given functional. Nevertheless, the number of projections can still have an impact on the minimization, and we report on Figure \ref{fig:impact_projs} the evolution of the found minima \emph{w.r.t.} the number of projections, averaged over 15 random Gaussians. We observe that we do not need much projections to have fairly good results, even in higher dimension. Indeed, with more than 200 projections, the performances stay pretty stables.

\begin{wraptable}{r}{0.5\linewidth}
    \centering
    \small
    % \vspace{-15pt}
    \caption{Accuracy and Training Time for Bayesian Logistic Regression over 5 runs}
    \resizebox{0.5\columnwidth}{!}{
        \begin{tabular}{ccccc}
         & \multicolumn{2}{c}{JKO-ICNN} & \multicolumn{2}{c}{SWGF+RealNVP}  \\ \toprule
         Dataset & Acc & t & Acc & t \\ \midrule
         covtype & 0.755 $\pm 5\cdot 10^{-4}$ & 33702s & 0.755 $\pm 3\cdot 10^{-3}$ & 103s\\
         german & 0.679 $\pm 5\cdot 10^{-3}$ & 2123s & \textbf{0.68 $\pm 5\cdot 10^{-3}$} & 82s \\
         diabetis & 0.777 $\pm 7\cdot 10^{-3}$ & 4913s & \textbf{0.778 $\pm 2\cdot 10^{-3}$} & 122s \\
         twonorm & 0.981 $\pm 2\cdot 10^{-4}$ & 6551s & 0.981 $\pm 6\cdot 10^{-4}$ & 301s \\
         ringnorm & 0.736 $\pm 10^{-3}$ & 1228s & \textbf{0.741 $\pm 6\cdot 10^{-4}$} & 82s \\
         banana & 0.55 $\pm 10^{-2}$ & 1229s & \textbf{0.559 $\pm 10^{-2}$} & 66s \\
         splice & 0.847 $\pm 2\cdot 10^{-3}$ & 2290s & \textbf{0.85 $\pm 2\cdot 10^{-3}$} & 113s \\
         waveform & \textbf{0.782 $\pm 8\cdot 10^{-4}$} & 856s & 0.776 $\pm 8\cdot 10^{-4}$ & 120s \\
         image & \textbf{0.822 $\pm 10^{-3}$} & 1947s & 0.821 $\pm 3\cdot 10^{-3}$ & 72s \\ \bottomrule
        \end{tabular}
    }
    \label{tab:blr}
    \vspace{-10pt}
\end{wraptable}

\paragraph{Bayesian Logistic Regression.} 
\looseness=-1 Following the experiment of \cite{mokrov2021largescale} in Section 4.3, we propose to tackle the Bayesian Logistic Regression problem using SWGFs. For this task, we want to sample from $p(x|D)$ where $D$ represent data and $x=(w,\log \alpha)$ with $w$ the regression weights on which we apply a Gaussian prior $p_0(w|\alpha)=\mathcal{N}(w;0,\alpha^{-1})$ and with $p_0(\alpha)=\Gamma(\alpha;1,0.01)$. In that case, we use $V(x)=-\log p(x|D)$ to learn $p(x|D)$. We refer to Appendix \ref{Appendix_BLR} for more details on the experiments, as well as hyperparameters.
We report in Table \ref{tab:blr} the accuracy results obtained on different datasets with SWGFs and compared with JKO-ICNN. We also report the training time and see that SWGFs allow to obtain results as good as with JKO-ICNN for most of the datasets but for shorter training times which underlines the better complexity of our scheme.

\begin{figure*}[t]
    \centering
    \hspace*{\fill}
    \subfloat[Steady state on the discretized grid]{\label{a}\includegraphics[width=0.2\columnwidth]{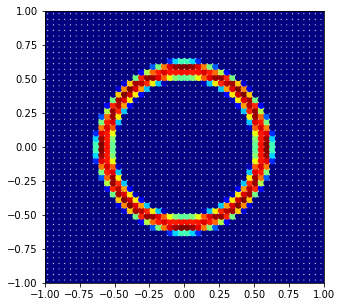}} \hfill
    \subfloat[Steady state for the fully connected neural network]{\label{b}\includegraphics[width=0.2\columnwidth]{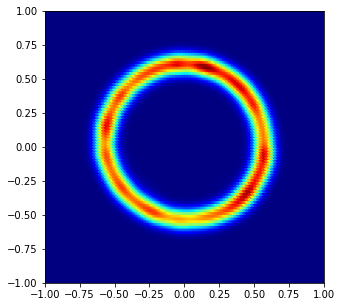}}   \hfill
    \subfloat[Steady state for particles]{\label{c}\includegraphics[width=0.2\columnwidth]{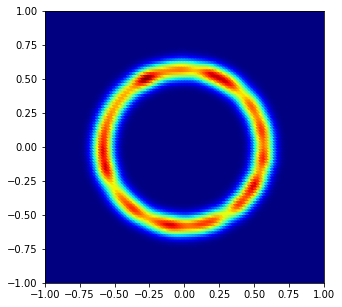}}
    \hfill
    \subfloat[Steady state for JKO-ICNN (with $\tau=0.1$)]{\label{d:JKO-ICNN_density}\includegraphics[width=0.2\columnwidth]{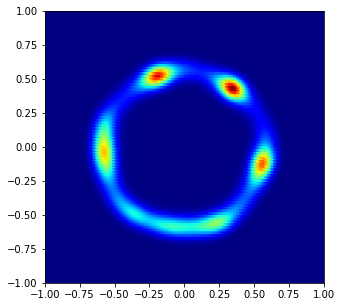}}
    \hspace*{\fill}
    \caption{Steady state of the aggregation equation for $a=4$, $b=2$. From left to right, we plot it for the discretized grid, for the FCNN, for particles and for JKO-ICNN. We observe that JKO-ICNN does not recover the ring correctly as the particles are not evenly distributed on it.}
    \label{fig:aggregation_equation1}
\end{figure*}

\subsection{Convergence to Stationary Distribution for an Aggregation Equation} \label{section:agg_eq}

We also show the possibility to find the stationary solution of different PDEs than Fokker-Planck. For example, using an interaction functional of the form
\begin{equation} %\label{eq:interaction_functional}
    \mathcal{W}(\mu) = \frac12 \iint W(x-y)\ \mathrm{d}\mu(x)\mathrm{d}\mu(y).
\end{equation}
% In that case, the Wasserstein gradient flow is the aggregation equation \citep[Chapter 8]{santambrogio2015optimal}
% \begin{equation}
%     \frac{\partial\rho}{\partial t} = \mathrm{div}(\rho(\nabla W*\rho))
% \end{equation}
% where $*$ denotes the convolution operation. 
We notice here that we do not need to evaluate the density. Therefore, we can apply any neural network. For example, in the following, we will use a simple fully connected neural network (FCNN) and compare the results obtained with JKO-ICNN. We also show the results when learning directly over the particles and when learning weights over a regular grid.

\citet{carrillo2021primal} use a repulsive-attractive interaction potential $W(x) = \frac{\|x\|_2^4}{4}-\frac{\|x\|^2_2}{2}$. In this case, they showed empirically that the solution is a Dirac ring with radius 0.5 and centered at the origin when starting from $\mu_0=\mathcal{N}(0,0.25^2 I_2)$. With $\tau=0.05$, we show on Figure \ref{fig:aggregation_equation1} that we recover this result with SWGFs for different parametrizations of the probabilities. More precisely, we first use a discretized grid of $50\times 50$ samples of $[-1,1]^2$. Then, we show the results when directly learning the particles and when using a FCNN. We also compare with the results obtained with JKO-ICNN. The densities reported for the last three methods are obtained through a kernel density estimator (KDE) with a bandwidth manually chosen since we either do not have access to the density, or we observed for JKO-ICNN that the likelihood exploded (see Appendix \ref{Aggregation}). It may be due to the fact that the stationary solution does not admit a density with respect to the Lebesgue measure. For JKO-ICNN, we observe that the ring shape is recovered, but the samples are not evenly distributed on it.% See Appendix \ref{Aggregation} for more details.

We report the solution at time $t=10$, and used $\tau=0.05$ for SW-JKO and $\tau=0.1$ for JKO-ICNN. As JKO-ICNN requires $O(k^2)$ evaluations of gradients of ICNNs, the training is very long for such a dynamic. % diffusion. 
Here, the training took around 5 hours on a RTX 2080 TI (for 100 steps), versus 20 minutes for the FCNN and 10 minutes for 1000 particles (for 200 steps). 

\looseness=-1 This underlines again the better training complexity of SW-JKO compared to JKO-ICNN, which is especially appealing when we are only interested in learning the optimal distribution. One such task is generative modeling in which we are interested in learning a target distribution $\nu$ from which we have access through samples.

\subsection{Application on Real Data}

In what follows, we show that the SW-JKO scheme can generate real data, and perform better than the associated particle scheme. To perform generative modeling, we can use different functionals. For example, GANs use the Jensen-Shannon divergence \citep{goodfellow2014generative} and WGANs the Wasserstein-1 distance \citep{arjovsky2017wasserstein}. To compare with an associated particle scheme, we focus here on the regularized SW distance as functional, defined as
\begin{equation} \label{eq:swf}
    \mathcal{F}(\mu) = \frac12 SW_2^2(\mu,\nu)+\lambda\mathcal{H}(\mu),
\end{equation}
where $\nu$ is some target distribution, for which we should have access to samples. The Wasserstein gradient flow of this functional was first introduced and study by \citet{bonnotte2013unidimensional} for $\lambda=0$, and by \citet{liutkus2019sliced} with the negative entropy term. \citet{liutkus2019sliced} showcased a particle scheme called SWF (Sliced Wasserstein Flow) to approximate the WGF of \eqref{eq:swf}. 
Applied on images such as MNIST \citep{lecun-mnisthandwrittendigit-2010}, FashionMNIST \citep{xiao2017fashion} or CelebA \citep{liu2015deep}, SWFs need a very long convergence due to the curse of dimensionality and the trouble approximating SW. Hence, they used instead a pretrained autoencoder (AE) and applied the particle scheme in the latent space. Likewise, we use the AE proposed by \citet{liutkus2019sliced} with a latent space of dimension $d=48$, and we perform SW-JKO steps on thoses images. We report on Figure \ref{fig:mnist_AE} samples obtained with RealNVPs and on Table \ref{tab:table_FID} the Fréchet Inception distance (FID) \citep{heusel2017gans} obtained between $10^4$ samples. We denote ``golden score'' the FID obtained with the pretrained autoencoder. Hence, we cannot obtain better results than this. We compared the results in the latent and in the ambient space with SWFs and see that we obtain fairly better results using generative models within the SW-JKO scheme, especially in the ambient space, although the results are not really competitive with state-of-the-art methods. This may be due more to the curse of dimensionality in approximating the objective SW than in approximating the regularizer SW.
% \remove{We compare the results in the latent space with SWAE \citep{kolouri2018sliced}, which uses SW \emph{w.r.t.} a prior in the latent space to train the AE. In the ambient space, we compare SWGFs to SWFs.}

To sum up, an advantage of the SW-JKO scheme is to be able to use easier, yet powerful enough, architectures to learn the dynamic. %diffusion. 
This is cheaper in training time and less memory costly. Furthermore, we can tune the architecture with respect to the characteristics of the problem and add inductive biases (\emph{e.g.} using CNN for images) or learn directly over the particles for low dimensional problems.

\begin{figure}[t]
    \centering
    \begin{minipage}{0.49\linewidth}
        \centering
        \hspace*{\fill}
        \subfloat{\label{a_MNIST}\includegraphics[width=0.32\columnwidth]{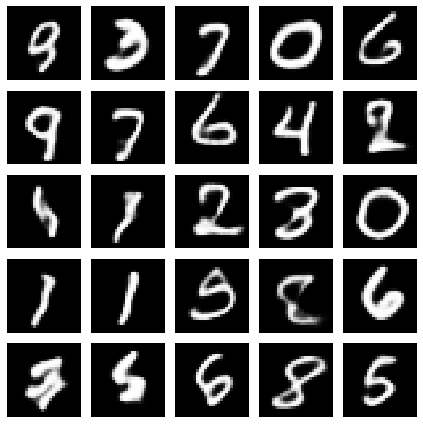}} \hfill
        \subfloat{\label{b_FashionMNIST}\includegraphics[width=0.32\columnwidth]{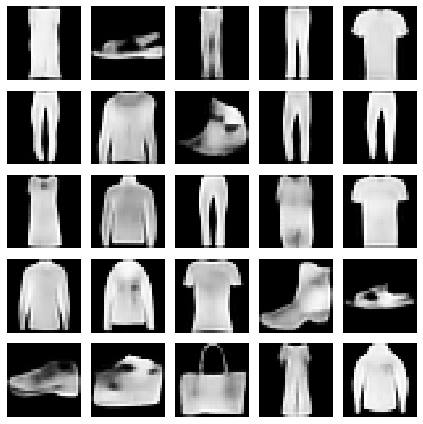}}
        \hfill
        \subfloat{\label{b_FashionMNIST}\includegraphics[width=0.32\columnwidth]{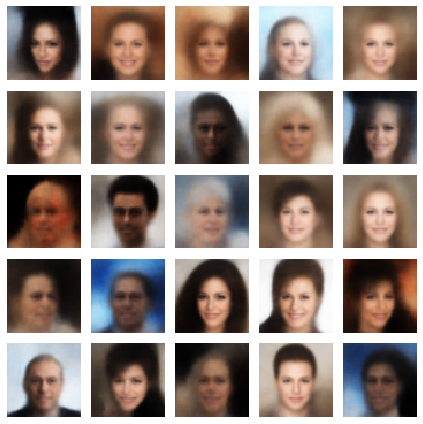}}
        % \subfloat{\label{b_FashionMNIST}\includegraphics[width=0.32\columnwidth]{Figures/SWF/SWF_CelebA_AE_FID66.png}}
        \hspace*{\fill}
        \caption{Generated sample obtained through a pretrained decoder + RealNVP.}
        \label{fig:mnist_AE}
    \end{minipage}
    \hfill
    \begin{minipage}{0.49\linewidth}
        \centering
        \small
        \captionof{table}{FID scores on some datasets (lower is better)}
        \resizebox{\columnwidth}{!}{
            % \begin{tabular}{p{0.1\columnwidth}p{0.5\columnwidth}p{0.15\columnwidth}p{0.15\columnwidth}p{0.15\columnwidth}p{0.15\columnwidth}}
            \begin{tabular}{cccccc}
                & \textbf{Methods}  &\textbf{MNIST} & \textbf{Fashion} & \textbf{CelebA}\\
                \toprule
                \multirow{4}{*}{\rotcell{\scriptsize Ambient \\ Space}} 
                & SWF \citep{liutkus2019sliced} & 225.1 & 207.6 & - \\
                % SIG (\cite{dai2021sliced}) & 4.5 & 13.7 & 37.3 \\
                & SWGF + RealNVP & 88.1 & {\bf 95.5} & - \\
                & SWGF + CNN & {\bf 69.3} & 102.3 & -\\
                & & & \\
                \midrule
                \multirow{4}{*}{\rotcell{Latent \\ Space}} 
                & AE (golden score) & 15.55 & 31 & 77 \\ \cmidrule{2-5}
                %\hdashline \\
                % & \remove{SWAE} \citep{kolouri2018sliced} & 29.8 & 74.3 & 53.9 \\
                & SWGF + AE + RealNVP & {\bf 17.8} & {\bf 40.6} & 90.9 \\
                & SWGF + AE + FCNN & 18.3 & 41.7 & \textbf{88} \\
                & SWF & 22.5 & 56.4 & 91.2 \\
                \bottomrule
            \end{tabular}
            }
        \label{tab:table_FID}
    \end{minipage}
\end{figure}

\section{Conclusion}

In this work, we derive a new class of gradient flows in the space of probability measures endowed with the sliced-Wasserstein metric, and the corresponding algorithms. 
To the best of our knowledge, and despite its simplicity, this is the first time that this class of flows is proposed in a machine learning context. We showed that it has several advantages over state-of-the-art approaches such as the recent JKO-ICNN. Aside from being less computationally intensive, it is more versatile \emph{w.r.t.} the different practical solutions for modeling probability distributions, such as normalizing flows, generative models or sets of evolving particles.

Regarding the theoretical aspects, several challenges remain ahead: First, its connections with Wasserstein gradient flows are still unclear. % \remove{While we showed empirically on some class of problems, including the Fokker-Planck equation with Gaussian distributions, that it behaves identically, proving that its corresponding limit PDE,  if it exists, is the same as the Wasserstein gradient flow is part of our perspectives.} 
Second, one needs to understand if, regarding the optimization task, convergence speeds or guarantees are changed with this novel formulation, revealing potentially interesting practical properties. Lastly, it is natural to study if popular variants of the sliced-Wasserstein distance 
such as Max-sliced \citep{deshpande2019max}, Distributional sliced \citep{nguyen2020distributional}, Subspace robust \citep{paty2019subspace}, generalized sliced \citep{kolouri2019generalized} or projection Wasserstein distances \citep{rowland2019orthogonal} can  also be used in similar gradient flow schemes. The study of higher-order approximation schemes such as BDF2 \citep{matthes2019variational, plazotta2018bdf2} could also be of interest.

% \subsubsection*{Broader Impact Statement}
% In this optional section, TMLR encourages authors to discuss possible repercussions of their work,
% notably any potential negative impact that a user of this research should be aware of. 
% Authors should consult the TMLR Ethics Guidelines available on the TMLR website
% for guidance on how to approach this subject.

% \subsubsection*{Author Contributions}
% If you'd like to, you may include a section for author contributions as is done
% in many journals. This is optional and at the discretion of the authors. Only add
% this information once your submission is accepted and deanonymized. 

\subsubsection*{Acknowledgments}

This research was funded by project DynaLearn from Labex CominLabs and R{\'e}gion Bretagne ARED DLearnMe, and by the project OTTOPIA ANR-20-CHIA-0030 of the French National Research Agency (ANR).

% Use unnumbered third level headings for the acknowledgments. All
% acknowledgments, including those to funding agencies, go at the end of the paper.
% Only add this information once your submission is accepted and deanonymized. 

\bibliography{references}
\bibliographystyle{tmlr}

\appendix

\onecolumn

\section{Some Results of \citep{candau_tilh}} \label{appendix_candauTilh}

In this section, we report some interesting theoretical results of \citep{candau_tilh}. More precisely, in Appendix \ref{appendix_geodesics_SW}, we report the geodesics in SW space. Then, in Appendix \ref{appendix_euler_lagrange}, we report the Euler-Lagrange equation of the Wasserstein gradient flow and the Sliced-Wasserstein gradient flow of the Fokker-Planck functional. Finally, in Appendix \ref{appendix_sw_jko}, we report some theoretical results that we will use in our proofs.

\subsection{Geodesics in Sliced-Wasserstein Space} \label{appendix_geodesics_SW}

We recall here first some notions in metric spaces. Let $(X,d)$ be some metric space. In our case, we will have $X=\mathcal{P}_2(\Omega)$ with $\Omega$ a bounded, open convex set and $d=SW_2$.

We first need to define an absolutely continuous curve.

\begin{definition}[Absolutely continuous curve]
    A curve $w:[0,1]\to X$ is said to be absolutely continuous if there exists $g\in L^1([0,1])$ such that
    \begin{equation}
        \forall t_0<t_1,\ d\big(w(t_0),w(t_1)\big) \le \int_{t_0}^{t_1}g(s)\mathrm{d}s.
    \end{equation}
\end{definition}
We denote by $AC(X, d)$ the set of absolutely continuous measures. Moreover, denote $AC_{x,y}(X,d)$ the set of curves in $AC(X,d)$ starting at $x$ and ending at $y$. Then, we can define the length of an absolutely continuous curve $w\in AC(X,d)$ as
\begin{equation}
    L_d(w) = \sup\left\{\sum_{k=0}^{n-1} d\big(w(t_k),w(t_{k+1})\big),\ n\ge 1,\ 0=t_0<t_1<\dots < t_n = 1\right\}.
\end{equation}

Then, we say that a space $X$ is a geodesic space if for any $x,y\in X$,
\begin{equation}
    d(x,y)=\min\left\{L_d(w),\ w\in AC(X,d), w(0)=x, w(1)=y\right\}.
\end{equation}

Finally, \citet{candau_tilh} showed in Theorem 2.4 that $(\mathcal{P}_2(\Omega),SW_2)$ is not a geodesic space but rather a pseudo-geodesic space since for $\mu,\nu\in\mathcal{P}_2(\Omega)$,
\begin{equation}
    \inf\left\{L_{SW_2}(w),\ w\in AC_{\mu,\nu}(\mathcal{P}_2(\Omega),SW_2)\right\} = c_{d,2} W_2(\mu,\nu).
\end{equation}

We see that the infimum of the length in the SW space is the Wasserstein distance. Hence, it suggests that the geodesics in SW space are related to the one in Wasserstein space, which are well known since they correspond to the Mc Cann's interpolation (see \emph{e.g.} \citep[Theorem .27]{santambrogio2015optimal}).

\subsection{Euler-Lagrange Equations in Wasserstein and Sliced-Wasserstein Spaces} \label{appendix_euler_lagrange}

To prove that the Wasserstein gradient flow of a functional satisfies a PDE, one step consists at computing the first variations of the functional in the JKO scheme, \emph{i.e.} of $x\mapsto d^2(x,y)/2\tau + F(x) $. Evaluating the first variation at the minimizer actually gives a corresponding Euler-Lagrange equation \citep{santambrogio2015optimal, candau_tilh}.

Hence, \citet{candau_tilh} also provided a similar form of Euler-Lagrange equations for Wasserstein and Sliced-Wasserstein gradient flows of the Fokker-Planck functional 
\begin{equation}
    \mathcal{F}(\mu) = \int V \mathrm{d}\mu + \mathcal{H}(\mu).
\end{equation}

\begin{proposition}[Proposition 3.9 in \citep{candau_tilh}] \hfill
    \begin{itemize}
        \item Let $\mu_{k+1}^\tau$ be the optimal measure in
        \begin{equation} \label{eq:jko_obj}
            \argmin_{\mu\in\mathcal{P}_2(K)}\ \frac{W_2^2(\mu,\mu_k^\tau)}{2\tau} + \mathcal{F}(\mu).
        \end{equation}
        Then, it satisfies the following Euler-Lagrange equation:
        \begin{equation} \label{eq:euler_lagrange_w}
            \log(\rho_{k+1}^\tau)+V+\frac{\psi}{\tau} = \mathrm{constant}\ \mathrm{a.e.},
        \end{equation}
        where $\rho_{k+1}^\tau$ is the density of $\mu_{k+1}^\tau$ and $\psi$ is the Kantorovitch potential from $\mu_{k+1}^\tau$ to $\mu_k^\tau$.
        \item Let $\mu_{k+1}^\tau$ be the optimal measure in
        \begin{equation}
            \argmin_{\mu\in\mathcal{P}_2(K)}\ \frac{SW_2^2(\mu,\mu_k^\tau)}{2\tau} + \mathcal{F}(\mu).
        \end{equation} 
        Then, it satisfies the following Euler-Lagrange equation:
        \begin{equation}
            \log(\rho_{k+1}^\tau) + V + \frac{1}{\tau} \int_{S^{d-1}} \psi_\theta\circ P^\theta \ \mathrm{d}\lambda(\theta) = \mathrm{constant}\ \mathrm{a.e.},
        \end{equation}
        where for $\theta\in S^{d-1}$, $\psi_\theta$ is the Kantorovitch potential form $P^\theta_\#\mu_{k+1}^\tau$ to $P^\theta_\#\mu_k^\tau$.
    \end{itemize}
\end{proposition}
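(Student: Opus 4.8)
The plan is to compute the first variation (in the sense of Wasserstein calculus, cf.\ \citep[Chapter~7]{santambrogio2015optimal}) of the objective functional in each JKO-type scheme and then invoke the standard optimality condition: if $\mu_{k+1}^\tau$ minimizes $\mu\mapsto J(\mu)$ over $\mathcal{P}_2(K)$ and $J$ admits a first variation $\frac{\delta J}{\delta\mu}$, then there is a constant $C$ with $\frac{\delta J}{\delta\mu}(\mu_{k+1}^\tau)=C$ on $\{\rho_{k+1}^\tau>0\}$ and $\ge C$ elsewhere. First I would record that, since the objective is finite at $\mu_k^\tau$ its minimum is finite, hence $\mathcal{H}(\mu_{k+1}^\tau)<+\infty$ and $\mu_{k+1}^\tau$ is absolutely continuous; moreover a short perturbation argument — moving an infinitesimal mass onto a region where $\rho_{k+1}^\tau$ vanishes changes the entropy with derivative $\log\epsilon+1\to-\infty$ at $\epsilon=0^+$, which no linear term can offset — shows $\rho_{k+1}^\tau>0$ a.e.\ on $K$, so the optimality condition holds as an equality a.e.

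Next I would assemble the three pieces of the first variation in the Wasserstein case. For the potential energy $\mu\mapsto\int V\,\mathrm{d}\mu$ it is $V$; for the entropy $\mathcal{H}(\mu)=\int\rho\log\rho$ it is $\log\rho+1$ (the constant $1$ is absorbed into $C$); and for $\mu\mapsto\frac{1}{2\tau}W_2^2(\mu,\mu_k^\tau)$ it is $\frac{\psi}{\tau}$, with $\psi$ the Kantorovich potential (for the cost $|x-y|^2/2$) from $\mu_{k+1}^\tau$ to $\mu_k^\tau$. This last identity comes from the Kantorovich dual $\frac12 W_2^2(\mu,\mu_k^\tau)=\sup_{\varphi}\big(\int\varphi\,\mathrm{d}\mu+\int\varphi^c\,\mathrm{d}\mu_k^\tau\big)$ combined with an envelope (Danskin-type) argument; absolute continuity of $\mu_{k+1}^\tau$ makes the optimal $\varphi$ unique up to an additive constant, so the first variation is well defined. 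Summing the three contributions and applying the optimality condition gives $\log(\rho_{k+1}^\tau)+V+\frac{\psi}{\tau}=\mathrm{constant}$ a.e.

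For the sliced case the only new ingredient is the first variation of $\mu\mapsto\frac{1}{2\tau}SW_2^2(\mu,\mu_k^\tau)=\frac{1}{2\tau}\int_{S^{d-1}}W_2^2(P^\theta_\#\mu,P^\theta_\#\mu_k^\tau)\,\mathrm{d}\lambda(\theta)$. Since $\mu\mapsto P^\theta_\#\mu$ is linear, the chain rule for first variations yields that the first variation of $\mu\mapsto\frac12 W_2^2(P^\theta_\#\mu,P^\theta_\#\mu_k^\tau)$ is $\psi_\theta\circ P^\theta$, where $\psi_\theta$ is the one-dimensional Kantorovich potential from $P^\theta_\#\mu_{k+1}^\tau$ to $P^\theta_\#\mu_k^\tau$. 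Integrating over $S^{d-1}$ and justifying the interchange of $\int_{S^{d-1}}$ with the differentiation (by dominated convergence, using that all projected measures are supported in a fixed compact interval since $K$ is compact), one obtains the first variation $\frac{1}{\tau}\int_{S^{d-1}}\psi_\theta\circ P^\theta\,\mathrm{d}\lambda(\theta)$, and the optimality condition then produces the stated Euler--Lagrange equation.

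The main obstacle I anticipate lies in the rigorous justification of these first-variation computations rather than in the bookkeeping: (i) choosing admissible perturbations that respect $\rho\ge 0$ while still upgrading the optimality condition from "constant on the support" to "constant a.e.", i.e.\ genuinely ruling out that $\{\rho_{k+1}^\tau=0\}$ has positive Lebesgue measure; and (ii) in the sliced case, establishing measurability of $\theta\mapsto\psi_\theta$ together with a uniform-in-$\theta$ bound on the $\psi_\theta$ (which follows from the compactness of the supports of the projected measures) so that differentiation under the $\int_{S^{d-1}}$ sign is licit, and verifying enough stability of $\psi_\theta$ in $\theta$ for the dominated-convergence step.
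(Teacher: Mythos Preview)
Your approach is correct and matches what the paper indicates. Note, however, that the paper does not give its own proof of this proposition: it is stated as a quotation of Proposition~3.9 in \citep{candau_tilh}, preceded only by the one-line hint that ``evaluating the first variation at the minimizer actually gives a corresponding Euler--Lagrange equation''. Your proposal fleshes out exactly this strategy --- computing $\frac{\delta}{\delta\mu}$ of each of the three terms (potential, entropy, and the $W_2^2$ or $SW_2^2$ term via Kantorovich duality and, in the sliced case, the chain rule through $P^\theta$) and invoking the standard optimality condition from \citep[Chapter~7]{santambrogio2015optimal} --- so there is nothing to contrast.
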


From this proposition, we see that the Euler-Lagrange equations share similar forms, since only the first variation of the distance changes.

Informally, one method to find the corresponding PDE for Wasserstein gradient flows is to show that the derivative of the first variation of Wasserstein is equal (weakly, \emph{i.e.} in the sense of distributions by integrating \emph{w.r.t.} test functions) to $\frac{\partial\rho}{\partial t}$. In this case, we recover well from (\ref{eq:jko_obj}) the PDE. Indeed, $\rho$ satisfies the PDE
\begin{equation}
    \frac{\partial \rho}{\partial t} = \mathrm{div}(\rho\nabla V)+\Delta\rho
\end{equation}
in a weak sense if for all $\xi \in C^\infty(]0,+\infty[\times \mathbb{R}^d)$, 
\begin{equation}
    \int_0^{+\infty} \int_{\mathbb{R}^d} \left(\frac{\partial\xi}{\partial t}(t,x) + \langle \nabla V(x), \nabla_x\xi(t,x)\rangle -\Delta \xi(t,x)\right)\ \mathrm{d}\rho_t(x)\mathrm{d}t = -\int \xi(0,x)\ \mathrm{d}\rho_0(x).
\end{equation}
By using the Euler-Lagrange equation (\ref{eq:euler_lagrange_w}), informally, we have $\rho\nabla\left(\log(\rho)+V+\frac{\psi}{\tau}\right) = \nabla \rho + \rho\nabla V + \rho \frac{\nabla\psi}{\tau}$. Then, we integrate $\nabla\xi$ \emph{w.r.t.} to this distribution and we use that (by integration by parts)
\begin{equation}
    \int \langle \nabla \xi(x), \nabla\rho(x)\rangle \mathrm{d}x = -\int \Delta \xi(x) \ \mathrm{d}\rho(x),
\end{equation}
and we obtain
\begin{equation}
    \int_{\mathbb{R}^d} \left(-\Delta \xi(x) + \langle \nabla \xi(x), \nabla V(x)\rangle +\frac{1}{\tau} \langle \nabla \xi(x), \nabla \psi(x)\rangle \right) \ \mathrm{d}\rho_t(x) = 0.
\end{equation}
Hence, to obtain the right equation, we require to show that in the limit $\tau\to 0$, the term containing the first variation of Wasserstein is equal to the term representing the derivation in time, \emph{i.e.}
\begin{equation}
    \lim_{\tau\to 0}\ \frac{1}{\tau}\iint \langle \nabla \xi(x), \nabla \psi(x)\rangle \mathrm{d}\rho_t(x)\mathrm{d}t = \iint \frac{\partial \xi}{\partial t}\ \mathrm{d}\rho_t(x)\mathrm{d}t.
\end{equation}
This is done \emph{e.g.} in \citep[Theorem 5.3.6]{bonnotte2013unidimensional} or \citep[Theorem S6]{liutkus2019sliced}. For the Sliced-Wasserstein distance, it is still an open question whether we have the same connection or not.

Note that we recover the divergence operator by using an integration by parts (see \emph{e.g.} \citep[Appendix A.1]{korbaKSD2021})
\begin{equation}
    \int \langle \nabla \xi(x), \nabla V(x)\rangle \mathrm{d}\rho(x) = - \int \xi(x) \mathrm{div}\big(\rho(x)\nabla V(x)\big)\ \mathrm{d}x.
\end{equation}

For a complete derivation of the Wasserstein gradient flows of the Fokker-Planck functional, see \emph{e.g.} \citep[Section 4.5]{santambrogio2015optimal}.

\subsection{Results on the SW-JKO Scheme} \label{appendix_sw_jko}

Finally, we report here some results about the continuity and convexity of the Sliced-Wasserstein distance as well as on the existence of the minimizer at each step of the SW-JKO scheme. We will use these results in our proofs in Appendix \ref{Proofs}.

In the following, we restrain ourselves to measures supported on a compact domain $K$. % and $V$ be a Lipschitz function on $K$.

\begin{proposition}[Proposition 3.4 in \citep{candau_tilh}]
    Let $\nu\in\mathcal{P}_2(K)$. Then, $\mu\mapsto SW_2^2(\mu,\nu)$ is continuous \emph{w.r.t.} the weak convergence.
\end{proposition}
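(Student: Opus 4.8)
The plan is to prove the statement by a dominated convergence argument applied directly to the integral defining $SW_2^2$, reducing the question to the (elementary) continuity of $W_2^2$ on one-dimensional measures with uniformly bounded support.

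First I would observe that since $K$ is compact, $\mathcal{P}_2(K)=\mathcal{P}(K)$ and the weak topology on it is metrizable, so it is enough to verify \emph{sequential} continuity. Fix $\nu\in\mathcal{P}_2(K)$, pick $R>0$ with $K\subseteq B(0,R)$, and let $(\mu_n)_n$ be a sequence in $\mathcal{P}_2(K)$ converging weakly to some $\mu\in\mathcal{P}_2(K)$. For each fixed $\theta\in S^{d-1}$, the map $P^\theta:x\mapsto\langle x,\theta\rangle$ is continuous, so by the continuous mapping theorem $P^\theta_\#\mu_n\to P^\theta_\#\mu$ weakly; moreover all of $P^\theta_\#\mu_n$, $P^\theta_\#\mu$, $P^\theta_\#\nu$ are supported on the fixed compact interval $[-R,R]$, since $|\langle x,\theta\rangle|\le R$ for $x\in K$ and $\theta\in S^{d-1}$.

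Next I would use that on probability measures supported in a fixed compact set, weak convergence is equivalent to convergence in $W_2$ (second moments are automatically uniformly integrable there; see e.g.\ \citep[Theorem 6.9]{villani2008optimal}), which gives, for every $\theta\in S^{d-1}$,
\begin{equation}
    W_2^2(P^\theta_\#\mu_n,P^\theta_\#\nu)\ \xrightarrow[n\to\infty]{}\ W_2^2(P^\theta_\#\mu,P^\theta_\#\nu).
\end{equation}
(Alternatively, this step can be carried out by hand via the one-dimensional quantile formula $W_2^2(P^\theta_\#\mu_n,P^\theta_\#\nu)=\int_0^1\big(F_{P^\theta_\#\mu_n}^{-1}-F_{P^\theta_\#\nu}^{-1}\big)^2\,\mathrm{d}u$, using a.e.\ convergence of the quantile functions together with their uniform bound by $R$.) For the domination, any two probability measures on $[-R,R]$ are at $W_2^2$-distance at most $(2R)^2$, so $W_2^2(P^\theta_\#\mu_n,P^\theta_\#\nu)\le 4R^2$ uniformly in $n$ and $\theta$, and the constant $4R^2$ is $\lambda$-integrable because $\lambda$ is a probability measure on $S^{d-1}$; measurability of $\theta\mapsto W_2^2(P^\theta_\#\mu_n,P^\theta_\#\nu)$ is part of the standing setup making $SW_2$ well defined. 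The dominated convergence theorem applied to $\int_{S^{d-1}} W_2^2(P^\theta_\#\mu_n,P^\theta_\#\nu)\,\mathrm{d}\lambda(\theta)$ then yields $SW_2^2(\mu_n,\nu)\to SW_2^2(\mu,\nu)$, which is the claim.

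I do not expect a real obstacle here; the only step needing a little care is upgrading weak convergence of the projected measures to convergence of their squared $2$-Wasserstein distances, and this is exactly where the compactness of $K$ (hence of the projected supports) is used. Everything else is routine bookkeeping with the continuous mapping theorem and dominated convergence.
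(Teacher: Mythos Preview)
Your argument is correct, but it takes a different route from the paper's. The paper does not unpack the integral over $S^{d-1}$ at all: it simply uses that $SW_2$ is a metric, applies the reverse triangle inequality
\[
|SW_2(\mu_n,\nu)-SW_2(\mu,\nu)|\le SW_2(\mu_n,\mu)\le W_2(\mu_n,\mu),
\]
and concludes because $W_2$ metrizes weak convergence on $\mathcal{P}(K)$ for $K$ compact. Continuity of $SW_2^2$ then follows immediately from continuity of $SW_2$.

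Your approach instead works at the level of the defining integral: push the weak convergence through each projection $P^\theta$ via the continuous mapping theorem, upgrade to $W_2$-convergence of the one-dimensional pushforwards (again using compact support), and pass to the limit under $\int_{S^{d-1}}$ by dominated convergence with the uniform bound $4R^2$. This is a perfectly valid and self-contained argument; it has the minor advantage of not relying on the global inequality $SW_2\le W_2$ or on $SW_2$ being a metric, only on the one-dimensional theory and compactness. The paper's argument, on the other hand, is shorter and makes the dependence on the ambient Wasserstein metric transparent in one line.
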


\begin{proposition}[Proposition 3.5 in \citep{candau_tilh}]
    Let $\nu\in\mathcal{P}_2(K)$, then $\mu\mapsto SW_2^2(\mu,\nu)$ is convex and stricly convex whenever $\nu$ is absolutely continuous \emph{w.r.t.} the Lebesgue measure.
\end{proposition}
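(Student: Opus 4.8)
The plan is to push everything down to the one–dimensional marginals, where $W_2^2$ has the quantile closed form, exploit that the slicing map is linear in the measure, and then integrate over $S^{d-1}$. Write $\rho_\theta=P^\theta_\#\nu$ and, for a pair $\mu_0,\mu_1\in\mathcal{P}_2(K)$ and $t\in[0,1]$, set $\mu_t=(1-t)\mu_0+t\mu_1$. The first thing I would record is that the pushforward commutes with convex combinations, i.e. $P^\theta_\#\mu_t=(1-t)P^\theta_\#\mu_0+t\,P^\theta_\#\mu_1$, so for each fixed $\theta$ the map $\mu\mapsto P^\theta_\#\mu$ is affine. Consequently the whole question reduces to the convexity of $m\mapsto W_2^2(m,\rho_\theta)$ on $\mathcal{P}_2(\mathbb{R})$ along ordinary (mixture) interpolation, after which I integrate $\mathrm{d}\lambda(\theta)$.

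For plain convexity I would use a coupling argument. Let $\gamma_0,\gamma_1$ be optimal plans for $(P^\theta_\#\mu_0,\rho_\theta)$ and $(P^\theta_\#\mu_1,\rho_\theta)$; then $\gamma_t:=(1-t)\gamma_0+t\gamma_1$ has marginals $P^\theta_\#\mu_t$ and $\rho_\theta$, hence is admissible, and since the transport cost is linear in the plan, $W_2^2(P^\theta_\#\mu_t,\rho_\theta)\le\int|x-y|^2\,\mathrm{d}\gamma_t=(1-t)W_2^2(P^\theta_\#\mu_0,\rho_\theta)+t\,W_2^2(P^\theta_\#\mu_1,\rho_\theta)$. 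Integrating over $S^{d-1}$ yields $SW_2^2(\mu_t,\nu)\le(1-t)SW_2^2(\mu_0,\nu)+t\,SW_2^2(\mu_1,\nu)$, which is the convexity claim. This step is dimension-agnostic, but the one-dimensional reduction is what makes strictness tractable.

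For strict convexity when $\nu$ is absolutely continuous I would proceed in three moves. First, marginals of an a.c.\ measure are a.c.: a Lebesgue-null $B\subset\mathbb{R}$ has $(P^\theta)^{-1}(B)$ Lebesgue-null in $\mathbb{R}^d$, so $\rho_\theta(B)=\nu((P^\theta)^{-1}(B))=0$; thus $\rho_\theta\ll\mathrm{Leb}$ for every $\theta$, and the one-dimensional optimal plan to $\rho_\theta$ is unique and monotone. Second, I establish strict convexity of $m\mapsto W_2^2(m,\rho_\theta)$ whenever the endpoint marginals differ: the plans $\gamma_0,\gamma_1$ are then concentrated on the nondecreasing graphs of the monotone maps $T_0,T_1$ sending $\rho_\theta$ onto the two endpoint marginals, and equality in the coupling bound forces $\gamma_t$ to be optimal, i.e.\ to have monotone support. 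But the union of the two graphs is monotone iff $T_0(y)\le T_1(y')$ and $T_1(y)\le T_0(y')$ for all $y<y'$; letting $y'\downarrow y$ at the continuity points of $T_0,T_1$ (all but countably many, hence $\rho_\theta$-a.e.) forces $T_0=T_1$ $\rho_\theta$-a.e., contradicting distinctness of the marginals. Third, I lift strictness to $SW_2$: if $\mu_0\ne\mu_1$ then $A=\{\theta:P^\theta_\#\mu_0\ne P^\theta_\#\mu_1\}$ has positive $\lambda$-measure, since otherwise $\hat\mu_0(s\theta)=\hat\mu_1(s\theta)$ for a.e.\ $\theta$ and all $s$, and continuity of characteristic functions together with a density argument in $\theta$ gives $\hat\mu_0=\hat\mu_1$, i.e.\ $\mu_0=\mu_1$ (Cramér–Wold). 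On $A$ the integrand obeys the strict one-dimensional inequality (as $\rho_\theta$ is a.c.), while off $A$ it is merely $\le$; integrating a function that is $\le$ everywhere and $<$ on a positive-measure set gives $SW_2^2(\mu_t,\nu)<(1-t)SW_2^2(\mu_0,\nu)+t\,SW_2^2(\mu_1,\nu)$.

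The main obstacle is the second move: converting ``the convex combination of two monotone optimal plans is not optimal'' into ``the endpoint marginals coincide.'' The delicate points are that monotone maps are continuous only off a countable set (handled because $\rho_\theta$ a.c.\ makes its continuity points $\rho_\theta$-full) and the passage from non-optimality of $\gamma_t$ to a strictly smaller $W_2^2$, which relies on uniqueness of the one-dimensional optimal plan against an a.c.\ marginal. The Cramér–Wold step is standard but must be phrased carefully so as to guarantee $\lambda(A)>0$ rather than merely $A\ne\emptyset$.
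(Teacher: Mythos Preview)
The paper does not supply its own proof of this proposition: it is quoted verbatim from \citet{candau_tilh} (see Appendix~\ref{appendix_sw_jko}) and invoked as a black box in the proof of Proposition~\ref{prop:minimizer}. So there is nothing in the present paper to compare your argument against.

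That said, your argument is sound and is the natural one. The convexity step via the mixture coupling $\gamma_t=(1-t)\gamma_0+t\gamma_1$ is standard and correct. For strict convexity, the three moves are all valid: (i) the pushforward $P^\theta_\#\nu$ of an absolutely continuous $\nu$ is absolutely continuous since $(P^\theta)^{-1}(B)$ is a null cylinder whenever $B\subset\mathbb{R}$ is null; (ii) equality in the coupling bound forces $\gamma_t$ to be optimal, and your monotone--support argument then pins down $T_0=T_1$ $\rho_\theta$-a.e.\ (an even shorter route here: by uniqueness of the optimal plan against the a.c.\ marginal $\rho_\theta$, $\gamma_t$ must be induced by a single map, so disintegrating on the second coordinate gives $(1-t)\delta_{T_0(y)}+t\delta_{T_1(y)}=\delta_{T_t(y)}$, forcing $T_0(y)=T_1(y)$ directly); (iii) the Cram\'er--Wold step is fine once you observe that equality of $\hat\mu_0$ and $\hat\mu_1$ on rays in $\lambda$-a.e.\ direction gives equality Lebesgue-a.e.\ on $\mathbb{R}^d$ (polar coordinates), and continuity of characteristic functions upgrades this to equality everywhere. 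The only places that warrant a sentence of justification in a final write-up are exactly the two you flag: the $\rho_\theta$-a.e.\ continuity of monotone maps, and the passage from ``$\lambda$-a.e.\ direction'' to ``Lebesgue-a.e.\ point'' in the Fourier step.
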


\begin{proposition}[Proposition 3.7 in \citep{candau_tilh}]
    Let $\tau>0$ and $\mu_k^\tau\in\mathcal{P}_2(K)$. Then, there exists a unique solution $\mu_{k+1}^\tau\in\mathcal{P}_2(K)$ to the minimization problem
    \begin{equation}
        \min_{\mu\in\mathcal{P}_2(K)}\ \frac{SW_2^2(\mu,\mu_k^\tau)}{2\tau} + \int V\mathrm{d}\mu + \mathcal{H}(\mu).
    \end{equation}
    The solution is even absolutely continuous.
\end{proposition}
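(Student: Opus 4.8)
The plan is to prove existence by the direct method of the calculus of variations, uniqueness by strict convexity, and absolute continuity as an automatic byproduct of the minimum being finite. Write $J(\mu) = \frac{1}{2\tau}SW_2^2(\mu,\mu_k^\tau) + \int V\,\mathrm{d}\mu + \mathcal{H}(\mu)$ for $\mu\in\mathcal{P}_2(K)$. I would first record two elementary facts. Since $K$ is compact (we take it with $0<|K|<\infty$, i.e. the closure of a bounded open set as in the setting of \citep{candau_tilh}) and $V$ is continuous on $K$, the terms $SW_2^2(\cdot,\mu_k^\tau)$ and $\mu\mapsto\int V\,\mathrm{d}\mu$ are bounded on $\mathcal{P}_2(K)$, and $\mathcal{H}(\mu)=\int\rho\log\rho\ge -|K|/e$ because $s\log s\ge -1/e$; hence $J$ is bounded below. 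Also $\inf J<\infty$, since for the normalized uniform measure $\bar\mu$ on $K$ one has $\mathcal{H}(\bar\mu)=-\log|K|<\infty$ and the other two terms are finite.

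For existence, take a minimizing sequence $(\mu_n)\subset\mathcal{P}_2(K)$. Since $K$ is compact, $\mathcal{P}(K)$ is weakly sequentially compact (tightness is automatic, so Prokhorov applies), and on a compact set the quadratic Wasserstein topology coincides with the weak one, so up to a subsequence $\mu_n\rightharpoonup\mu^\star\in\mathcal{P}_2(K)$. It then suffices to show $J$ is weakly lower semicontinuous. The map $\mu\mapsto SW_2^2(\mu,\mu_k^\tau)$ is weakly continuous by the continuity statement quoted above from \citep{candau_tilh}, and $\mu\mapsto\int V\,\mathrm{d}\mu$ is weakly continuous since $V\in C(K)$. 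For the entropy I would write $\mathcal{H}(\mu)=D_{\mathrm{KL}}(\mu\,\|\,\bar\mu)-\log|K|$ with $\bar\mu$ the normalized Lebesgue measure on $K$, and use that $\mu\mapsto D_{\mathrm{KL}}(\mu\,\|\,\bar\mu)$ is weakly lower semicontinuous (it is a supremum of the weakly continuous affine maps $\mu\mapsto\int\phi\,\mathrm{d}\mu-\log\int e^{\phi}\,\mathrm{d}\bar\mu$, $\phi\in C_b(K)$, by the Donsker–Varadhan formula). Then $J(\mu^\star)\le\liminf_n J(\mu_n)=\inf J$, so $\mu^\star$ is a minimizer.

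Absolute continuity and uniqueness follow quickly. Any minimizer $\mu^\star$ satisfies $J(\mu^\star)=\inf J<\infty$, and since the first two terms are finite this forces $\mathcal{H}(\mu^\star)<\infty$, hence $\mu^\star\ll\mathrm{Leb}$ by the very definition of $\mathcal{H}$. For uniqueness, suppose $\mu^0\neq\mu^1$ are both minimizers and let $\mu^{1/2}=\tfrac12(\mu^0+\mu^1)$. Convexity of $SW_2^2(\cdot,\mu_k^\tau)$ (the convexity statement quoted above from \citep{candau_tilh}) and linearity of $\mu\mapsto\int V\,\mathrm{d}\mu$ bound the first two terms of $J(\mu^{1/2})$ by the average of their values at $\mu^0,\mu^1$; and $\mathcal{H}$ is \emph{strictly} convex on densities, so since $\mu^0,\mu^1$ have densities $\rho^0\neq\rho^1$ (both have finite entropy), strict convexity of $s\mapsto s\log s$ gives a strict pointwise inequality on $\{\rho^0\neq\rho^1\}$ that integrates to $\mathcal{H}(\mu^{1/2})<\tfrac12\mathcal{H}(\mu^0)+\tfrac12\mathcal{H}(\mu^1)$. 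Summing, $J(\mu^{1/2})<\inf J$, a contradiction.

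The main obstacle is the weak lower semicontinuity of the negative entropy $\mathcal{H}$ — the only ingredient that is not an immediate consequence of compactness, of the cited properties of $SW_2^2$, or of elementary convexity of $s\log s$. The Donsker–Varadhan representation above is the cleanest route; alternatively one can invoke the general lower semicontinuity theorem for internal-energy functionals $\mu\mapsto\int u(\rho)\,\mathrm{d}x$ with $u$ convex and superlinear (e.g. in Santambrogio's book), applied to $u(s)=s\log s$. The remaining care points are purely bookkeeping: that $K$ has positive finite Lebesgue measure (so $\bar\mu$ is an admissible competitor and the bound $\mathcal{H}\ge -|K|/e$ holds), and that the regularity hypothesis on $V$ is at least continuity of $V$ on $K$ (lower semicontinuity and boundedness below would already suffice, replacing weak continuity of $\mu\mapsto\int V\,\mathrm{d}\mu$ by weak lower semicontinuity).
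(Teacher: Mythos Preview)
Your proof is correct and follows essentially the same route as the paper. The paper does not give its own detailed proof of this proposition (it is quoted from \citep{candau_tilh}), but the argument sketched for the closely related Proposition~\ref{prop:minimizer} in Appendix~\ref{Proofs} is identical in spirit: weak compactness of $\mathcal{P}_2(K)$ combined with weak lower semicontinuity of the objective yields existence via the Weierstrass theorem, and uniqueness follows from (strict) convexity. Your treatment is more explicit on two points the paper leaves implicit---the weak lower semicontinuity of $\mathcal{H}$ via Donsker--Varadhan, and the fact that $\mathcal{H}(\mu^\star)<\infty$ automatically forces absolute continuity---but these are refinements of the same argument, not a different approach.
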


\section{Proofs} \label{Proofs}

\subsection{Proof of Proposition \ref{prop:minimizer}}

We refer to the proposition 3.7 in \citep{candau_tilh}.

Let $\tau>0$,\ $k\in\mathbb{N}$,\ $\mu_k^\tau\in\mathcal{P}_2(K)$. Let's note $J(\mu)=\frac{SW_2^2(\mu,\mu_k^\tau)}{2\tau}+\mathcal{F}(\mu)$. 

According to Proposition 3.4 in \citep{candau_tilh}, $\mu\mapsto SW_2^2(\mu,\mu_k^\tau)$ is continuous with respect to the weak convergence. Indeed, let $\mu\in\mathcal{P}_2(K)$ and let $(\mu_n)_n$ converging weakly to $\mu$, \emph{i.e.} $\mu_n\xrightarrow[n\to\infty]{\mathcal{L}} \mu$. Then, by the reverse triangular inequality, we have
\begin{equation}
    |SW_2(\mu_n,\mu_k^\tau)-SW_2(\mu,\mu_k^\tau)|\le SW_2(\mu_n,\mu) \le W_2(\mu_n,\mu).
\end{equation}
Since the Wasserstein distance metrizes the weak convergence \citep{villani2008optimal}, we have that \b{$W_2(\mu_n,\mu)\to 0$}. And therefore, $\mu\mapsto SW_2(\mu,\mu_k^\tau)$ and $\mu\mapsto SW_2^2(\mu,\mu_k^\tau)$ are continuous \emph{w.r.t.} the weak convergence.

By hypothesis, $\mathcal{F}$ is lower semi continuous, hence $\mu\mapsto J(\mu)$ is lower semi continuous. Moreover, $\mathcal{P}_2(K)$ is compact for the weak convergence, thus we can apply the Weierstrass theorem (Box 1.1 in \citep{santambrogio2015optimal}) and there exists a minimizer $\mu_{k+1}^\tau$ of $J$.

By Proposition 3.5 in \citep{candau_tilh}, $\mu\mapsto SW_2^2(\mu,\nu)$ is convex and strictly convex whenever $\nu$ is absolutely continuous \emph{w.r.t.} the Lebesgue measure. Hence, for the uniqueness, if $\mathcal{F}$ is strictly convex then $\mu\mapsto J(\mu)$ is also strictly convex and the minimizer is unique. And if $\rho_k^\tau$ is absolutely continuous, then according to Proposition 3.5 in \citep{candau_tilh}, $\mu\mapsto SW_2^2(\mu,\mu_k^\tau)$ is strictly convex, and hence $\mu\mapsto J(\mu)$ is also strictly convex since $\mathcal{F}$ was taken convex by hypothesis.

\subsection{Proof of Proposition \ref{prop:nonincreasing}}

Let $k\in\mathbb{N}$, then since $\mu_{k+1}^\tau$ is the minimizer of \eqref{swjko},
\begin{equation}
    \mathcal{F}(\mu_{k+1}^\tau)+\frac{SW_2^2(\mu_{k+1}^\tau,\mu_k^\tau)}{2\tau} \le \mathcal{F}(\mu_{k}^\tau)+\frac{SW_2^2(\mu_{k}^\tau,\mu_k^\tau)}{2\tau} = \mathcal{F}(\mu_k^\tau).
\end{equation}
Hence, as $SW_2^2(\mu_{k+1}^\tau,\mu_k^\tau)\ge 0$, 
\begin{equation}
    \mathcal{F}(\mu_{k+1}^\tau)\le\mathcal{F}(\mu_k^\tau).
\end{equation}

\subsection{Upper Bound on the Errors}

% \subsection{Proof of Proposition \ref{bound_error}} \label{proof_bound_error}

Following \cite{hwang2021deep}, we can also derive an upper bound on the error made at each step. % This bound is for general pushforward distributions, and may be refined.
\begin{proposition} \label{bound_error}
    Let $k\in\mathbb{N}$, $\mu_0\in\mathcal{P}_2(\mathbb{R}^d)$, $C$ some constant, and assume that $\mathcal{F}$ admits a lower bound, then
    \begin{equation}
        \begin{aligned}
            SW_2\big((g_\theta^{k+1,\tau})_\# p_Z,\mu_{k+1}^\tau\big) &\le SW_2\big((g_\theta^{k,\tau})_\# p_Z,\mu_k^\tau\big) + C\tau^{\frac12} \\
            &\le (k+1) C \tau^{\frac12} + SW_2(\mu_0, \mu_1^\tau).
        \end{aligned}
    \end{equation}
\end{proposition}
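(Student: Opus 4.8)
The plan is to combine two ``slow-movement'' estimates — one for the exact SW-JKO sequence $(\mu_k^\tau)_k$ and one for the generative-model sequence $\tilde\mu_k := (g_\theta^{k,\tau})_\# p_Z$ produced by Algorithm \ref{alg:swjko_generativ_model} — with the triangle inequality for $SW_2$, and then telescope. Throughout I set $\kappa := \sqrt{2\big(\mathcal{F}(\mu_0) - \inf\mathcal{F}\big)}$, which is finite since $\mathcal{F}(\mu_0) < +\infty$ and $\mathcal{F}$ is bounded below, and I take $C := 2\kappa$. I use the conventions $\mu_0^\tau = \tilde\mu_0 = \mu_0$ and the standing (mild) assumption that each inner minimization is solved accurately enough that the returned iterate does not worsen the objective relative to a natural competitor — namely the warm start $\tilde\mu_k$ at step $k+1$ for $k\ge 1$, and $\mu_0$ at the first step.

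First I would record the single-step bounds. Testing the minimality of $\mu_{k+1}^\tau$ in \eqref{swjko} against the competitor $\mu_k^\tau$ gives $\frac{1}{2\tau}SW_2^2(\mu_{k+1}^\tau,\mu_k^\tau) + \mathcal{F}(\mu_{k+1}^\tau) \le \mathcal{F}(\mu_k^\tau)$; since $\mathcal{F}$ is non-increasing along $(\mu_k^\tau)_k$ (Proposition \ref{prop:nonincreasing}) we have $\mathcal{F}(\mu_k^\tau)\le\mathcal{F}(\mu_0)$ while $\mathcal{F}(\mu_{k+1}^\tau)\ge\inf\mathcal{F}$, hence $SW_2(\mu_{k+1}^\tau,\mu_k^\tau)\le\kappa\tau^{1/2}$. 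The same computation for the inner problem at step $k+1$ of Algorithm \ref{alg:swjko_generativ_model}, using $\tilde\mu_k$ as competitor (which also gives $\mathcal{F}(\tilde\mu_{k+1})\le\mathcal{F}(\tilde\mu_k)\le\mathcal{F}(\mu_0)$), yields $SW_2(\tilde\mu_{k+1},\tilde\mu_k)\le\kappa\tau^{1/2}$ for $k\ge1$, and likewise $SW_2(\tilde\mu_1,\mu_0)\le\kappa\tau^{1/2}$ at the first step.

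Next I would assemble the one-step recursion. Inserting $\tilde\mu_k$ and $\mu_k^\tau$ and applying the triangle inequality for $SW_2$,
\begin{equation*}
    SW_2(\tilde\mu_{k+1},\mu_{k+1}^\tau) \le SW_2(\tilde\mu_{k+1},\tilde\mu_k) + SW_2(\tilde\mu_k,\mu_k^\tau) + SW_2(\mu_k^\tau,\mu_{k+1}^\tau) \le SW_2(\tilde\mu_k,\mu_k^\tau) + 2\kappa\tau^{1/2},
\end{equation*}
which is the first claimed inequality with $C=2\kappa$. Iterating it down to index $1$ gives $SW_2(\tilde\mu_{k+1},\mu_{k+1}^\tau)\le SW_2(\tilde\mu_1,\mu_1^\tau) + kC\tau^{1/2}$, and a last triangle inequality through $\mu_0$ (using $\tilde\mu_0=\mu_0$ and $SW_2(\tilde\mu_1,\mu_0)\le\kappa\tau^{1/2}\le C\tau^{1/2}$) gives $SW_2(\tilde\mu_1,\mu_1^\tau)\le C\tau^{1/2} + SW_2(\mu_0,\mu_1^\tau)$. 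Combining the two displays yields the second inequality.

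The manipulations are routine; the only genuine subtlety — which must be read as a hypothesis rather than proved — is the assumed accuracy of the inner optimizations, since the estimate $SW_2(\tilde\mu_{k+1},\tilde\mu_k)\le\kappa\tau^{1/2}$ (and its step-$1$ analogue) is precisely what follows from ``the returned iterate does not worsen the objective beyond the value at a natural competitor'' and is the load-bearing bound. One should also keep $\mu_{k+1}^\tau$ fixed throughout as the exact SW-JKO iterate obtained from $\mu_k^\tau$, whose existence is guaranteed by Proposition \ref{prop:minimizer}, so that the exact slow-movement estimate is available at every index.
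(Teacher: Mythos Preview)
Your proof is correct and follows essentially the same route as the paper's: both derive the one-step ``slow-movement'' bounds $SW_2(\mu_{k+1}^\tau,\mu_k^\tau)\le\sqrt{2A\tau}$ and $SW_2(\tilde\mu_{k+1},\tilde\mu_k)\le\sqrt{2A\tau}$ (with $A=\mathcal{F}(\mu_0)-\inf\mathcal{F}$), combine them via the triangle inequality to get the recursion, and then iterate with the same constant $C=2\sqrt{2A}$. Your write-up is in fact more careful than the paper's on two points: you make explicit the standing assumption that the inner optimization does not worsen the objective relative to the warm start (the paper hides this behind ``by the same reasoning''), and you spell out the base case that produces the extra $SW_2(\mu_0,\mu_1^\tau)$ term via a final triangle inequality through $\mu_0$.
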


\begin{proof} % See Appendix \ref{proof_bound_error}.
    The bound can be found by applying Theorem 3.1 in \citep{hwang2021deep} with $X=(\mathcal{P}_2(\mathbb{R}^d, SW_2)$, and then by applying a straightforward induction. We report here the proof for the sake of completeness.
    
    Let $k\in\mathbb{N}$, then since $\mu_{k+1}^\tau$ is the minimizer of \eqref{swjko},
    \begin{equation}
        \mathcal{F}(\mu_{k+1}^\tau)+\frac{SW_2^2(\mu_{k+1}^\tau,\mu_k^\tau)}{2\tau} \le \mathcal{F}(\mu_{k}^\tau)+\frac{SW_2^2(\mu_{k}^\tau,\mu_k^\tau)}{2\tau} = \mathcal{F}(\mu_k^\tau).
    \end{equation}
    Therefore, using that $\mathcal{F}$ is non increasing along $(\mu_k^\tau)_k$ (Proposition \ref{prop:nonincreasing}), we have $\mathcal{F}(\mu_k^\tau)\le \mathcal{F}(\mu_0)$. Moreover, using that $\mathcal{F}$ admits an infimum, we find
    \begin{equation}
        SW_2^2(\mu_k^\tau, \mu_{k+1}^\tau) \le 2\tau \big(\mathcal{F}(\mu_k^\tau)-\mathcal{F}(\mu_{k+1}^\tau)\big) \le 2\tau \big(\mathcal{F}(\mu_0)-\inf_\mu \mathcal{F}(\mu)\big).
    \end{equation}
    Let $A=\mathcal{F}(\mu_0)-\inf_\mu \mathcal{F}(\mu)$. By the same reasoning, we have that
    \begin{equation}
        SW_2^2\big((g_\theta^{k+1,\tau})_\# p_Z, (g_\theta^{k,\tau})_\# p_Z\big) \le 2\tau A.
    \end{equation}
    
    Now, using the triangular inequality, we have that
    \begin{equation}
        \begin{aligned}
            SW_2\big((g_\theta^{k+1,\tau})_\#p_Z,\mu_{k+1}^\tau\big) &\le SW_2\big((g_\theta^{k+1,\tau})_\#p_Z, (g_\theta^k)_\#p_Z\big) + SW_2\big((g_\theta^k)_\# p_Z, \mu_k^\tau\big) + SW_2(\mu_k^\tau, \mu_{k+1}^\tau) \\
            &= 2\sqrt{2\tau A} + SW_2\big((g_\theta^{k,\tau})_\#p_Z,\mu_k^\tau\big).
        \end{aligned}
    \end{equation}
    Let $C = 2\sqrt{2A}$, then by induction we find
    \begin{equation}
        SW_2\big((g_\theta^{k+1,\tau})_\#p_Z,\mu_{k+1}^\tau\big) \le (k+1)C\sqrt{\tau} + SW_2(\mu_0,\mu_1^\tau).
    \end{equation}
\end{proof}

\subsection{Sliced-Wasserstein results} \label{appendix:sw}

\paragraph{Link for 1D supported measures.}

Let $\mu,\nu\in\mathcal{P}(\mathbb{R}^d)$ supported on a line. For simplicity, we suppose that the measures are supported on an axis, \emph{i.e.} $\mu(x)=\mu_1(x_1)\prod_{i=2}^d \delta_0(x_i)$ and $\nu(x) = \nu_1(x_1)\prod_{i=2}^d \delta_0(x_i)$.

In this case, we have that
\begin{equation}
    W_2^2(\mu,\nu) = W_2^2(P^{e_1}_\#\mu,P^{e_1}_\#\nu) = \int_0^1 |F_{P^{e_1}_\#\mu}^{-1}(x) - F_{P^{e_1}_\#\nu}^{-1}(x)|^2\ \mathrm{d}x.
\end{equation}

On the other hand, let $\theta\in S^{d-1}$, then we have
\begin{equation}
    \begin{aligned}
        \forall y\in\mathbb{R},\ F_{P^\theta_\#\mu}(y) &= \int_{\mathbb{R}} \mathbb{1}_{]-\infty,y]}(x)\ P^\theta_\#\mu(\mathrm{d}x) \\
        &= \int_{\mathbb{R}^d} \mathbb{1}_{]-\infty, y]}(\langle \theta,x\rangle)\ \mu(\mathrm{d}x) \\
        &= \int_{\mathbb{R}} \mathbb{1}_{]-\infty, y]}(x_1\theta_1)\ \mu_1(\mathrm{d}x_1) \\
        &= \int_{\mathbb{R}} \mathbb{1}_{]-\infty, \frac{y}{\theta_1}]}(x_1)\ \mu_1(\mathrm{d}x_1) \\
        &= F_{P^{e_1}_\#\mu}\left(\frac{y}{\theta_1}\right).
    \end{aligned}
\end{equation}
Therefore, $F_{P^\theta_\#\mu}^{-1}(z) = \theta_1 F_{P^{e_1}_\#\mu}^{-1}(z)$ and 
\begin{equation}
    \begin{aligned}
        W_2^2(P^\theta_\#\mu,P^\theta_\#\nu) &= \int_0^1 |\theta_1 F_{P^{e_1}_\#\mu}^{-1}(z) - \theta_1 F_{P^{e_1}_\#\nu}^{-1}(z)|^2 \ \mathrm{d}z \\
        &= \theta_1^2 \int_0^1 |F_{P^{e_1}_\#\mu}^{-1}(z) - F_{P^{e_1}_\#\nu}^{-1}(z)|^2 \ \mathrm{d}z \\
        &= \theta_1^2 W_2^2(\mu,\nu).
    \end{aligned}
\end{equation}
Finally, using that $\int_{S^{d-1}}\theta\theta^T \mathrm{d}\lambda(\theta) = \frac{1}{d} I_d$, we can conclude that
\begin{equation*}
    SW_2^2(\mu,\nu) = \int_{S^{d-1}} \theta_1^2 W_2^2(\mu,\nu) \mathrm{d}\theta = \frac{W_2^2(\mu,\nu)}{d}.
\end{equation*}

\paragraph{Closed-form between Gaussians.} \label{sw_gaussians}

It is well known that there is a closed-form for the Wasserstein distance between Gaussians \citep{givens1984class}. If we take $\alpha=\mathcal{N}(\mu,\Sigma)$ and $\beta=\mathcal{N}(m,\Lambda)$ with $m,\mu\in\mathbb{R}^d$ and $\Sigma,\Lambda\in\mathbb{R}^{d\times d}$ two symmetric positive definite matrices, then 
\begin{equation}
    W_2^2(\alpha,\beta) = \|m-\mu\|_2^2 + \mathrm{Tr}\big(\Sigma+\Lambda - 2(\Sigma^{\frac12}\Lambda\Sigma^{\frac12})^{\frac12}\big).
\end{equation}

Let $\alpha=\mathcal{N}(\mu,\sigma^2 I_d)$ and $\beta=\mathcal{N}(m,s^2 I_d)$ two isotropic Gaussians. Here, we have
\begin{equation}
    \begin{aligned}
        W_2^2(\alpha,\beta) &= \|\mu-m\|^2_2 + \mathrm{Tr}(\sigma^2 I_d + s^2 I_d - 2(\sigma s^2\sigma I_d)^\frac12) \\
        &= \|\mu-m\|^2_2 + (\sigma-s)^2\ \mathrm{Tr}(I_d) \\
        &= \|\mu-m\|^2_2 + d(\sigma-s)^2.
        % &= d\cdot SW_2^2(\alpha,\beta).
        % &= \frac{1}{c_{d,2}^2} SW_2^2(\mu,\mu_k^\tau).
    \end{aligned}
\end{equation}
On the other hand, \citet{nadjahi2021fast} showed (Equation 73) that
\begin{equation}
    SW_2^2(\alpha,\beta) = \frac{\|\mu-m\|_2^2}{d} + (\sigma-s)^2 = \frac{W_2^2(\alpha,\beta)}{d}.
\end{equation}

In that case, the dilation of factor $d$ between WGF and SWGF clearly appears.

For more complicated gaussians, we may not have this equality. For example, let $\alpha=\mathcal{N}(\mu,D)$, $\beta=\mathcal{N}(m,\Delta)$ with $D$ and $\Delta$ diagonal. Then, $P^\theta_\#\alpha = \mathcal{N}(\langle \mu,\theta\rangle,\sum_{i=1}^N \theta_i^2 D_i)$, $P^\theta_\#\beta = \mathcal{N}(\langle m,\theta\rangle,\sum_{i=1}^N \theta_i^2 \Delta_i)$ and
\begin{equation}
    W_2^2(P^\theta_\#\Bar{\alpha},P^\theta_\#\Bar{\beta}) = \left(\sqrt{\sum_i \theta_i^2 D_i} - \sqrt{\sum_i\theta_i^2 \Delta_i}\right)^2
\end{equation}
with $\Bar{\alpha}$, $\Bar{\beta}$ the centered measures (noting $T^\alpha:x\mapsto x-\mu$, then $\Bar{\alpha}=T^\alpha_\#\alpha$). Hence, we have 
\begin{equation}
    \begin{aligned}
        SW_2^2(\alpha,\beta) &= \frac{\|\mu-m\|_2^2}{d}+SW_2^2(\Bar{\alpha},\Bar{\beta}) \\
        &= \frac{\|\mu-m\|^2_2}{d}+\int_{S^{d-1}} \left(\sqrt{\sum_i \theta_i^2 D_i} - \sqrt{\sum_i\theta_i^2 \Delta_i}\right)^2\ \mathrm{d}\lambda(\theta) \\
        &= \frac{\|\mu-m\|^2_2}{d} + \int_{S^{d-1}} \left(\sum_{i=1}^d \theta_i^2 D_i + \sum_{i=1}^d \theta_i^2 \Delta_i - 2 \sqrt{\sum_{i,j}\theta_i^2 \theta_j^2 D_i \Delta_j} \right)^2 \mathrm{d}\lambda(\theta) \\
        &= \frac{\|\mu-m\|^2_2}{d} + \sum_{i=1}^d D_i \int_{S^{d-1}}\theta_i^2 \mathrm{d}\lambda(\theta) + \sum_{i=1}^D \Delta_i \int_{S^{d-1}} \theta_i^2 \mathrm{d}\lambda(\theta) - 2 \int_{S^{d-1}} \sqrt{\sum_{i,j}\theta_i^2 \theta_j^2 D_i\Delta_j}\mathrm{d}\lambda(\theta) \\
        &= \frac{\|\mu-m\|^2_2}{d} + \frac{1}{d} \sum_{i=1}^d (D_i+\Delta_i) - 2 \int_{S^{d-1}}\sqrt{\sum_{i,j}\theta_i^2\theta_j^2 D_i\Delta_j}\mathrm{d}\lambda(\theta),
    \end{aligned}
\end{equation}
using that $\int_{S^{d-1}}\theta\theta^T \mathrm{d}\lambda(\theta) = \frac{1}{d} I_d$ and by applying Proposition 2 in \citep{nadjahi2021fast} to decompose $SW_2^2(\alpha,\beta)=\frac{\|\mu-m\|_2^2}{d}+SW_2^2(\Bar{\alpha},\Bar{\beta})$.

On the other hand, we have
\begin{equation}
    \begin{aligned}
        W_2^2(\alpha,\beta) &= \|\mu-m\|^2_2 + \mathrm{Tr}\big(D+\Delta-2(D^\frac12\Delta D^\frac12)^\frac12 \big) \\
        &= \|\mu-m\|^2_2 + \mathrm{Tr}(D+\Delta-2(D\Delta)^\frac12) \\
        &= \|\mu-m\|^2_2 + \sum_{i=1}^d (D_i+\Delta_i-2 D_i^\frac12 \Delta_i^\frac12) \\
        &= \|\mu-m\|^2_2 + \sum_{i=1}^d (D_i^\frac12 - \Delta_i^\frac12)^2.
    \end{aligned}
\end{equation}

Since $SW_2^2(\alpha,\beta)\le \frac{1}{d}W_2^2(\alpha,\beta)$, we have $\sum_{i=1}^d \sqrt{D_i\Delta_i}\le d\int_{S^{d-1}}\sqrt{\sum_{i,j}\theta_i^2\theta_j^2 D_i \Delta_j}\mathrm{d}\lambda(\theta)$.

Let $d=2$, $\sigma, s>0$ and $D=\mathrm{diag}(\sigma^2,\frac{\sigma^2}{2})$, $\Delta = \mathrm{diag}(\frac{s^2}{2},s^2)$. In this case, on the one hand, we have
\begin{equation}
    \sum_{i=1}^2 \sqrt{D_i\Delta_i} = \sqrt{2}\sigma s.
\end{equation}
On the other hand,
\begin{equation}
    \begin{aligned}
        2\int_{S^{1}}\sqrt{\sum_{i,j}\theta_i^2\theta_j^2 D_i \Delta_j}\mathrm{d}\lambda(\theta) &= \sqrt{2}\sigma s \int_{S^1} \sqrt{(\theta_1^2+\theta_2^2)^2 + \frac12 \theta_1^2\theta_2^2}\ \mathrm{d}\lambda(\theta) \\
        &= \sqrt{2}\sigma s \int_{S^1\cap \{\theta_1\neq 0,\theta_2\neq 0\}}\sqrt{(\theta_1^2 + \theta_2^2)^2 + \frac12 \theta_1^2\theta_2^2}\ \mathrm{d}\lambda(\theta) \\
        &> \sqrt{2}\sigma s \int_{S^1} (\theta_1^2 + \theta_2^2) \ \mathrm{d}\lambda(\theta) \\
        &= \sqrt{2}\sigma s,
    \end{aligned}
\end{equation}
using that $\lambda$ is absolutely continuous with respect to the Lebesgue measure and hence $\lambda(\{\theta_1=0\}\cup \{\theta_2=0\})=0$ and the fact that for every $\theta\in S^1\cap \{\theta_1\neq 0,\theta_2\neq 0\}$, $\sqrt{(\theta_1^2+\theta_2^2)^2 + \frac12 \theta_1^2\theta_2^2}>\sqrt{(\theta_1^2+\theta_2^2)^2}=\theta_1^2 + \theta_2^2$. From this strict inequality, we deduce that $W_2$ and $d\cdot SW_2$ are not always equal, even in this restricted case. Hence, even in this simple case, we cannot directly conclude that we have a dilation term of $d$ between Wasserstein and Sliced-Wasserstein gradient flows.

\section{Computation of the SW-JKO scheme in practice}

\subsection{Approximation of SW} \label{approximation_sw}

For each inner optimization problem
\begin{equation}
    \mu_{k+1}^\tau\in\argmin_{\mu\in\mathcal{P}_2(\mathbb{R}^d)}\ \frac{SW_2^2(\mu,\mu_k^\tau)}{2\tau} + \mathcal{F}(\mu),
\end{equation}
we need to approximate the sliced-Wasserstein distance. To do that, we used Monte-Carlo approximate by sampling $n_\theta$ directions $(\theta_i)_{i=1}^{n_\theta}$ following the uniform distribution on the hypersphere $S^{d-1}$ (which can be done by using the stochastic representation, \emph{i.e.} let $Z\sim\mathcal{N}(0,I_d)$, then $\theta=\frac{Z}{\|Z\|_2}\sim \mathrm{Unif}(S^{d-1})$ \citep{fang2018symmetric}). Let $\mu,\nu\in\mathcal{P}(\mathbb{R}^d)$, we approximate the sliced-Wasserstein distance as 
\begin{equation}
    \widehat{SW}_2^2(\mu,\nu) = \frac{1}{n_\theta}\sum_{i=1}^{n_\theta} W_2^2(P^{\theta_i}_\#\mu,P^{\theta_i}_\#\nu).
\end{equation}
In practice, we compute it for empirical distributions $\hat{\mu}_n$ and $\hat{\nu}_m$, and we approximate the one dimensional Wasserstein distance 
\begin{equation}
    W_2^2(P^\theta_\#\hat{\mu}_n, P^\theta_\#\hat{\nu}_m) = \int_0^1 | F_{P^\theta_\#\hat{\mu}_n}^{-1}(u)- F_{P^\theta_\#\hat{\nu}_m}^{-1}(u)|^2\ \mathrm{d}u
\end{equation}
by the rectangle method.

Overall, the complexity is in $O(n_\theta (n\log n + m\log m))$ where $n$ (resp. $m$) denotes the number of particles of $\hat{\mu}_n$ (resp. $\hat{\nu}_m$).

\subsection{Algorithms to solve the SW-JKO scheme} \label{algorithms_swjko}

We provide here the algorithms used to solve the SW-JKO scheme (\ref{swjko}) for the discrete grid (Section \ref{discrete_grid}) and for the particles (Section \ref{particles_scheme}).

\paragraph{Discrete grid.}

\begin{algorithm}[t]
   \caption{SW-JKO with Discrete Grid}
   \label{alg:swjko_discrete}
    \begin{algorithmic}
       \STATE {\bfseries Input:} $\mu_0$ the initial distribution with density $\rho_0$, $K$ the number of SW-JKO steps, $\tau$ the step size, $\mathcal{F}$ the functional, $N_e$ the number of epochs to solve each SW-JKO step, $(x_j)_{j=1}^N$ the grid
       \STATE Let $\rho^{(0)}=\left(\frac{\rho_0(x_j)}{\sum_{\ell=1}^N\rho_0(x_\ell)}\right)_{j=1}^N$
       \FOR{$k=1$ {\bfseries to} $K$}
       \STATE Initialize the weights $\rho^{(k+1)}$ (with for example a copy of $\rho^{(k)}$)
       \STATE // Denote $\mu_{k+1}^\tau = \sum_{j=1}^N \rho_j^{(k+1)}\delta_{x_j}$ and $\mu_k^\tau = \sum_{j=1}^N \rho_j^{(k)} \delta_{x_j}$
       \FOR{$i=1$ {\bfseries to} $N_e$}
       \STATE Compute $J(\mu_{k+1}^\tau) = \frac{1}{2\tau} SW_2^2(\mu_k^\tau, \mu_{k+1}^\tau)+\mathcal{F}(\mu_{k+1}^\tau)$
       \STATE Backpropagate through $J$ with respect to $\rho^{(k+1)}$
       \STATE Perform a gradient step
       \STATE Project on the simplex $\rho^{(k+1)}$ using the algorithm of \citet{condat2016fast}
       \ENDFOR
       \ENDFOR
    \end{algorithmic}
\end{algorithm}

We recall that in that case, we model the distributions as $\mu_k^\tau = \sum_{i=1}^N \rho_i^{(k)}\delta_{x_i}$ where we use $N$ samples located at $(x_i)_{i=1}^N$ and $(\rho_i^{(k)})_{i=1}^N$ belongs to the simplex $\Sigma_n$. Hence, the SW-JKO scheme at step $k+1$ rewrites 
\begin{equation}
    \min_{(\rho_i)_i\in\Sigma_N}\ \frac{SW_2^2(\sum_{i=1}^N\rho_i\delta_{x_i},\mu_k^\tau)}{2\tau}+\mathcal{F}(\sum_{i=1}^N\rho_i\delta_{x_i}).
\end{equation}

We report in Algorithm \ref{alg:swjko_discrete} the whole procedure.

\paragraph{Particle scheme.}

In this case, we model the distributions as empirical distributions and we try to optimize the positions of the particles. Hence, we have $\mu_k^\tau=\frac{1}{N}\sum_{i=1}^N \delta_{x_i^{(k)}}$ and the problem (\ref{swjko}) becomes
\begin{equation}
    \min_{(x_i)_i}\ \frac{SW_2^2(\frac{1}{N}\sum_{i=1}^N \delta_{x_i}, \mu_k^\tau)}{2\tau}+\mathcal{F}(\frac{1}{N}\sum_{i=1}^N \delta_{x_i}).
\end{equation}

In this case, we provide the procedure in Algorithm \ref{alg:swjko_particles}.

\begin{algorithm}[t]
   \caption{SW-JKO with Particles}
   \label{alg:swjko_particles}
    \begin{algorithmic}
       \STATE {\bfseries Input:} $\mu_0$ the initial distribution, $K$ the number of SW-JKO steps, $\tau$ the step size, $\mathcal{F}$ the functional, $N_e$ the number of epochs to solve each SW-JKO step, $N$ the batch size
       \STATE Sample $(x^{(0)}_j)_{j=1}^N\sim \mu_0$ i.i.d
       \FOR{$k=1$ {\bfseries to} $K$}
       \STATE Initialize $N$ particles $(x^{(k+1)}_j)_{j=1}^N$ (with for example a copy of $(x^{(k)}_j)_{j=1}^N$)
       \STATE // Denote $\mu_{k+1}^\tau = \frac{1}{N}\sum_{j=1}^N \delta_{x_j^{(k+1)}}$ and $\mu_k^\tau = \frac{1}{N}\sum_{j=1}^N \delta_{x_j^{(k)}}$
       \FOR{$i=1$ {\bfseries to} $N_e$}
       \STATE Compute $J(\mu_{k+1}^\tau) = \frac{1}{2\tau} SW_2^2(\mu_k^\tau, \mu_{k+1}^\tau)+\mathcal{F}(\mu_{k+1}^\tau)$
       \STATE Backpropagate through $J$ with respect to $(x_j^{(k+1)})_{j=1}^N$
       \STATE Perform a gradient step
       \ENDFOR
       \ENDFOR
    \end{algorithmic}
\end{algorithm}

\section{Additional experiments}

\subsection{Dynamic of Sliced-Wasserstein gradient flows} \label{appendix_dynamic_swgf}

The Fokker-Planck equation (\ref{FokkerPlanck}) is the Wasserstein gradient flow of the functional (\ref{FokkerPlanckFunctional}). Moreover, it is well-known to have a counterpart  stochastic differential equation (SDE) (see \emph{e.g.} \citet[Chapter 11]{mackey2011time}) of the form
\begin{equation}
    \mathrm{d}X_t=-\nabla V(X_t)\mathrm{d}t+\sqrt{2\beta}\ \mathrm{d}W_t
\end{equation}
with $(W_t)_t$ a Wiener process. This SDE is actually the well-known Langevin equation. Hence, by approximating it using the Euler-Maruyama scheme, we recover the Unadjusted Langevin Algorithm (ULA) \citep{roberts1996exponential,wibisono2018sampling}.

For 
\begin{equation}
    V(x)=\frac12 (x-m)^TA(x-m),
\end{equation}
with $A$ symmetric and definite positive, we obtain an Ornstein-Uhlenbeck process \citep[Chapter 8]{le2016brownian}. If we choose $\mu_0$ as a Gaussian $\mathcal{N}(m_0,\Sigma_0)$, then we know the Wasserstein gradient flow $\mu_t$ in closed form \citep{wibisono2018sampling,vatiwutipong2019alternative}, for all $t>0$, $\mu_t=\mathcal{N}(m_t,\Sigma_t)$ with
\begin{equation}
    \begin{cases}
        m_t = m + e^{-tA}(m_0-m) \\
        \Sigma_t = e^{-tA}\Sigma_0 (e^{-tA})^T + A^{-\frac12}(I-e^{-2tA})(A^{-\frac12})^T.
    \end{cases}
\end{equation}

\paragraph{Comparison of the evolution of the diffusion between SWGFs and WGFs.}

For this experiment, we model the density using RealNVPs \citep{dinh2016density}. More precisely, we use RealNVPs with 5 affine coupling layers, using FCNN for the scaling and shifting networks with 100 hidden units and 5 layers. In both experiments, we always start the scheme with $\mu_0=\mathcal{N}(0,I)$ and take $n_\theta=1000$ projections to approximate the sliced-Wasserstein distance. We randomly generate a target Gaussian (using ``make\_spd\_matrix'' from scikit-learn \citep{scikit-learn} to generate a random covariance with 42 as seed).

\begin{figure}[t]
    \centering
    \hspace*{\fill}
    \subfloat[Without dilation]{\label{a_F_NonIsotropicGaussians_appendix}\includegraphics[width={0.48\columnwidth}]{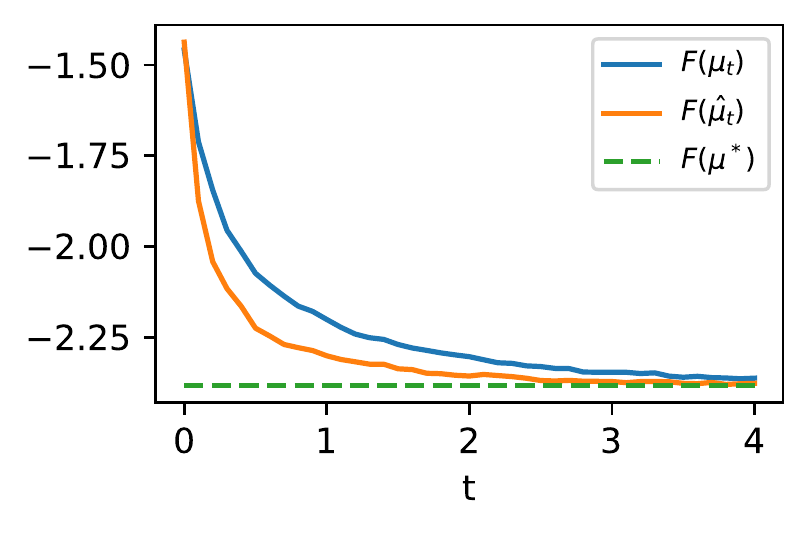}} \hfill
    \subfloat[With dilation]{\label{b_F2_NonIsotropicGaussians_appendix}\includegraphics[width={0.48\columnwidth}]{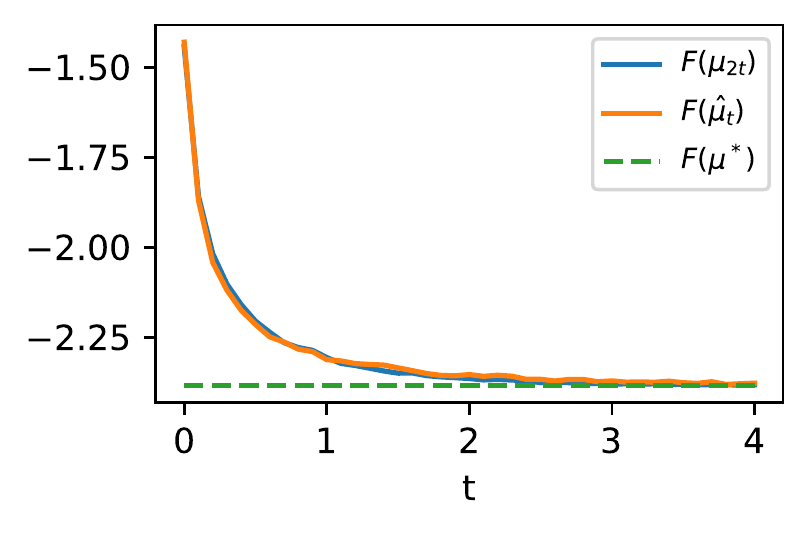}}
    \hspace*{\fill}
    \caption{Evolution of the functional \eqref{FokkerPlanckFunctional} along the WGF $\mu_t$ and the learned SWGF $\hat{\mu}_t$. We observe a dilation of parameter 2 between the WGF and the SWGF.}
    \label{fig_appendix:dilation}
\end{figure}

We look at the evolution of the distributions learned between $t=0$ and $t=4$ with a time step of $\tau=0.1$. We compare it with the true Wasserstein gradient flow. On Figure \ref{a_F_NonIsotropicGaussians_appendix}, we observe that they do not seem to match. However, they do converge to the same stationary value. On Figure \ref{b_F2_NonIsotropicGaussians_appendix}, we plot the functional along the true WGF dilated of a factor $d=2$. We see here that the two curves are matching and we observed the same behaviour in higher dimension. Even though we cannot conclude on the PDE followed by SWGFs, this reinforces the conjecture that the SWGF obtained with a step size of $\frac{\tau}{d}$ (\emph{i.e.} using the scheme (\ref{sw_jko_d})) is very close to the WGF obtained with a step size of $\tau$. We also report here the evolution of the mean (Fig. \ref{fig:means_gaussians_appendix}) and of the variance (Fig. \ref{fig:var_gaussians}). For the mean, it follows as expected the same diffusion. For the variance, it is less clear but it is hard to conclude since there are potentially optimization errors.

\begin{figure}[t]
    \centering
    \hspace*{\fill}
    \subfloat{\label{c_mu1}\includegraphics[scale=1]{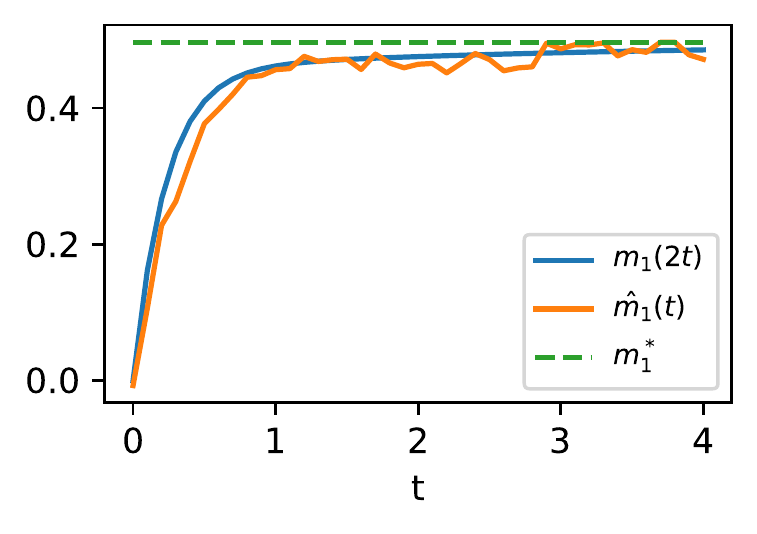}}
    \hfill
    \subfloat{\label{d_mu2}\includegraphics[scale=1]{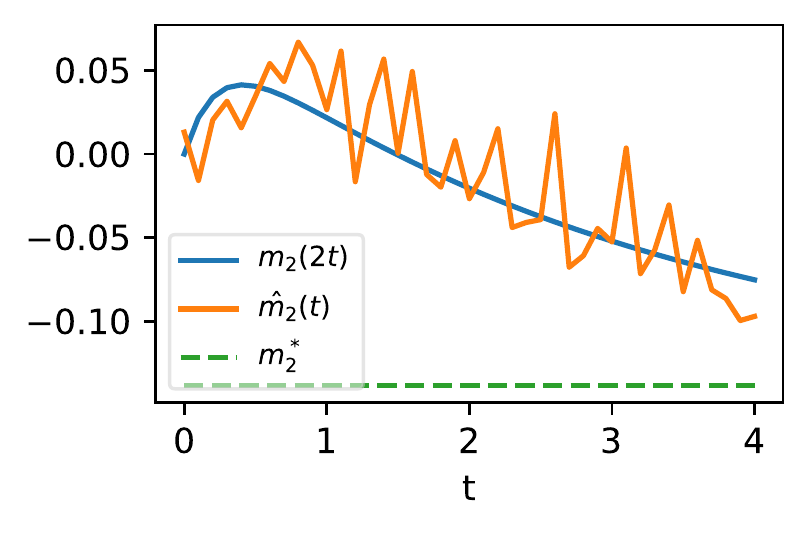}}
    \hspace*{\fill}
    \caption{Evolution of the mean taking into account the dilation parameter. $\mu$ denotes the true mean of WGF, $\hat{\mu}$ the mean obtained through SW-JKO (\ref{swjko}) with $\tau=0.05$ and $\mu_*$ the mean of the stationary measure. We observe that the mean of approximated measure obtained through SW-JKO seems to follow the one of the WGF.}
    \label{fig:means_gaussians_appendix}
\end{figure}

\begin{figure}[t]
    \centering
    \includegraphics[scale=0.5]{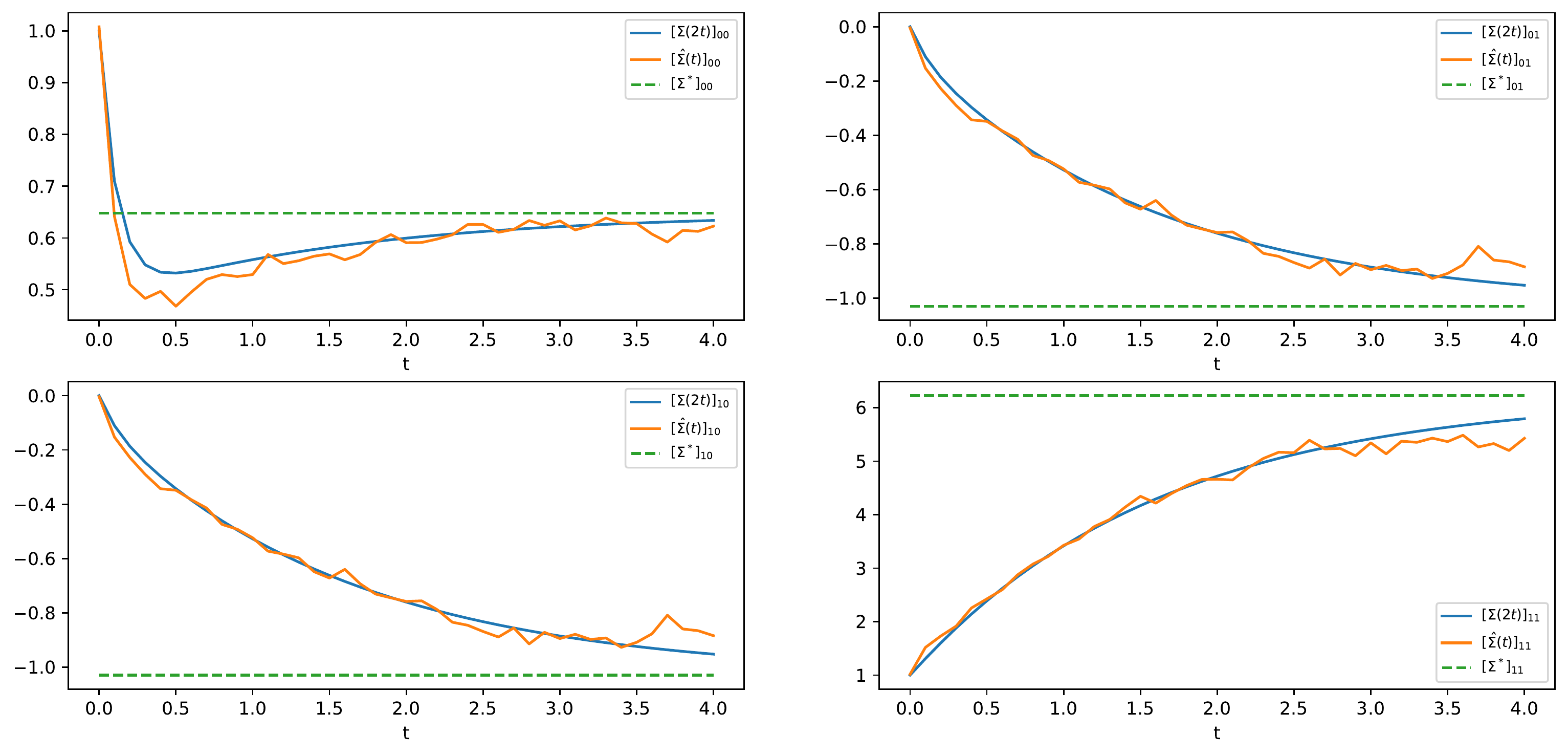}
    \caption{Evolution of the components of the covariance matrix taking into account the dilation parameter. $\Sigma$ denotes the true covariance matrix of WGF, $\hat{\Sigma}$ the covariance matrix obtained through SW-JKO (\ref{swjko}) with $\tau=0.05$ and $\Sigma^*$ the covariance matrix of the stationary distribution. We observe some difference between WGF and SWGF.}
    \label{fig:var_gaussians}
\end{figure}

\paragraph{Comparison between JKO-ICNN and SW-JKO.} \label{appendix:cv_true_distr}

Following the experiment conducted by \cite{mokrov2021largescale} in section 4.2, we plot in Figure \ref{fig:xp_lowd_gaussians} the symmetric Kullback-Leibler (SymKL) divergence over dimensions between approximated distributions and the true WGF at times $t=0.5$ and $t=0.9$. % We use $\tau=0.05$ for JKO-ICNN. To have the same number of steps of SW-JKO, we choose $\tau=\frac{0.05}{d}$ (or $\tau=0.05$ in the corrected SW-JKO scheme (\ref{sw_jko_d})).
We take the mean over 15 random gaussians (generated using the scikit-learn function \citep{scikit-learn} ``make\_spd\_matrix'' for the covariance matrices, and generating the means with a standard normal distribution) for dimensions $d\in\{2,\dots,12\}$. 

For each target Gaussian, we run the SW-JKO dilated scheme (\ref{sw_jko_d}) with $\tau=0.05$ for a RealNVP normalizing flow. We compare it with JKO-ICNN with also $\tau=0.05$ and with Euler-Maruyama with $10^3$, $10^4$ and $5\cdot 10^4$ particles and a step size of $10^{-3}$. For JKO-ICNN, we use, as \cite{mokrov2021largescale}, DenseICNN with convex quadratic layers introduced in \citep{korotin2019wasserstein} and available at \url{https://github.com/iamalexkorotin/Wasserstein2Barycenters}. For the JKO-ICNN scheme, we use our own implementation.

We compute the symmetric Kullback-Leibler divergence between the ground truth of WGF $\mu^*$ and the distribution $\hat{\mu}$ approximated by the different schemes at times $t=0.5$ and $t=0.9$. The symmetric Kullback-Leibler divergence is obtained as
\begin{equation}
    \mathrm{SymKL}(\mu^*,\hat{\mu})=\mathrm{KL}(\mu^*||\hat{\mu}) + \mathrm{KL}(\hat{\mu}||\mu^*).
\end{equation}
To approximate it, we generate $10^4$ samples of each distribution and evaluate the density at those samples.

If we note $g_\theta$ a normalizing flows, $p_Z$ the distribution in the latent space and $\rho=(g_\theta)_\# p_Z$, then we can evaluate the log density of $\rho$ by using the change of variable formula. Let $x=g_\theta(z)$, then
\begin{equation}
    \log(\rho(x))=\log(p_Z(z))-\log|\det J_{g_\theta}(z)|.
\end{equation}
We choose RealNVPs \citep{dinh2016density} for the simplicity of the transformations and the fact that we can compute efficiently the determinant of the Jacobian (since we have a closed-form). A RealNVP flow is a composition of transformations $T$ of the form
\begin{equation}
    \forall z\in\mathbb{R}^d,\ x = T(z) = \big(z^1, \exp(s(z^1))\odot z^2 + t(z^1)\big)
\end{equation}
where we write $z=(z^1,z^2)$ and with $s$ and $t$ some neural networks. To modify all the components, we use also swap transformations (\emph{i.e.} $(z^1,z^2)\mapsto (z^2,z^1)$). This transformation is invertible with $\log \det J_T(z) = \sum_i s(z^1_i)$.

For JKO-ICNN, we choose strictly convex ICNNs, and can hence invert them as well as compute the density. In this case, we do not have access to a closed-form for the Jacobian. Therefore, we used backpropagation to compute it. As this experiment is in low dimension, the computational cost is not too heavy. However, there exist stochastic methods to approximate it in greater dimension. We refer to \citep{huang2020convex} and \citep{alvarezmelis2021optimizing} for more explanations.

We approximate the functional by using Monte-Carlo approximation as in Section \ref{generative_models}.
%% add ref functional

For Euler-Maruyama, as in \citep{mokrov2021largescale}, we use kernel density estimation in order to approximate the density. We use the scipy implementation \citep{2020SciPy-NMeth} ``gaussian\_kde'' with the Scott's rule to choose the bandwidth.

Finally, we report on the Figure \ref{fig:xp_lowd_gaussians} the mean of the log of the symmetric Kullback-Leibler divergence over 15 Gaussians in each dimension and the 95\% confidence interval.

For the training of the neural networks, we use an Adam optimizer \citep{kingma2014adam} with a learning rate of $10^{-4}$ for RealNVP (except for the 1st iteration where we take a learning rate of $5\cdot 10^{-3}$) and of $5\cdot 10^{-3}$ for JKO-ICNN. At each inner optimization step, we start from a deep copy of the last neural network, and optimize RealNVP for 200 epochs and ICNNs for 500 epochs, with a batch size of 1024.

We see on Figure \ref{fig:xp_lowd_gaussians} that the results are better than the particle schemes obtained with Euler-Maruyama (EM) with a step size of $10^{-3}$ and with either $10^3$, $10^4$ or $5\cdot 10^4$ particles in dimension higher than 2. However, JKO-ICNN obtained better results. % We hypothesize that it is due to a slightly different dynamic as discussed in Section \ref{paragraph:limit_pde}. %We hypothesize that it is due partly to the explicit modeling instead of the implicit one.

\begin{figure*}[t]
    \centering
    \includegraphics[width=\columnwidth]{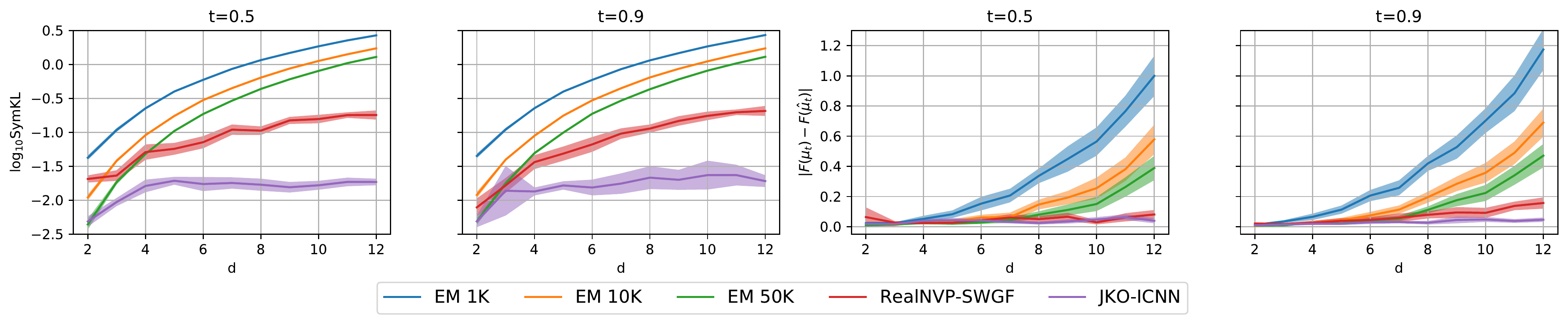}
    \caption{On the left: SymKL divergence at time $0.5$ and $0.9$ between the groundtruth of the Fokker-Planck equation at time $t$ and the solution of the SW Gradient Flow at time $t$. On the right: Absolute error between the functionals evaluated for WGF and SWGF at time $t$.} 
    \label{fig:xp_lowd_gaussians}
\end{figure*}

\subsection{Convergence to stationary distribution} \label{appendix_cv_stationary}

Here, we want to demonstrate that, through the SW-JKO scheme, we are able to find good minima of functionals using simple generative models.

\paragraph{Gaussian.}

For this experiment, we place ourselves in the same setting of Section \ref{section:xp_stationary_FKP}. We start from $\mu_0=\mathcal{N}(0,I)$ and use a step size of $\tau=0.1$ for 80 iterations in order to match the stationary distribution. In this case, the functional is
\begin{equation}
    \mathcal{F}(\mu)=\int V(x)\mathrm{d}\mu(x)+\mathcal{H}(\mu)
\end{equation}
with $V(x)=-\frac12 (x-b)^T A (x-b)$, and the stationary distribution is $\rho^*(x) \propto e^{-V(x)}$, hence $\rho^* = \mathcal{N}(b,A^{-1})$. 

We generate 15 Gaussians for $d$ between 2 and 12, and $d\in\{20,30,40,50,75,100\}$. Due to the length of the diffusion, and to numerical unstabilities, we do not report results obtained with JKO-ICNN. In Figure \ref{fig:unstabilities_jkoicnn}, we showed the results in low dimension (for $d\in\{2,\dots,12\}$) and the unstability of JKO-ICNN. We report on Figure \ref{fig:gaussian_stationary1} the SymKL also in higher dimension.

We use 200 epochs of each inner optimization and an Adam optimizer with a learning rate of $5\cdot 10^{-3}$ for the first iteration and $10^{-3}$ for the rest. We also use a batch size of 1000 sample.

\begin{figure}[t]
    \centering
    \includegraphics[scale=1]{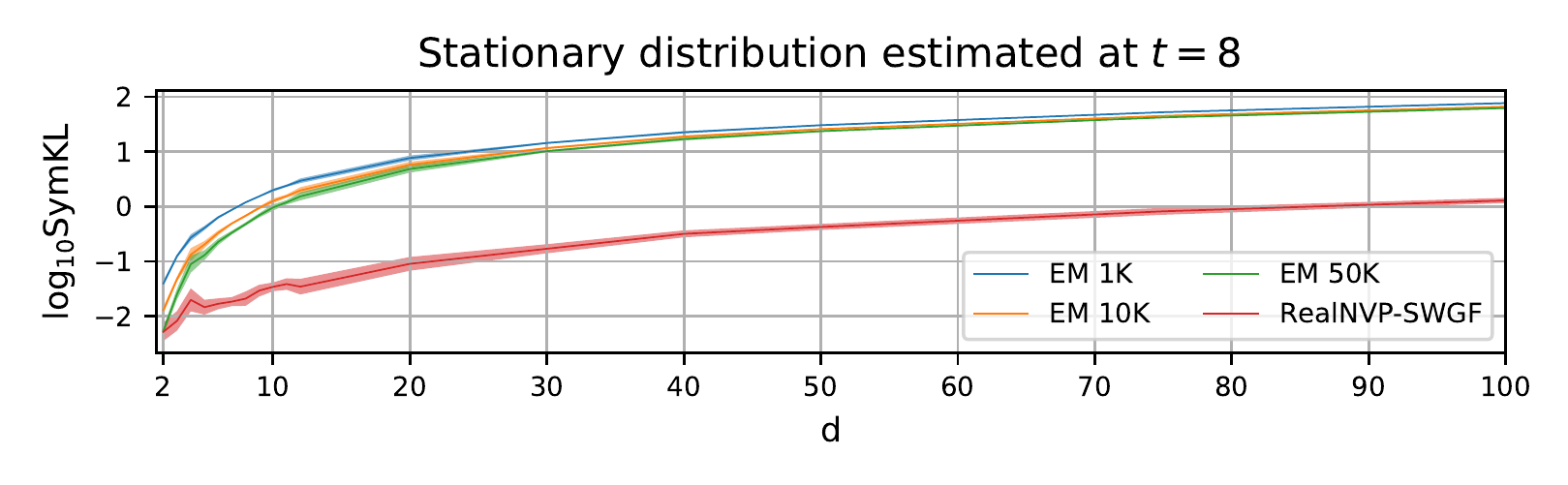}
    \caption{Symmetric KL divergence between the learned distribution at time $t=8$ and the true stationary solution on Gaussians}
    \label{fig:gaussian_stationary1}
\end{figure}

\paragraph{Bayesian logistic regression.} \label{Appendix_BLR}

For the Bayesian logistic regression, we have access to covariates $s_1,\dots,s_n\in\mathbb{R}^d$ with their associated labels $y_1,\dots,y_n\in\{-1,1\}$. Following \citep{liu2016stein, mokrov2021largescale}, we put as prior on the regression weights $w$, $p_0(w|\alpha)=\mathcal{N}(w;0,\frac{1}{\alpha})$ with $p_0(\alpha)=\Gamma(\alpha;1,0.01)$. Therefore, we aim at learning the posterior $p(w,\alpha|y)$:
\begin{equation*}
    p(w,\alpha|y) \propto p(y|w,\alpha) p_0(w|\alpha) p_0(\alpha) = p_0(\alpha) p_0(w|\alpha) \prod_{i=1}^n p(y_i|w,\alpha)
\end{equation*}
where $p(y_i|w,\alpha) = \sigma(w^T s_i)^{\frac{1+y_i}{2}}(1-\sigma(w^T s_i))^{\frac{1-y}{2}}$ with $\sigma$ the sigmoid. To evaluate $\mathcal{V}(\mu)=\int V(x)\ \mathrm{d}\mu(x)$, we resample data uniformly.

In our context, let $V(x) = -\log\big(p_0(\alpha)p_0(w|\alpha)p(y|w,\alpha)\big)$, then using $\mathcal{F}(\mu) = \int V \mathrm{d}\mu + \mathcal{H}(\mu)$ as functional, we know that the limit of the stationary solution of Fokker-Planck is proportional to $e^{-V} = p(w,\alpha|y)$.

Following \cite{mokrov2021largescale, liu2016stein}, we use the 8 datasets of \cite{mika1999fisher} and the covertype dataset (\url{https://www.csie.ntu.edu.tw/~cjlin/libsvmtools/datasets/binary.html}).

We report in Table \ref{tab:datasets} the characteristics of the different datasets. The datasets are loaded using the code of \cite{mokrov2021largescale} (\url{https://github.com/PetrMokrov/Large-Scale-Wasserstein-Gradient-Flows}). We split the dataset between train set and test set with a 4:1 ratio.

\begin{table}[t]
    \centering
    \caption{Number of features, of samples and batch size of each dataset.}
    \begin{tabular}{cccccccccc}
         & covtype & german & diabetis & twonorm & ringnorm & banana & splice & waveform & image  \\ \toprule
        features & 54 & 20 & 8 & 20 & 20 & 2 & 60 & 21 & 18 \\
        samples & 581012 & 1000 & 768 & 7400 & 7400 & 5300 & 2991 & 5000 & 2086 \\
        batch size & 512 & 800 & 614 & 1024 & 1024 & 1024 & 512 & 512 & 1024 \\
        \bottomrule
    \end{tabular}
    \label{tab:datasets}
\end{table}

We report in Table \ref{tab:hyperparams_mine} the hyperparameters used for the results reported in Table \ref{tab:blr}. We also tuned the time step $\tau$ since for too big $\tau$, we observed bad results, which the SW-JKO scheme should be a good approximation of the SWGF only for small enough $\tau$. 

Moreover, we reported in Table \ref{tab:blr} the mean over 5 training. For the results obtained with JKO-ICNN, we used the same hyperparameters as \citet{mokrov2021largescale}.

\begin{table}[t]
    \centering
    \caption{Hyperparameters for SWGFs with RealNVPs. nl: number of coupling layers in RealNVP, nh: number of hidden units of conditioner neural networks, lr: learning rate using Adam, JKO steps: number of SW-JKO steps, Iters by step: number of epochs for each SW-JKO step, $\tau$: the time step, batch size: number of samples taken to approximate the functional.}
    \begin{tabular}{cccccccccc}
         & covtype & german & diabetis & twonorm & ringnorm & banana & splice & waveform & image  \\ \toprule
        nl & 2 & 2 & 2 & 2 & 2 & 2 & 5 & 5 & 2 \\
        nh & 512 & 512 & 512 & 512 & 512 & 512 & 128 & 128 & 512 \\
        lr & $2e^{-5}$ & $1e^{-4}$ & $5e^{-4}$ & $1e^{-4}$ & $5e^{-5}$ & $1e^{-4}$ & $5e^{-4}$ & $1e^{-4}$ & $5e^{-5}$ \\
        JKO steps & 5 & 5 & 10 & 20 & 5 & 5 & 5 & 5 & 5 \\
        Iters by step & 1000 & 500 & 500 & 500 & 1000 & 500 & 500 & 500 & 500 \\
        $\tau$  & 0.1 & $10^{-6}$ & $5\cdot 10^{-6}$ & $10^{-8}$ & $10^{-6}$ & 0.1 & $10^{-6}$ & $10^{-8}$ & 0.1 \\
        batch size & 1024 & 1024 & 1024 & 1024 & 1024 & 1024 & 1024 & 512 & 1024 \\
        \bottomrule
    \end{tabular}
    \label{tab:hyperparams_mine}
\end{table}

\subsection{Influence of the number of projections} \label{appendix:impact_projs}

It is well known that the approximation of Sliced-Wasserstein is subject to the curse of dimensionaly through the Monte-Carlo approximation \citep{nadjahi2020statistical}. We provide here some experiment to quantify this influence. However, first note that the goal is not to minimize the Sliced-Wasserstein distance, but rather the functional, SW playing mostly a regularizer role. Experiments on the influence of the number of experiments to approximate the SW have already been conducted (see \emph{e.g.} Figure 2 in \citep{nadjahi2020statistical} or Figure 1 in \citep{deshpande2019max}).

Here, we take the same setting of Section \ref{section:xp_stationary_FKP}, \emph{i.e.} we generate 15 random Gaussians, and then vary the number of projections and report the Symmetric Kullback-Leibler divergence on Figure \ref{fig:impact_projs}. We observe that the results seem to improve with the number of projections until it reach a certain plateau. The plateau seems to be attained for a bigger number of dimension in high dimension.

% \begin{figure}[htpb]
%     \centering
%     \includegraphics{Figures/QuantitativeXp/impact_projs_all_d.pdf}
%     \caption{Impact of the number of projections for a fixed number of epochs.}
%     \label{fig:appendix_impact_projs}
% \end{figure}

% \begin{figure}[htpb]
%     \centering
%     \hspace*{\fill}
%     \subfloat[All dim]{\label{a_d=all_projs}\includegraphics[width={0.48\columnwidth}]{}} \hfill
%     \subfloat[Dimension 100]{\label{b_d=100_projs}\includegraphics[width={0.48\columnwidth}]{}}
%     \hspace*{\fill}
%     \caption{Impact of the number of projections for a fixed number of epochs.}
%     \label{fig:appendix_impact_projs2}
% \end{figure}

\subsection{Aggregation equations} \label{Aggregation}

Here, we use as functional 
\begin{equation}
    \mathcal{W}(\mu)=\iint W(x-y)\mathrm{d}\mu(x)\mathrm{d}\mu(y).
\end{equation}

\citet{carrillo2021primal} use a repulsive-attractive interaction potential, for $a>b\ge0$,
\begin{equation}
    W(x) = \frac{\|x\|^a}{a}-\frac{\|x\|^b}{b}
\end{equation}
using the convention $\frac{\|x\|^0}{0}=\ln(\|x\|)$. For some values of $a$ and $b$, there is existence of stable equilibrium state \citep{balague2013nonlocal}.

\begin{figure}[t]
    \centering
    \includegraphics[scale=0.4]{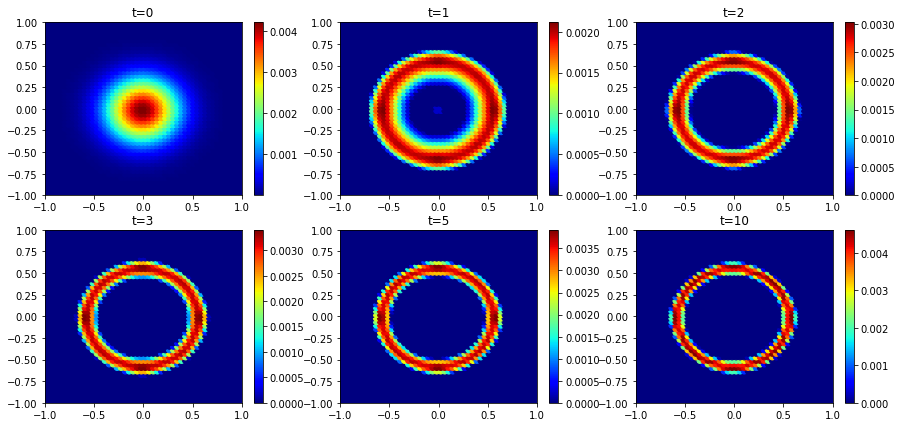}
    \caption{Density over time of the solution of the aggregation equation learned over the discre grid.}
    \label{fig:density_aggregation_equation}
\end{figure}

\begin{figure}[t]
    \centering
    \includegraphics[scale=0.4]{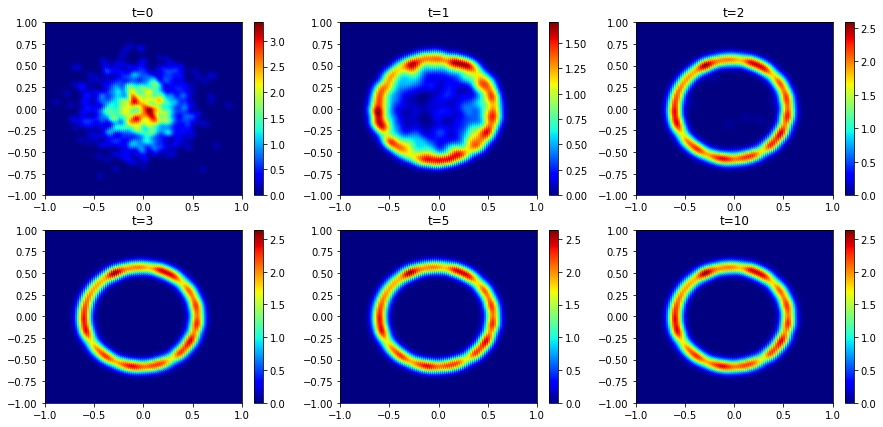}
    \caption{Density over time of the solution of the aggregation equation by learning particles.}
    \label{fig:density_aggregation_equation_particles}
\end{figure}

\begin{figure}[t]
    \centering
    \includegraphics[scale=0.4]{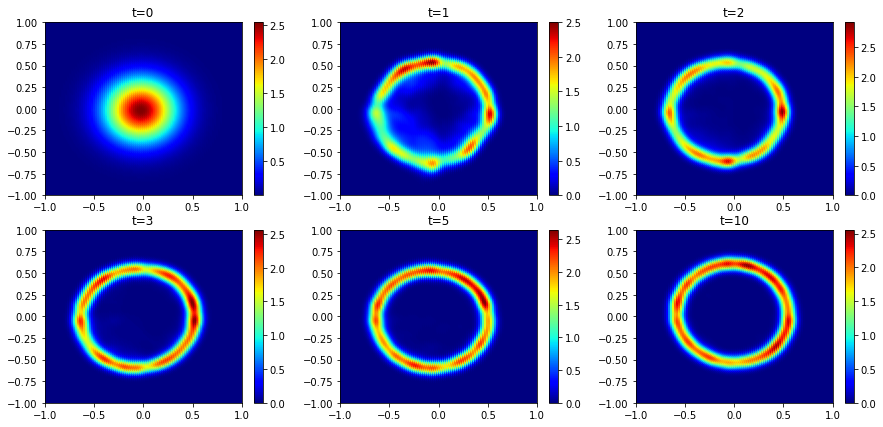}
    \caption{Density over time of the solution of the aggregation equation approximated with a FCNN.}
    \label{fig:density_aggregation_equation_mlp}
\end{figure}

\begin{figure}[t]
    \centering
    \includegraphics[scale=0.4]{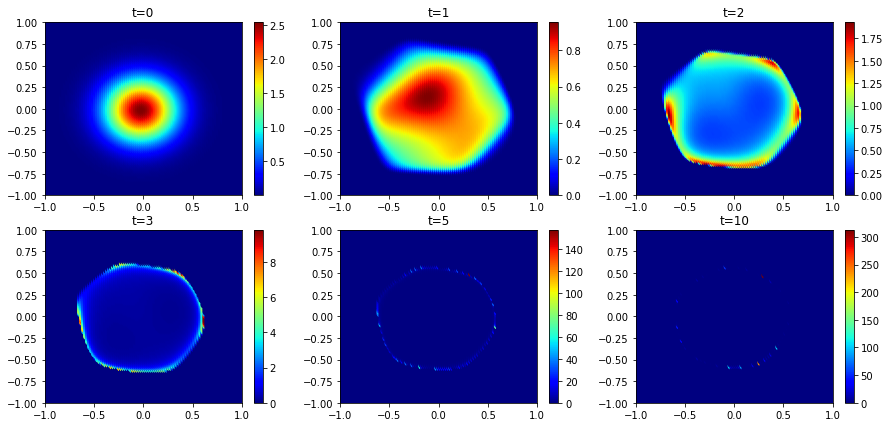}
    \caption{Density over time of the solution of the aggregation equation approximated with JKO-ICNN.}
    \label{fig:densities_JKO-ICNN}
\end{figure}

\begin{figure}[t]
    \centering
    \includegraphics[scale=0.4]{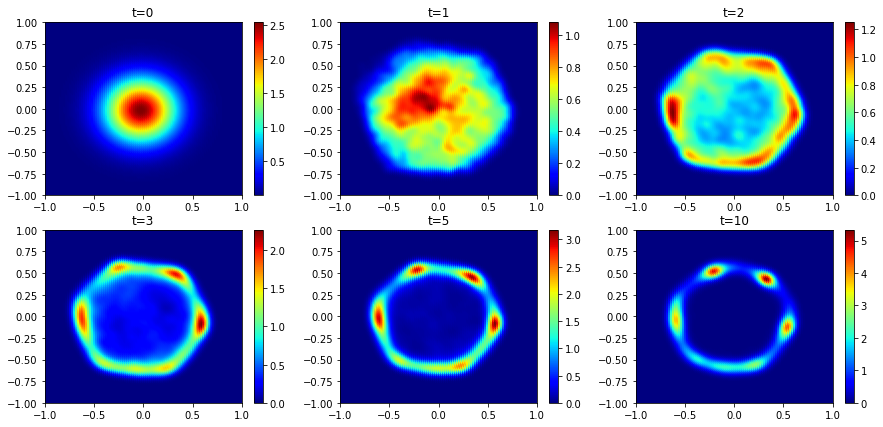}
    \caption{Density over time of the solution of the aggregation equation approximated with JKO-ICNN and kernel density estimator.}
    \label{fig:densities_JKO-ICNN_kde}
\end{figure}

\paragraph{Dirac Ring.}

First, for the Dirac ring example, we take $a=4$ and $b=2$. Then, $W$ is a repulsive-attractive interaction potential (repulsive in the short range, and attractive in the long range because $b<a$ \citep{balague2013nonlocal}), we show the densities over time obtained on a discrete grid (Figure \ref{fig:density_aggregation_equation}), by learning the position of the particles (Figure \ref{fig:density_aggregation_equation_particles}) and with the FCNN (Figure \ref{fig:density_aggregation_equation_mlp}). For the last two, the density reported is obtained with kernel density estimation where we chose by hand the bandwidth to match well the sampled points. We also report the evolution of the density for the JKO-ICNN scheme. In Figure \ref{fig:densities_JKO-ICNN}, we show the evolution of the density obtained by the change of variable formula. We observe that values seem to explode which may due to the fact that the stationary solution may not have a density or to numerical unstabilities. We report in Figure \ref{fig:densities_JKO-ICNN_kde} the densities obtained with a kernel density estimation. We also report the particles generated through a FCNN, the particle evolution and JKO-ICNN in Figure \ref{fig:samples_mlp_particles}. We observe that the particles match perfectly the ring. For the FCNN, there seem to be some noise. JKO-ICNN recover also well the ring but particles seem to not be uniformly distributed over the ring. 

For the SW-JKO scheme, we take $\tau=0.05$ and run it for 200 steps (from $t=0$ to $t=10$) starting from $\mu_0=\mathcal{N}(0,I)$. For JKO-ICNN, we choose $\tau=0.1$ and run it for 100 steps as the diffusion was really long.

We take a grid of $50\times 50$ samples on $[-1,1]^2$. To optimize the weights, we used an SGD optimizer with a momentum of 0.9 with a learning rate of $10^{-4}$ for 300 epochs by inner optimization scheme. For particles, we optimized 1000 particles (sampled initially from $\mu_0$) with an SGD optimizer with a momentum of 0.9, a learning rate of 1 and 500 epochs by JKO step. We take a FCNN composed of 5 hidden layers with 64 units and leaky relu activation functions. They are optimized with an Adam optimizer with a learning rate of $10^{-4}$ (except for the first iteration where we take a learning rate of $5\cdot 10^{-3}$) for 400 epochs by JKO step. For JKO-ICNN, we choose the same parameters as for the previous experiments. In any case, we take $n_\theta=10^4$ projections to approximate $SW_2$ and a batch size of 1000.

\begin{figure}[t]
    \centering
    \hspace*{\fill}
    \subfloat[Samples from the FCNN]{\label{a_cpf}\includegraphics[scale=0.4]{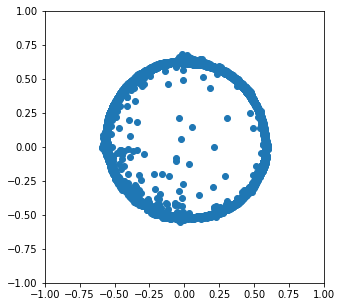}} \hfill
    \subfloat[Samples learned by optimizing over particles]{\label{b_JKO-ICNN}\includegraphics[scale=0.4]{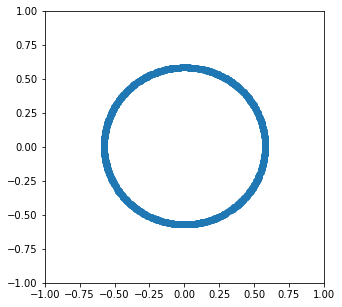}}
    \hfill
    \subfloat[Samples from JKO-ICNN]{\label{b_JKO-ICNN}\includegraphics[scale=0.4]{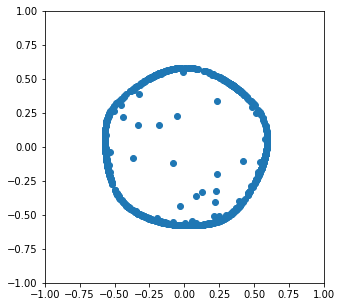}}
    \hspace*{\fill}
    \caption{Samples of the stationary distribution}
    \label{fig:samples_mlp_particles}
\end{figure}

On Figure \ref{fig:aggregation_equation_stationaries}, we plot on the two first columns the stationary density learned by the different methods. On the third column, we plot the evolution of the functional along the flows.% We observe that it seems that we still have a dilation between JKO-ICNN and SW-JKO.

The functionals seem to converge towards the same value. However, JKO-ICNN seems to not be able to capture the right distribution. We also note that the curve for the FCNN is not very smooth, which can probably be explained by the fact that we take independent samples at each step.

\begin{figure}[t]
    \centering
    \hspace*{\fill}
    \subfloat[Steady state on the discretized grid]{\label{a}\includegraphics[scale=0.2]{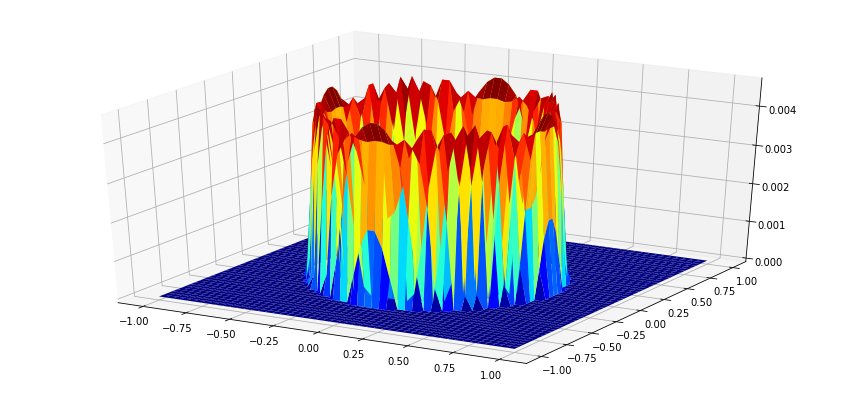}} \hfill
    \subfloat[Steady state on the discretized grid]{\label{b}\includegraphics[scale=0.27]{Figures/Aggregation/aggregation_density_discretized.png}} \hfill
    \subfloat[Evolution of the functional]{\label{c}\includegraphics[scale=0.3]{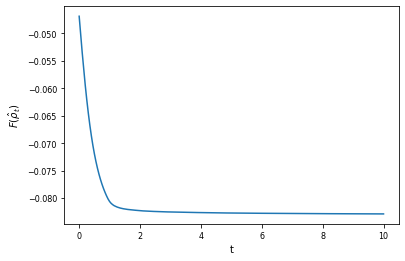}}    \hspace*{\fill}
 \\
    \hspace*{\fill}
    \subfloat[Steady state for the fully connected neural network]{\label{d}\includegraphics[scale=0.2]{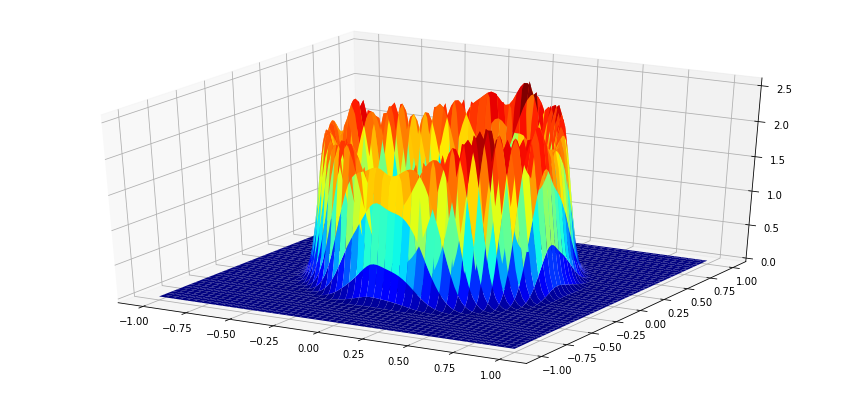}} \hfill
    \subfloat[Steady state for the fully connected neural network]{\label{e}\includegraphics[scale=0.27]{Figures/Aggregation/aggregation_density_mlp.png}} \hfill
    \subfloat[Evolution of the functional]{\label{f}\includegraphics[scale=0.3]{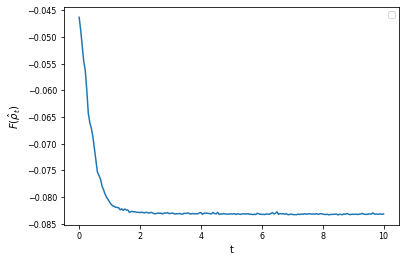}}
    \hspace*{\fill}
 \\
    \hspace*{\fill}
    \subfloat[Steady state for particles]{\label{d}\includegraphics[scale=0.2]{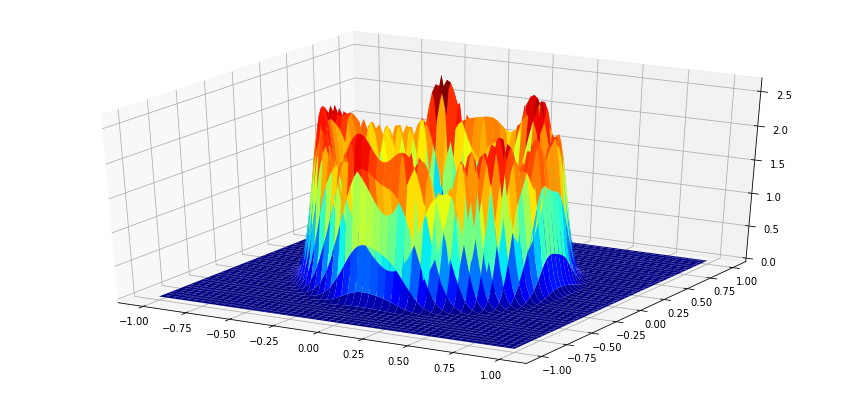}} \hfill
    \subfloat[Steady state for particles]{\label{e}\includegraphics[scale=0.27]{Figures/Aggregation/aggregation_density_particles.png}} \hfill
    \subfloat[Evolution of the functional]{\label{f}\includegraphics[scale=0.3]{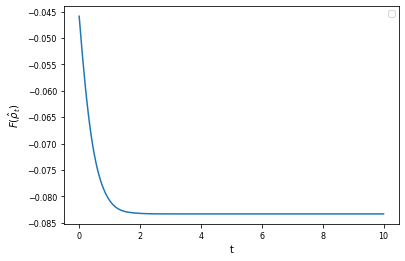}}
    \hspace*{\fill}
 \\
    \hspace*{\fill}
    \subfloat[Steady state for JKO-ICNN with KDE]{\label{d}\includegraphics[scale=0.2]{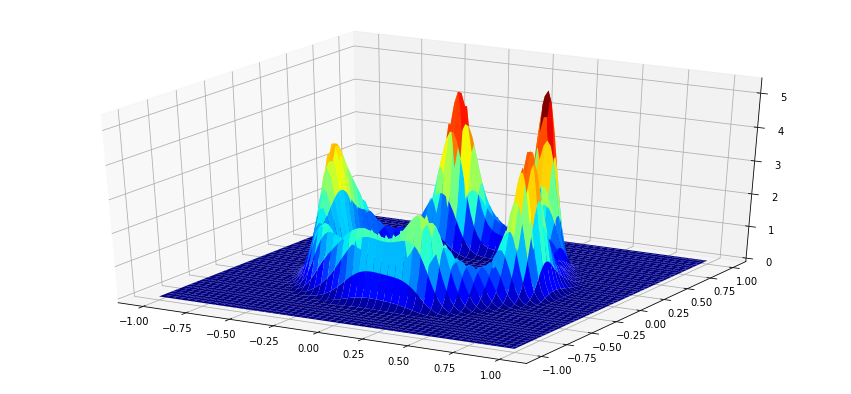}} \hfill
    \subfloat[Steady state for JKO-ICNN with KDE]{\label{e}\includegraphics[scale=0.27]{Figures/Aggregation/aggregation_density_jkoicnn_kde.png}} \hfill
    \subfloat[Evolution of the functional]{\label{f}\includegraphics[scale=0.3]{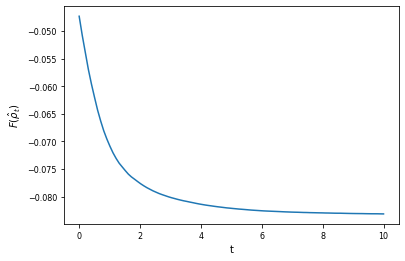}}
    \hspace*{\fill}
    
    \caption{Steady state and evolution of the functional of the aggregation equation for $a=4$, $b=2$.}
    \label{fig:aggregation_equation_stationaries}
\end{figure}

On Figure \ref{fig:functionals_drift}, we show the evolution of the functionals along different learned flows. We observe that for the discretized grid, we obtain the worse results which is understandable since we have discretization error. The particle is the most stable as particle's position do not move anymore once the stationary state is reached. For the FCNN, we observe oscillations which are due to the fact that we take independent samples at each time t. Finally, the JKO-ICNN scheme seems to converge toward the same value as the SW-JKO scheme with FCNNs.

\begin{figure}[t]
    \centering
    \hspace*{\fill}
    \subfloat{\label{a_functionals_drift}\includegraphics[width=0.45\columnwidth]{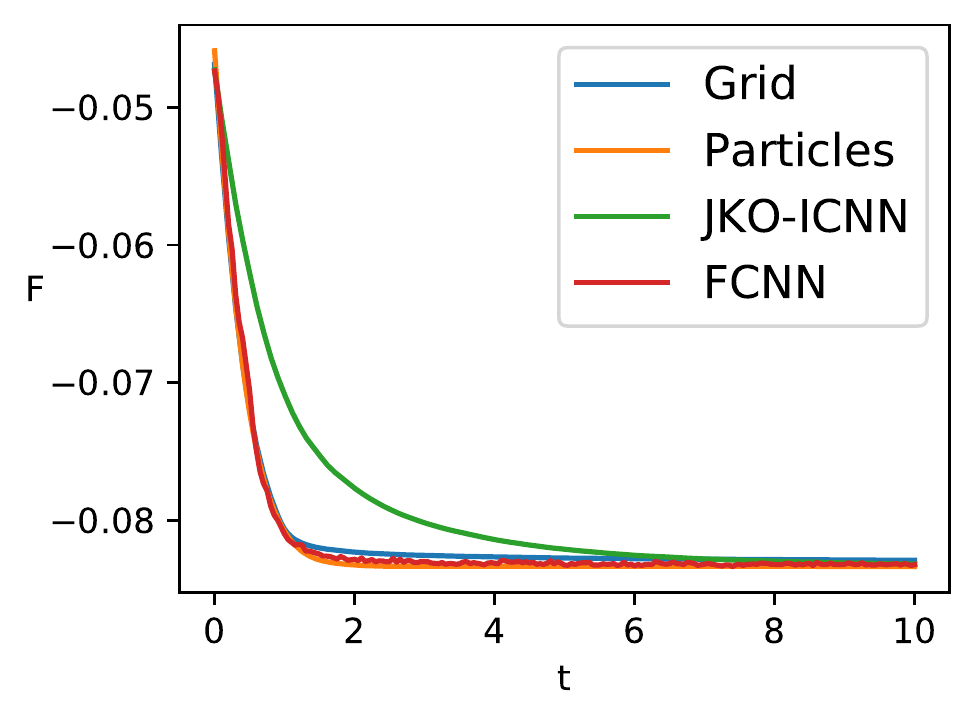}} \hfill
    \subfloat{\label{b_functionals_drift_zoom}\includegraphics[width=0.45\columnwidth]{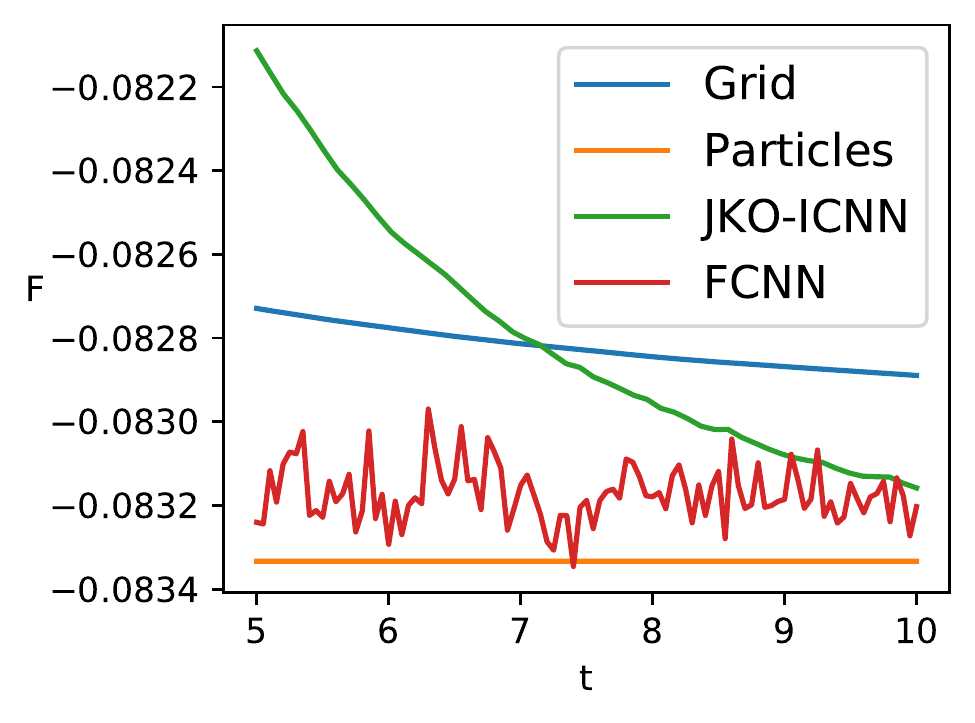}}
    \hspace*{\fill}
    \caption{Evolution of the aggregation functional along different flows.}
    \label{fig:functionals_drift}
\end{figure}

We also tried to use normalizing flows with this functional. We observed that the training seems harder than with the FCNN. Indeed, with simple flows such as RealNVP, the model has a lot of troubles of learning the Dirac ring, probably because of the hole. More generally, since normalizing flows are bijective transformations, they must preserve topological properties, and therefore do not perform well when the standard distribution and the target distribution do not share the same topology (\emph{e.g.} do not have the same number of connected components or "holes" as it is explained in \citep{cornish2020relaxing}). For CPF, it worked slightly better, but was not able to fully recover the ring (at least at time $t=5$) as we can see on Figure \ref{fig:aggregation_cpf}. Morover, the training time for CPF was really huge compared to the FCNN. 

% For the JKO-ICNN, it is due to the long diffusion required (we took as in \cite{carrillo2021primal} $\tau=0.1$ and ended the diffusion at $t=4$ instead of $10$ because of the computation time) as it was observed in \cite{mokrov2021largescale}. Indeed, for $k$ outer iterations, JKO-ICNN requires to evaluate $O(k^2)$ times the ICNN, which can be prohibitive for long diffusions. We must also note that our implementation of JKO-ICNN may suffer from some stability issues, however the disadvantage with respect to the training time should remain. 

\begin{figure}[t]
    \centering
    \hspace*{\fill}
    \subfloat[Density learned by SWGF with CPF at time $t=5$]{\label{a_cpf}\includegraphics[scale=0.5]{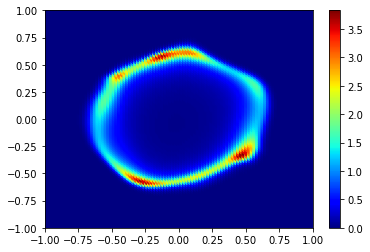}} \hfill
    \subfloat[Density learned by JKO-ICNN at time $t=4$]{\label{b_JKO-ICNN}\includegraphics[scale=0.5]{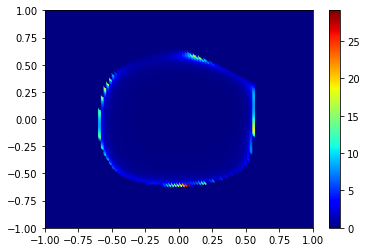}}
    \hspace*{\fill}
    \caption{Density learned for the aggregation equation for CPF and JKO-ICNN.}
    \label{fig:aggregation_cpf}
\end{figure}

\paragraph{Other functionals.}

As in section 3 of \cite{carrillo2021primal}, we also tried to use
\begin{equation}
    W(x) = \frac{\|x\|^2}{2}-\log(\|x\|)
\end{equation}
as interaction potential with a FCNN. We find well that the steady state is an indicator function on the centered disk of radius $1$ (Figure \ref{fig:disk}).

% \iffalse
% \begin{figure}[htpb]
%     \centering
%     \hspace*{\fill}
%     \subfloat[Density of the steady state]{\label{a_density_disk}\includegraphics[scale=0.35]{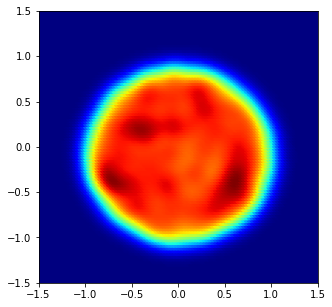}} \hfill
%     \subfloat[Evolution of the functional over time]{\label{b_functional_disk}\includegraphics[scale=0.4]{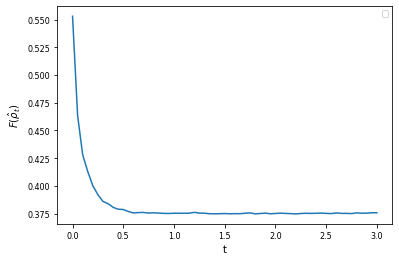}}
%     \hspace*{\fill}
%     \caption{Evolution of the functional}
% \end{figure}
% \fi

\begin{figure}[t]
    \centering
    \hspace*{\fill}
    \subfloat[Density of the steady state]{\label{a_density_disk}\includegraphics[scale=0.2]{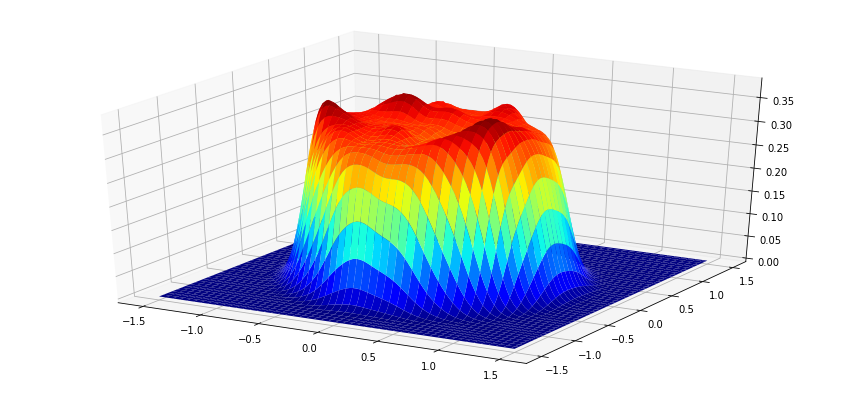}} \hfill
    \subfloat[Density of the steady state]{\label{b_density_disk}\includegraphics[scale=0.27]{Figures/Aggregation/aggregation_disk.png}} \hfill
    \subfloat[Evolution of the functional over time]{\label{c_functional_disk}\includegraphics[scale=0.3]{Figures/Aggregation/functional_aggregation_disk.png}}
    \hspace*{\fill}
    \caption{Steady state and evolution of the functional for $W(x)=\frac{\|x\|^2}{2}-\log(\|x\|)$.}
    \label{fig:disk}
\end{figure}

Another possible functional without using internal energies is to add a drift term
\begin{equation}
    \int V(x)\rho(x)\mathrm{d}x.
\end{equation}
Then, the Wasserstein gradient flow is solution to
\begin{equation}
    \partial_t\rho_t = \mathrm{div}(\rho\nabla(W*\rho)) + \mathrm{div}(\rho\nabla V).
\end{equation}
\citet{carrillo2021primal} use $W(x)=\frac{\|x\|^2}{2}-\log(\|x\|)$ and $V(x)=-\frac{\alpha}{\beta}\log(\|x\|)$ with $\alpha=1$ and $\beta=4$. Then, it can be shown (see \cite{carrillo2021primal, chen2014minimal, carrillo2015finite}) that the steady state is an indicator function on a torus of inner radius $R_{\mathrm{i}}=\sqrt{\frac{\alpha}{\beta}}$ and outer radius $R_{\mathrm{o}}=\sqrt{\frac{\alpha}{\beta}+1}$ which we observe on Figure \ref{fig:density_aggregation_drift_equation_mlp} and Figure \ref{fig:functional_aggregation_drift_equation_mlp}.

\begin{figure}[t]
    \centering
    \includegraphics[scale=0.5]{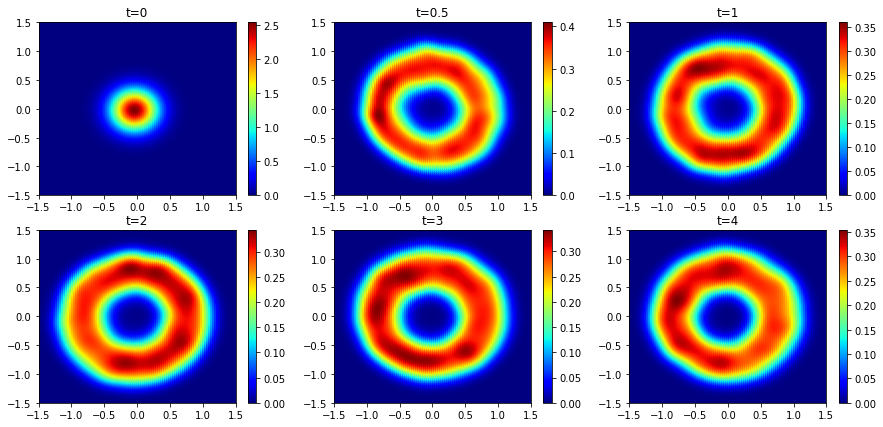}
    \caption{Density over time of the solution of the aggregation-drift equation approximated with a FCNN.}
    \label{fig:density_aggregation_drift_equation_mlp}
\end{figure}

\begin{figure}[t]
    \centering
    \hspace*{\fill}
    \subfloat[Density of the steady state]{\label{a_density_aggreg_drift}\includegraphics[scale=0.2]{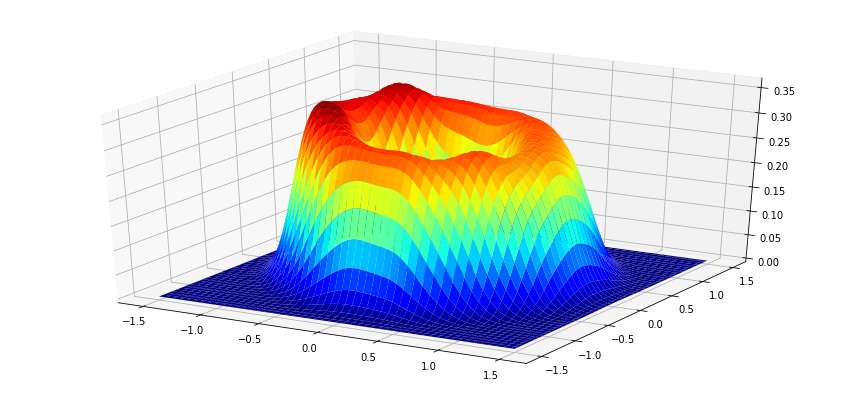}} \hfill
    \subfloat[Density of the steady state]{\label{b_density_aggreg_drift}\includegraphics[scale=0.27]{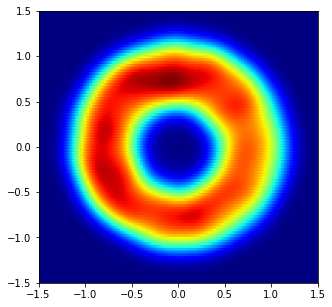}} \hfill
    \subfloat[Evolution of the functional over time]{\label{c_functional_drift}\includegraphics[scale=0.3]{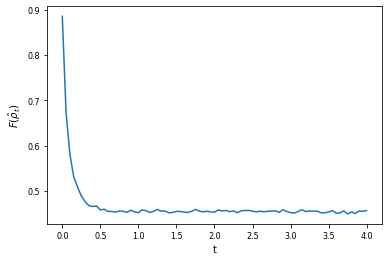}}
    \hspace*{\fill}
    \caption{Steady state and evolution of the functional for the aggregation-drift equation.}
    \label{fig:functional_aggregation_drift_equation_mlp}
\end{figure}

For this last experiment, we observed some unstabily issues in the training phase. It may be due to non-locality of the interaction potential $W$ as it is stated in \citep{carrillo2021primal}.

\subsection{Sliced-Wasserstein flows} \label{appendix_SWF}

In this experiment, we aim at minimizing the following functional:
\begin{equation}
    \mathcal{F}(\mu)=\frac12 SW_2^2(\mu,\nu)+\lambda\mathcal{H}(\mu)
\end{equation}
where $\nu$ is some target distribution from which we have access to samples. In this section, we use MNIST \citep{lecun-mnisthandwrittendigit-2010}, FashionMNIST \citep{xiao2017fashion} and CelebA \citep{liu2015deep}.

We report the Fréchet Inception Distance (FID) \citep{heusel2017gans} for MNIST and FashionMNIST between $10^4$ test samples and $10^4$ generated samples. As they are gray images, we duplicate the gray levels into 3 channels. Moreover, we use the code of \citet{dai2021sliced} (available at \url{https://github.com/biweidai/SINF}) and reported their result for SWF (in the ambient space) and SWAE in Table \ref{tab:table_FID}. For SWF in the latent space, we used our own implementation.

\paragraph{With a pretrained autoencoder.}

First, we optimize the functional in the latent space of a pretrained autoencoder (AE). We choose the same AE as \citet{liutkus2019sliced} which is available at \url{https://github.com/aliutkus/swf/blob/master/code/networks/autoencoder.py}. We report results for a latent space of dimension $d=48$.

In this latent space, we applied a FCNN and a RealNVP. In either case, we used $n_\theta=10^3$ projections to approximate $SW_2$. We chose a batch size of 128 samples. The FCNN was chosen with 5 hidden layers of 512 units with leaky relu activation function. The RealNVP was composed of 5 coupling layers, with FCNN as scaling and shifting networks (with 5 layers and 100 hidden units). We trained the networks at each inner optimization step with a learning rate of $5\cdot 10^{-3}$ for RealNVP (and $10^{-2}$ for the first iteration) and of $10^{-3}$ for the FCNN (and $5\cdot 10^{-3}$ for the first itration)  during 1000 epochs. 

Following \citet{liutkus2019sliced}, we choose $\tau=0.5$ and $\lambda=0$. We run it for 10 outer iterations.

We report in Figure \ref{fig:mnist_AE_MLP} the result obtained with FCNN on MNIST and FashionMNIST. Overall, the results seem quite comparable and our method seems to perform well compared to other methods applied in latent spaces.

\begin{figure}[t]
    \centering
    \hspace*{\fill}
    \subfloat[FID=19.3]{\label{a_MNIST}\includegraphics[scale=0.4]{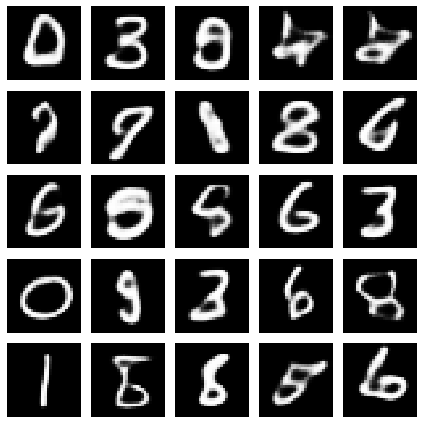}} \hfill
    \subfloat[FID=41.7]{\label{b_FashionMNIST}\includegraphics[scale=0.4]{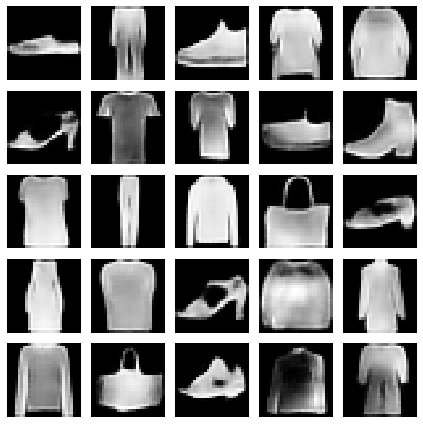}}
    \hspace*{\fill}
    \caption{Generated sample obtained through a pretrained decoder + FCNN.}
    \label{fig:mnist_AE_MLP}
\end{figure}

On Figure \ref{fig:mnist_AE_RealNVP2}, we report samples obtained with RealNVP on MNIST, FashionMNIST and CelebA. For CelebA, we choose the same autoecoder with a latent space of dimension 48 and we ran the SW-JKO scheme for 20 steps with $\tau=0.1$.

\begin{figure}[t]
    \centering
    \hspace*{\fill}
    \subfloat[FID=17.8]{\label{a_MNIST}\includegraphics[scale=0.33]{Figures/SWF/SWF_MNIST_AE.png}} \hfill
    \subfloat[FID=40.6]{\label{b_FashionMNIST}\includegraphics[scale=0.33]{Figures/SWF/SWF_FashionMNIST_AE.png}}
    \hfill
    \subfloat[FID=90.8]{\label{c_CelebA}\includegraphics[scale=0.33]{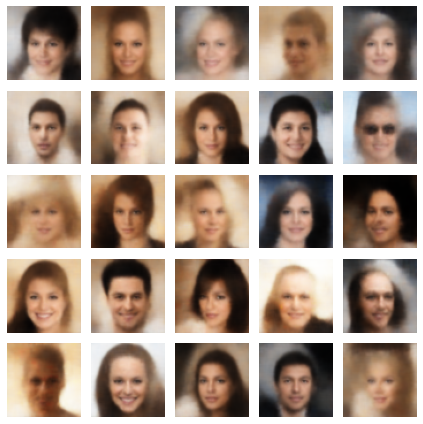}}
    \hspace*{\fill}
    \caption{Generated sample obtained through a pretrained decoder + RealNVP.}
    \label{fig:mnist_AE_RealNVP2}
\end{figure}

We can also optimize directly particles in the latent space and we show it on CelebA on Figure \ref{fig:celebA_particles} for $\tau=0.1$ and for 10 steps.
\iffalse
\begin{figure}[t]
    \centering
    \includegraphics[width={\columnwidth}]{Figures/SWF/SWF_CelebA_AE_particles_iter_t2_h0.1.png}
    \caption{Particle through SW-JKO with $\tau=0.1$ for 20 steps. From left to right, $k=0,2,4,7,10,13,16,20$.}
    \label{fig:celebA_particles}
\end{figure}
\fi

\begin{figure}[t]
    \centering
    \includegraphics[width={\columnwidth}]{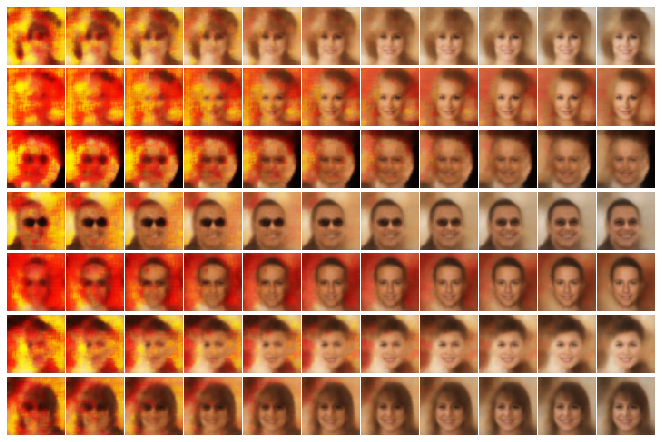}
    \caption{Particle through SW-JKO with $\tau=0.1$ for 10 steps.}
    \label{fig:celebA_particles}
\end{figure}

\paragraph{In the Original Space.}

We report here the results obtained by running the SW-JKO scheme in the original spaces of images, which are very high dimensional ($d=784$ for MNIST and FashionMNIST). We obtained worse results than in the latent space. Notice that we used only $1000$ projections, and ran it for $50$ outer iterations with a step size of $\tau=5$. 

On Figure \ref{fig:mnist_realNVP}, we use a RealNVP and add a uniform dequantization \citep{ho2019flow++} and learned it in the logit space as it is done in \citep{dinh2016density,papamakarios2017masked} because using normalizing flows need continuous data. The RealNVP is composed here of 2 coupling layers with FCNN also composed of 2 layers and 512 hidden units. We choose $\tau=5$, and run it for 20 steps. At each inner optimization problem, the neural networks are optimized with an Adam optimizer with a learning rate of $10^{-2}$ and for 500 epochs.

\begin{figure}[t]
    \centering
    \hspace*{\fill}
    \subfloat[FID=88.1]{\label{a_MNIST_RealNVP}\includegraphics[scale=0.4]{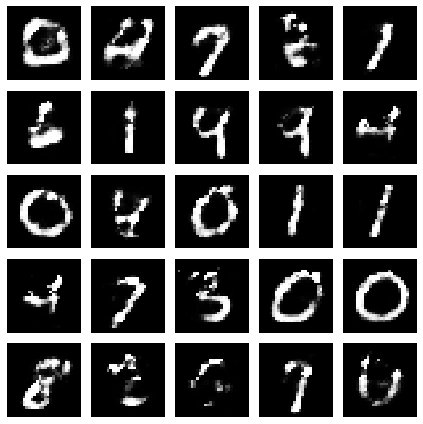}} \hfill
    \subfloat[FID=95.5]{\label{b_FashionMNIST_RealNVP}\includegraphics[scale=0.4]{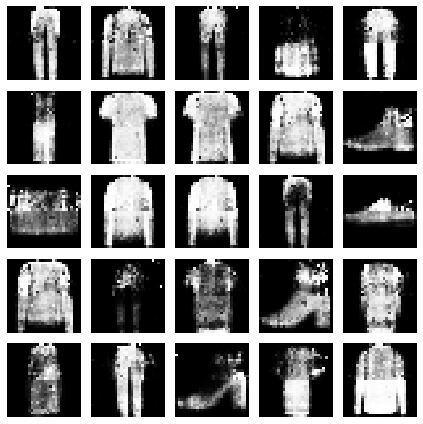}}
    \hspace*{\fill}
    \caption{Generated sample obtained in the original space with RealNVP.}
    \label{fig:mnist_realNVP}
\end{figure}

\begin{figure}[t]
    \centering
    \hspace*{\fill}
    \subfloat[FID=69]{\label{a_MNIST_CNN}\includegraphics[scale=0.4]{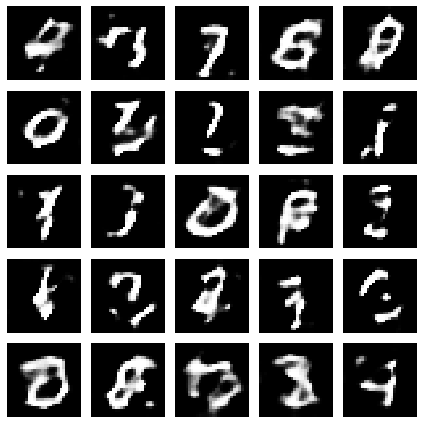}} \hfill
    \subfloat[FID=102]{\label{b_FashionMNIST_CNN}\includegraphics[scale=0.4]{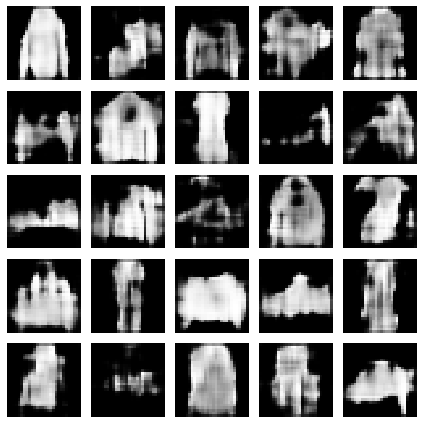}}
    \hspace*{\fill}
    \caption{Generated sample obtained in the original space with CNN.}
    \label{fig:mnist_CNN}
\end{figure}

We also report results obtained using a convolutional neural network (CNN) in Figure \ref{fig:mnist_CNN}. The idea here is that we can capture inductive bias, as it is well known that CNNs are efficient for image-related tasks. We obtained a slightly better FID on MNIST, but worse results on FashionMNIST. In term of quality of image, the generated samples do not seem better.

For the CNN, we choose a latent space of dimension 100, and we first apply a linear layer into a size of $128\times 7\times 7$. Then we apply 3 convolutions layers of (kernel\_size, stride, padding) being respectively $(4,2,1)$, $(4,2,1)$, $(3,1,1)$. All layers are followed by a leaky ReLU activation, and a sigmoid is applied on the output.

Nevertheless, samples obtained in the space of image look better than those obtained through the particle scheme induced in \citep{liutkus2019sliced}.

% \cb{\paragraph{Discussion on the entropy.} We notice that by using $\lambda=1$, the results are pretty bad, as if the model only learned noise. This is probably due to a scaling issue between the two terms. Putting a very low $\lambda$ seem to improve significantly the results. In particular, the higher dimension we are in, the lower we need to take $\lambda$ to have good results. This seems to be a problem of he curse of dimensionality, as already discussed in \cite{liutkus2019sliced}. Moreover, \citet{liutkus2019sliced} argue that the Monte-Carlo approximation of SW creates noise which can be assimilated to the entropy term as the quantization of the cumulative distribution function and quantile function can be modeled as additive Gaussian noise. However, they handle a particle scheme. Is it still the case for the SW-JKO scheme? Anyway, taking $\lambda=0$ will give us the PDE without the laplacian term (as it is proven in \cite{bonnotte2013unidimensional}) which is still of interest.}

\vfill

% \bibliography{references.bib}
% \bibliographystyle{plainnat}

% \vfill

% \end{document}

% \section{Appendix}
% You may include other additional sections here.

\end{document}